\def\eqref#1{equation~\ref{#1}}
\def\Eqref#1{Equation~\ref{#1}}
\def\1{\bm{1}}
\DeclareMathAlphabet{\mathsfit}{\encodingdefault}{\sfdefault}{m}{sl}
\SetMathAlphabet{\mathsfit}{bold}{\encodingdefault}{\sfdefault}{bx}{n}
\newcommand{\E}{\mathbb{E}}
\newcommand{\R}{\mathbb{R}}
\theoremstyle{plain}
\newtheorem{theorem}{Theorem}[section]
\newtheorem{proposition}[theorem]{Proposition}
\newtheorem{lemma}[theorem]{Lemma}
\newtheorem{corollary}[theorem]{Corollary}
\theoremstyle{definition}
\newtheorem{definition}[theorem]{Definition}
\newtheorem{assumption}[theorem]{Assumption}
\theoremstyle{remark}
\newtheorem{remark}[theorem]{Remark}
\newtheorem{example}{Example}
\renewcommand{\P}{{\mathbb P}}
\renewcommand{\L}{\mathscr{L}}
\newcommand{\tr}{{\rm tr}}
\newcommand{\norm}[1]{\left\|#1\right\|}
\newcommand{\opnorm}[2][H]{\left\|#2\right\|_{\mathcal{L}(H, #1)}}
\newcommand{\C}{\mathcal C}
\newcommand{\N}{\mathcal N}
\newcommand{\cl}{\text{cl}}
\newcommand{\low}[1]{{\lfloor #1 \rfloor}}
\newcommand{\dd}{{\mathrm d}}
\newcommand{\fabian}[1]{{#1}}%{{\color{blue} #1}}
\definecolor{darkgreen}{rgb}{0,0.5,0}
\newcommand{\lam}[1]{{ #1}}
\newcommand{\duclam}[1]{#1}%{{\color{darkgreen} #1}}
\title{An Unconditional Representation of the Conditional Score in Infinite-Dimensional Linear Inverse Problems}
\author{\name Fabian Schneider \email fabian.schneider@lut.fi \\
      \addr School of Engineering Science\\
      Lappeenranta-Lahti University of Technology \\%\\Finland
      Vienna University of Technology (TU Wien)
      \AND
      \name Duc-Lam Duong \email duc-lam.duong@lut.fi \\
      \addr School of Engineering Science\\
      Lappeenranta-Lahti University of Technology %\\Finland
      \AND
      \name Matti Lassas \email matti.lassas@helsinki.fi\\
      \addr Department of Mathematics and Statistics\\
      University of Helsinki %\\Finland
      \AND 
      \name Maarten V. de Hoop \email mdehoop@rice.edu \\
      \addr Department of Computational and Applied Mathematics\\
      Rice University %\\United States 
      \AND
      \name Tapio Helin \email tapio.helin@lut.fi \\
      \addr School of Engineering Science\\
      Lappeenranta-Lahti University of Technology %\\Finland
      }
\begin{document}
\def\month{11}  % Insert correct month for camera-ready version
\def\year{2025} % Insert correct year for camera-ready version
\def\openreview{\url{https://openreview.net/forum?id=rO8erhXHPo}} % Insert correct link to OpenReview for camera-ready version

\maketitle

\begin{abstract}
Score-based diffusion models (SDMs) have emerged as a powerful tool for sampling from the posterior distribution in Bayesian inverse problems. However, existing methods often require multiple evaluations of the forward mapping to generate a single sample, resulting in significant computational costs for large-scale inverse problems. To address this, we propose an unconditional representation of the conditional score function (UCoS) tailored to linear inverse problems, which avoids forward model evaluations during sampling by shifting computational effort to an offline training phase. In this phase, a \emph{task-dependent} score function is learned based on the linear forward operator. Crucially, we show that the conditional score can be derived \emph{exactly} from a trained (unconditional) score using affine transformations, eliminating the need for conditional score approximations. Our approach is formulated in infinite-dimensional function spaces, making it inherently discretization-invariant. We support this formulation with a rigorous convergence analysis that justifies UCoS beyond any specific discretization. Finally we validate UCoS through high-dimensional computed tomography (CT) and image deblurring experiments, demonstrating both scalability and accuracy.
\end{abstract}

\section{Introduction}
%\duclam{{23/5/25:} Lam editing and testing the TMLR \LaTeX\ format}\\

Inverse problems seek to determine unknown quantities through indirect and noisy measurements, typically leading to ill-posed scenarios.
The Bayesian approach to inverse problems frames the task as a quest for information. Blending statistical prior information of the unknown with a likelihood model for the measurement data gives rise to a posterior distribution, which fully characterizes the unknown conditioned on noisy data \cite{kaipio2006statistical, stuart2010inverse}. In severely ill-posed problems, the quality of inference is strongly dependent on the expressivity of the prior. Traditional hand-crafted priors, such as the total-variation prior, tend not to be expressive enough to characterize complicated structures \cite{sun2023provable}.
Generative models offer a flexible and computationally feasible approach to prior modeling as they offer the possibility of generating new samples after training on a data set characterizing the prior.

This work investigates sampling from the posterior distribution of linear inverse problems using score-based diffusion models (SDMs) \cite{song2021scorebasedDiffusion}, which have recently received wide attention in the literature (in the context of inverse problems, see e.g. \citet{batzolis2021conditional, lim2023score, hagemann2023multilevel, graikos2023diffusionplugandplay, feng2023scorebased, sun2023provable, pidstrigach2023infinitedimensional, Dey_2024, holzschuh2023solving, dou2023diffusion, barbano2023steerable, cardoso2023monte, feng2023efficient, song2024solving, meng2022diffusion, kveton2024online, wu2024practical, wu2024principled, baldassari2024taming, yao2025guideddiffusionsamplingfunction, chen2025solvinginverseproblemsdiffusionbased}). 
An SDM consists of two main components: a forward diffusion process and a reverse generative process. In the forward diffusion, the model gradually transforms the target distribution into a simpler, tractable distribution with a specific stochastic differential equation (SDE) driven by Gaussian noise. The generative process simulates the time-reversal of the diffusion process with a backwards SDE. This denoising phase relies on the drift, which is computed using the logarithmic gradients (scores) of the diffused data densities. These scores are typically estimated using a neural network, allowing efficient simulation of the backwards SDE and, consequently, sample generation from the target distribution. SDMs have demonstrated significant success across a variety of domains, including inverse problems such as medical imaging \cite{Dey_2024, levac2023mri, barbano2023steerable, chung2023solving, song2022solving}.

For inverse problems, posterior sampling with SDMs involves the challenging task of estimating the score function conditioned on the measurements. A key difficulty is balancing computational efficiency with scalability while ensuring accurate posterior sampling. Motivated by the limitations of existing approaches, we propose a novel method for conditional sampling, based on an \emph{unconditional task-dependent representation of the conditional score (UCoS)} that overcomes these challenges by leveraging an explicit likelihood model, which is often available and predefined in inverse problems. UCoS is particularly well-suited for large-scale imaging tasks that require efficient posterior sampling for varying sets of measurement data.

The existing posterior sampling methods can be broadly categorized into three approaches. The \emph{first} approach modifies the unconditional reverse diffusion process to guide sample trajectories towards the posterior, either by adding a correction term to the score \cite{jalal2021robustannealedlangevian, chung2023diffusion, song2023pseudoinverseguided, Adam2022Posteriorsamples, CHUNG2022102479, levac2023mri} or incorporating a data consistency optimization step \cite{song2022solving, Dey_2024, Chung2022ComeCloser}.
The \emph{second} approach employs gradient-based Monte Carlo sampling techniques, replacing the prior score with a learned score \cite{cardoso2023monte, sun2023provable, feng2023scorebased}, with recent extensions to infinite dimensions \cite{baldassari2024taming}. \duclam{For a recent review of methods largely following these two approaches (in finite dimensions), we refer to \citet{daras2024survey}.} A key advantage of these approaches is that they allow the use of a pre-trained unconditional prior score, accommodating different imaging tasks simultaneously without requiring task-specific training. This is particularly useful for handling extensive multipurpose training sets in applications such as image processing. However, such methods suffer from (1) high computational costs due to repeated forward evaluations, (2) poor scalability in high-dimensional or large-scale inverse problems, and (3) potential inconsistency in posterior samples due to the lack of rigorous convergence guarantees.

The \emph{third} approach directly trains a \emph{conditional} score function to perform conditional sampling, learning an amortized version of the conditional score that depends on observations \cite{baldassari2024conditional, batzolis2021conditional}. This method extends rigorously to infinite-dimensional diffusion models, making it attractive for large-scale inverse problems, as it eliminates the need to evaluate the forward map during sampling. However, it requires extensive training data on the joint distribution to reliably approximate the conditional score function, particularly as the problem dimensionality increases.

%Each of these three approaches has notable drawbacks. The first two \emph{task-independent} approaches suffer from (1) high computational costs due to repeated forward evaluations, (2) poor scalability in high-dimensional or large-scale inverse problems, and (3) potential inconsistency in posterior samples due to the lack of rigorous guarantees. While the third approach has theoretical guarantees, 

We address the prohibitive sampling cost for large-scale inverse problems by demonstrating that this computational overhead can be offloaded to the training phase, thereby\lam{improving the efficiency of posterior sampling}. This is achieved by designing a \emph{task-dependent} training phase that depends on the forward mapping but \emph{not} on the measurement data. More precisely, we introduce a diffusion-like random process whose distribution is explicitly dependent on the forward mapping, allowing the score (which we term the \emph{task-dependent score}) to be estimated using standard methods. We then demonstrate that the conditional score corresponding to the posterior distribution can be recovered from this task-dependent \emph{unconditional} score through simple affine transformations involving the measurement data. Furthermore, we modify the training procedure to enable evaluation of the conditional score without requiring application of the forward mapping during sampling.  In comparison to the conditional method, \lam{integrating the explicit likelihood model into the process reduces the dimensionality of the training target. As a result, UCoS offers the best of both worlds: \fabian{like the conditional approach,} it avoids forward mapping evaluations during posterior sampling while its score model remains as lightweight as standard unconditional ones. This leads to more favorable scaling of the online sampling cost as the size or dimensionality of the problem grows. }

Inverse problems are often about inferring quantities represented by functions such as solutions or parameters to PDEs \fabian{or integral equations. These models must be arbitrarily discretized to obtain a finite-dimensional computational model \cite{Mueller2012Linearandnonlinear}}. In Bayesian inversion, there has been a long-standing effort to develop methods that are discretization-independent \cite{lehtinen1989linear, lassas2004can, lassas2009}, aligned with the principle to ``avoid discretization until the last possible moment" \cite{stuart2010inverse}. This is also critical for the success of SDMs in Bayesian inversion as recent theoretical studies indicate that the performance guarantees do not always generalize well on increasing dimension \cite{chen2022sampling, de2022convergence, pidstrigach2023infinitedimensional}. Inspired by recent developments on defining the score-based diffusion framework in infinite-dimensional spaces \cite{pidstrigach2023infinitedimensional, baldassari2024conditional}, we also extend our method rigorously to a separable Hilbert space setting. In particular, in the spirit of \citet{pidstrigach2023infinitedimensional}, we perform a convergence analysis of UCoS, establishing a bound between the samples generated by UCoS and the true posterior target measure. Moreover, we conduct numerical experiments of inverse problems related to computerized tomography (CT) \fabian{and deblurring} to illustrate the practical applicability of UCoS.

\subsection{Related work}

The body of literature on SDMs is growing rapidly. Let us mention that \citet{song2021scorebasedDiffusion} developed a unified framework combining score-based \cite{hyvarinen2005estimation, song2019generative} and diffusion \cite{sohl2015deep, ho2020denoising} models to interpret SDMs as a time-reversal of certain SDEs. 
Our paper is inspired by the work on conditional SDMs in the context of inverse problems.
One line of work seeks to modify the unconditional prior score function to generate samples that approximately follow the posterior distribution. Examples of this include projection-based approaches \cite{song2022solving, Dey_2024}, which project the samples on a subspace that solves the inverse problem during the generating process;
methods based on the gradient of a log-likelihood \cite{jalal2021robustannealedlangevian, chung2023diffusion, levac2023mri} and plug and play approaches \cite{graikos2023diffusionplugandplay} which add appropriately chosen constraint-terms to the unconditional score function to steer the process towards generating desirable samples; \duclam{and the Doob’s $h$-transform approach \cite{denker2024deft} which learns a small, auxiliary correction to the unconditional score function.} \duclam{A closely related approach is the Monte Carlo guided diffusion framework (MCGDiff) \cite{cardoso2023monte}, which leverages posterior samples from a linear Bayesian inverse problem obtained via sequential Monte Carlo methods using the singular value decomposition (SVD) to guide the training of a diffusion model. The theoretical foundation of MCGDiff however builds upon the availability of the full SVD of the forward operator, which restricts its applicability to mildly ill-posed problems and moderate-dimensional settings where the SVD can be computed and inverted stably. In contrast, UCoS is designed to operate in a fully \emph{matrix-free} manner, requiring access only to the forward map and its adjoint.}
%While these approaches are convenient as the unconditional score function can be learned independent of the inverse problems, they usually do not have mathematical error bounds as the approximation errors are hard to quantify. Moreover, the information about the forward map has to be incorporated during the sampling process inducing potentially large computational overhead if the forward mapping is expensive to evaluate.

Another line of work seeks to approximate the conditional score function of the posterior distribution directly \cite{saharia2021image,batzolis2021conditional,baldassari2024conditional}. As this approach increases the input dimension of the score function drastically, the training process is more computationally expensive and requires more high quality training samples, which may be restrictive, especially in very high-dimensional problems. 
%In relation to this line of work, we demonstrate that in the context of linear inverse problems the dependence of the conditional score to the measurement data can be simplified and training is not needed in full product space, reducing the computational cost dramatically.
%A second possible approach to avoid the increase in dimensionality is to approximate the conditional score function only after a measurement is made. This is however not suitable for applications in which posterior samples should be generated relatively fast and autonomous after a measurement is made \cite{}.

The theory of infinite-dimensional SDMs has been initiated only very recently. 
\citet{hagemann2023multilevel} modifies the training phase of diffusion models to enable simultaneous training on multiple discretization levels of functions and proves the consistency of their method. \citet{lim2023score} also generalizes the trained model over multiple discretization levels proposing to generate samples with the annealed Langevin algorithm in infinite dimensions. \citet{pidstrigach2023infinitedimensional} was the first to formulate the SDM directly in infinite-dimensional space, demonstrating that the formulation is well-posed and providing theoretical guarantees. Our work is closely connected to \citet{baldassari2024conditional}, where the authors introduce the conditional score in an infinite-dimensional setting. Moreover, they provide a set of conditions to be satisfied to ensure robustness of the generative method and prove that the conditional score can be estimated via a conditional denoising score matching objective in infinite dimensions.

\subsection{Main contribution}
\duclam{This work presents a novel and scalable framework for posterior sampling in high-dimensional and infinite-dimensional inverse problems. The key insight is that posterior samples generated via conditional SDMs can be computed without any evaluation of the forward operator during sampling.}
Instead, without introducing any errors, the computational effort can be shifted to the offline task of training the \emph{unconditional} score of a specific diffusion-like random process. This foundational principle generalizes to other infinite-dimensional diffusion models, beyond the Ornstein-Uhlenbeck process studied here.
More precisely, 
\begin{itemize}
    \item In Theorem \ref{thm:score_id1 infinite dimension} we establish an identity for the conditional score connecting it to a task-dependent unconditional score through affine transformations depending on the forward mapping and the measurement data. The theorem extends this general principle to an infinite-dimensional setting.
    \item In Theorem \ref{thm: stability estimate} we derive an error estimate for the generative process following the ideas suggested by  \citet{pidstrigach2023infinitedimensional}.
    In particular, the error estimate explicitly underscores the contribution of the loss function employed in training.
\end{itemize}
We numerically explore our method in Section \ref{sec: Numerics}, demonstrating that task-dependent training is effective in practice and that online posterior sampling can be performed without evaluations of the forward operator.

%%%% Background section %%%%%

\section{Background}
%\duclam{Lam to write this section}\\

\subsection{Score-based diffusion models in infinite dimensions}
\label{subsec:score-diffusion_in_infdim}
\paragraph{Finite-dimensional score-based diffusion models}
Score-based diffusion models (SDMs) are state-of-the-art machine learning generative models (\cite{song2021scorebasedDiffusion}) that learn a data distribution through gradual denoising of a normally distributed random variable. A diffusion process diffuses an image $X_0\sim p_0$ from $t=0$ to $t=T$ via the SDE (Ornstein-Uhlenbeck process, or OU)
\begin{equation*}
    \dd X_t = -\frac{1}{2}X_t \dd t + \dd W_t,
\end{equation*}
where the marginal densities are denoted by $p_t$, in particular, for large $T$, $p_T$ is close to the  Gaussian density $\mathcal{N}(0,I)$. The marginal densities are then reversed using the backward SDE
\begin{equation*}
    \dd Y_t = \frac{1}{2}Y_t \dd t + \nabla_x\log p_{T-t}(Y_t)\dd t + \dd W_t,
\end{equation*}
such that $Y_T \sim p_0$.

\paragraph{Infinite-dimensional score-based diffusion models}
Let us now review the unconditional SDMs in infinite dimensions proposed by \citet{pidstrigach2023infinitedimensional}. Let $\mu$ be the target distribution, supported on a separable Hilbert space $(H, \langle \cdot, \cdot \rangle_H)$.
Let $\{X^\mu_t\}_{t=0}^T$ stand for the infinite-dimensional diffusion process for a continuous time variable $t\in [0,T]$ satisfying the following infinite-dimensional SDE
\begin{eqnarray}\label{eq: diffusion model; initiated from posterior} 
    dX^{\mu}_t = -\frac{1}{2} X^{\mu}_t dt +  \C^{1/2} dB_t,
    \end{eqnarray}
where $\C$ is a fixed trace class, positive-definite, symmetric covariance operator $\C: H\to H$ and $B_t$ is a Wiener process on $H$ with covariance $tI$,
see Appendix \ref{subsec: Wiener process definition}. 
We assume that the process  $X^{\mu}_t$ is initialized with $\mu$, i.e., $X^\mu \sim \mu$. Notice carefully that we embed the initial condition $\mu$ to the notation $X^\mu_t$, $t\geq 0$, as we later analyze the interplay between different initializations. Here and in what follows, we assume that the initial conditions and the driving process are independent. 
We adopt the formal definition of the score functions by \citet{pidstrigach2023infinitedimensional}:

\begin{definition}\label{def: definition of OU Score function in infinite dimension}
    Define the (weighted, unconditional) score function $s(x,t; \mu)$ for $x \in H$ as
    \begin{eqnarray}\label{eq: definition of OU Score function in infinite dimension}
       s(x, t; \mu) = -\frac{1}{1-e^{-t}}\left(x - e^{-t/2}\E(X^{\mu}_0 |X^{\mu}_t = x)\right).
    \end{eqnarray}
\end{definition}

%[Score for the  process in finite dimension]
\begin{remark}
    Assume that $H = \R^n$ and that the distribution $\mu$ admits a density $p_0$ with respect to the Lebesgue measure.  For any $t \ge 0$ the random variable $X^{\mu}_t$ obtained through SDE in \Eqref{eq: diffusion model; initiated from posterior} has a density $p_t$. The unconditional score $s(x,t; \mu)$ given in \Eqref{eq: definition of OU Score function in infinite dimension} then satisfies $s(x, t; \mu) = \C \nabla_x \log p_t(x)$.
    % \begin{eqnarray*}
    %     s(x, t; \mu) = \C \nabla_x \log p_t(x).
    % \end{eqnarray*}
    In other words, in finite dimensions, Definition \ref{def: definition of OU Score function in infinite dimension} reduces to the common score identified as the log-gradient of the density, scaled by the covariance matrix $\C \in \R^{n\times n}$.
\end{remark}
Now assume that for $T> 0$, 
\begin{equation} \label{eq:expected_square_norm_of_score_is_finite}
    \sup_{t \in [0, T]} \E \norm{s(X_t^\mu, t; \mu)}_H^2 < \infty,
\end{equation}
It is shown in \citet{pidstrigach2023infinitedimensional} that   
given $Y_T^{\mu} \sim \L(X_T^{\mu})$ the following SDE
    \begin{eqnarray}\label{eq: BSDE infdim}
    dY^{\mu}_t = \left(-\frac{1}{2} Y^{\mu}_t- s(Y^{\mu}_t, t; \mu)\right) dt + \C^{1/2} d\bar B_t
\end{eqnarray}
is the time-reversal of \Eqref{eq: diffusion model; initiated from posterior},
where $\bar B_t$ is a different Wiener process on $H$.
The process $Y^{\mu}_t$ is independent of the past increments of $\bar B_t$, but not of the future ones, see \citet{ANDERSON1982-BSDE}.

\paragraph{Training and sampling}
In both finite- and infinite-dimensional diffusion models, the score function is learned by a neural network $s_\theta(\cdot,\cdot;\mu): H \times [0,t] \to H$ such that $s_\theta(x,t; \mu) \approx s(x,t;\mu)$ (where in finite dimension $H=\R^n$ we interpret $s(x,t;\mu)$ as $ \nabla\log p(x)$). 
A common technique to enable empirical learning of the score function is conditional denoising score matching, introduced in \cite{Vincent2011ACB-Loss-function}. This method has been shown to work in the conditional \cite{batzolis2021conditional} and infinite-dimensional setting \cite{baldassari2024conditional}  for the forward SDE given by the OU process. 
Training is performed via stochastic gradient descent by minimizing a score-matching objective (loss function),
\begin{equation}
    \mathrm{SM}(s_\theta) = \E_{t \sim U[\delta, T]} \E_{x \sim \L(X_t^\mu)} \lambda(t)^2\|s_\theta(x,t; \mu) - s(x,t;\mu)\|^2_H,
\end{equation}
over some appropriate class of neural network. Here $\lambda: [\delta,T] \to \R^+$ is a positive weighted function and $0 \le \delta < T$ a truncation that avoids numerical instability for small times. 
In general, the score function $s$ is intractable, such that a denoising score-matching objective is introduced,
\begin{equation*}
    \mathrm{DSM}(s_\theta) = \E_{t \sim U[\delta, T]} \E_{(x, x_0) \sim \L(X_t^\mu, X_0^\mu)} \lambda(t)^2\|s_\theta(x,t; \mu) - (1-e^{-t})^{-\frac{1}{2}}(x - e^{-\frac{t}{2}}x_0)\|^2_H.
\end{equation*}
It can be shown, that the two objectives equal up to a constant that does not depend on the parameter $\theta$. 
After training a score $s_\theta(\cdot,t)$ for all time $t\in [0,T]$, we can then generate samples from $\mu$ by running the backward SDE \Eqref{eq: BSDE infdim} with the trained score $s_\theta$ instead of $s$. The solution of the backward SDE can then be numerically solved using traditional methods such as Euler--Maruyama approximations.

\subsection{Bayesian inverse problems and conditional score}

In this paper, we focus on the setting, where the score corresponding to the reverse drift is conditioned on observations.
We consider a linear  inverse problem
\begin{equation}
	\label{eq:ip}
	y = Ax + \epsilon,
\end{equation}
where $x\in H$ is the unknown in some separable Hilbert space $H$, $y\in \R^m$ stands for the measurement and $A: H \rightarrow \R^m$ is a bounded linear operator. The random noise is modeled by a centered Gaussian distribution $\epsilon \sim \N(0, \Gamma)$, $\Gamma\in \R^{m\times m}$.
We adopt a Bayesian approach to inverse problems  \cite{stuart2010inverse} and assume to have some prior knowledge of the distribution of $x$ before any measurement is made. This knowledge is encoded in a given prior $\mu$, defined as a probability measure on $H$. 
This approach gives rise to a posterior that is absolutely continuous with respect to the prior and its Radon--Nikodym derivative is given by, $\mu$-almost everywhere, 
\begin{eqnarray}
\label{eq:posterior}
     \frac{\dd \mu^y}{\dd\mu}(x)  & = & \frac{1}{Z(y)} \exp\left(-\frac 12 \norm{Ax -y}_\Gamma^2\right), \\ 
     Z(y) & = & \int_H \exp\left(-\frac 12 \norm{Ax -y}_\Gamma^2\right) \mu(\dd x).
     \nonumber
\end{eqnarray}
%. The normalising constant $Z$ is given by
%\begin{equation*}
%	Z(y) = \int \exp\left(-\frac 12 \norm{Ax -y}_\Gamma^2\right) \mu(dx)
%\end{equation*}
In this context, \citet{baldassari2024conditional} defines the conditional infinite-dimensional score as follows:
\begin{definition}\label{def:conditional_score_infiniteD}
    Define the (weighted) conditional score function $s(x,t; \mu^y)$ for $x \in H$ as
\begin{equation*}
%\label{eq:conditional_score_infiniteD}
       s(x, t; \mu^y) = \frac{e^t}{1-e^{t}}\left(x - e^{\frac{-t}{2}}\E(X^{\mu}_0 | Y = y, X^{\mu}_t = x)\right).
\end{equation*}
\end{definition}

\begin{remark} \label{rm:cond_score}
Note the dependency of the conditional score on the measurement $y$. In the same spirit of \Eqref{eq:expected_square_norm_of_score_is_finite}, \citet{baldassari2024conditional} assume the uniform boundedness in time of the expected square norm of $s(x,t;\mu^y)$ for all $y\in\R^m$ to prove the well-posedness of the corresponding time-reversal SDE, given by \Eqref{eq: BSDE infdim} with $\mu $ replaced by $ \mu^y$ and $s(x,t;\mu)$ by $s(x,t;\mu^y)$.  %the score replaced by the one in Def. \ref{def:conditional_score_infiniteD}.
\end{remark}

%%%%% Main theory section %%%%

\section{UCoS: theory and implementation}\label{sec:UCoS}

In this section, we establish the theoretical foundation for our method by showing that the conditional score associated with the posterior distribution in the linear inverse problem in \Eqref{eq:ip} can be expressed as an affine transformation of an unconditional score derived from a modified diffusion process. This key observation allows us to shift the complexity of posterior sampling to an offline training phase, where only the prior distribution and forward model are used. 
\subsection{UCoS in finite-dimensional space}\label{sec: finite dimensional UCoS}
To build intuition, we begin by illustrating the idea in the finite-dimensional setting $H=\R^n$ \fabian{and assume $\C = I$ for simplicity.}
Consider the posterior $\mu^y$ in \Eqref{eq:posterior} diffused by \Eqref{eq: diffusion model; initiated from posterior}. Suppose the probability densities of the prior $\mu$ and $X_t^{\mu^y}$ are absolutely continuous w.r.t. Lebesgue measure and denote them by $p_0$ and $q_t(\cdot | y)$, respectively. The transitional probability for the multivariate OU process is given by $\rho_t(x | x_0) = {\mathcal N}(e^{-t/2}x_0, (1-e^{-t})\C)$ and thus
\begin{equation}
    \begin{aligned}
    \label{eq:qt_identity}
    &q_t(x|y) =  \int_{\R^n} q_0(z|y) \rho_t(x | z) dz \\
    & = \frac{1}{Z(y)}\int_{\R^n}  p_0(z) \underbrace{\exp\left(-\frac 12 \norm{Az-y}^2_\Gamma\right) \rho_t(x | z)}_{\text{complete square}}dz \\
    & = \frac{\omega_t(x,y)}{Z(y)}\int_{\R^n}  p_0(z) \exp\left(-\frac 12 \norm{z-m_t(x,y)}_{\Sigma_t}^2\right) dz,
    \end{aligned}
\end{equation}
where we completed the square w.r.t. the product of likelihood and transitional probability $\rho_t$. Here, the quantities $\Sigma_t$, $m_t(x,y)$ and $\omega(x,y)$ satisfy
\begin{eqnarray*}
    \Sigma_t &=& \left( \frac{1}{e^t-1}I + A^\top \Gamma^{-1}A\right)^{-1}, \\
    m_t(x,y) & = & \Sigma_t \left(A^\top \Gamma^{-1} y + \frac{x}{e^{t/2} - e^{-t/2}}  \right) \; \text{and} \\
    \omega_t(x,y) & \propto & \exp\left(-\fabian{\frac{1}{2}}\norm{y-e^{t\fabian{/2}} Ax}_{C_t}^2\right),
\end{eqnarray*}
where $C_t = (e^t - 1) A A^\top + \Gamma\in \R^{m\times m}$. Consequently, we have that 
\begin{equation}
    \begin{aligned}
        \label{eq:finite_dim_score_id}
    \nabla_x \log q_t(x|y) 
    =& -\frac12 \nabla_x \norm{y-e^{t/2} Ax}_{C_t}^2  + \nabla_x \log \left((p_0 \ast \N(0,\Sigma_t))(m_t(x,y))\right) \\
    = & - \frac12\nabla_x \norm{y-e^{t/2} Ax}_{C_t}^2 + (\nabla_x m_t(x,y))^\top  \left[\nabla_x \log \left(p_0 \ast \N(0,\Sigma_t)\right)\right](m_t(x,y)).
    \end{aligned}
\end{equation}
The identity in \Eqref{eq:qt_identity} can be formally interpreted as expressing $q_t$ for a given time %$†$ 
as a mixture of the prior $p_0$ and a Gaussian distribution with time-dependent covariance $\Sigma_t$ modulo transformations with $\omega_t$ and $m_t$. What is more, the conditional score $s(\cdot; \mu^y)$ corresponding to $q_t$ in \Eqref{eq:finite_dim_score_id} can be expressed as a transformation of the score $\tilde s$ of this mixture, $\tilde s(x; t; \mu) :=\Sigma_t\nabla_x \log \left(p_0 \ast \N(0,\Sigma_t)\right)$. This idea enables us to approximate the posterior score offline up to affine transformations. More precisely, we have the following identity:
\begin{equation}\label{eq:finite_dim_score_id 2}
\begin{aligned}
     s(x, t; \mu^y) &= \lambda(t) \left(  \Tilde{s}\left(m_t(x, y\right), t; \mu) + (e^t-1) A^\top C_t^{-1}(y-e^{t/2}Ax)\right) \\
    & =\lambda(t) \left(  \Tilde{s}\left(m_t(x, y\right), t; \mu) + m_t(x, y) - e^{t/2}x \right),
\end{aligned}
\end{equation}
with
\begin{eqnarray*}
   \lambda(t) =  (e^{t/2} - e^{-t/2})^{-1}.
\end{eqnarray*}
%Lemma \ref{lem: writing m_t for no forward} was used to simplify.

\begin{remark}
The principled idea of transforming from task-dependent unconditional score to the conditional score outlined above generalizes to other linear diffusion models beyond the specific Ornstein-Uhlenbeck process \Eqref{eq: diffusion model; initiated from posterior} studied here. This raises the important question of how to design an efficient underlying diffusion model to balance the computational effort further in a desirable way, e.g. by temporal or spectral weighting of the diffusion. This consideration is beyond the scope of this paper.
\end{remark}

The result of \Eqref{eq:finite_dim_score_id 2} motivates a new way to approximate the score function in a task-dependent manner. Notice first that the term $m_t(x,y)$ depends on the forward map $A$ in a non-trivial and time-dependent way. Therefore, even if training of $\tilde s$ can be performed offline and utilized as an approximation in \Eqref{eq:finite_dim_score_id 2}, one would still need to evaluate $A$ during the sample generation. In what follows, we propose an approximation scheme for the conditional score that circumvents this issue.
Notice that the dependence of $m_t$ on $A$ is mostly through multiplication with $\Sigma_t$. Hence we may learn the function $r$ given by
\begin{equation*}
    r(\cdot, t; \mu) := \tilde{s}(\Sigma_t \cdot, t; \mu) + \Sigma_t \cdot
\end{equation*}
instead of $\tilde s$. By substituting $r$ into \Eqref{eq:finite_dim_score_id 2}, we obtain the final identity
\begin{equation*}\label{eq:finite_dim_score_id 3}
    s(x, t; \mu^y) = \lambda(t) \left(  r(\xi_t(x;y)- e^{t/2}x \right)
\end{equation*}
with $\xi$ given by
\begin{equation*}	
	\xi_t(x,y) =  A^\top \Gamma^{-1} y + \lambda(t)x.
\end{equation*}
We can compute and save the term $ A^\top \Gamma^{-1} y$ to memory using a single adjoint evaluation. For each time step $t>0$, we add the time-dependent second term to obtain $\xi_t$ without the need for any further forward evaluations during sampling. 

\begin{remark}
    Assume that the prior $p_0 = \N(0, S_0)$ is Gaussian. Then $r$ is given by
    \begin{equation*}
        r (\zeta, t; \mu) =  \left(\frac{1}{e^t-1}\C^{-1} + A^\top \Gamma^{-1}A + S_0^{-1} \right)^{-1} \C^{-1}\zeta
    \end{equation*}
and is a bounded linear mapping for any $t>0$. In particular, $r :\R^n \times \R \to \R^n$ depends only on a spatial variable $\zeta$ and a temporal variable $t$ but \textbf{not} on the measurement $y$. \fabian{This is in stark contrast to the conditional approach \cite{baldassari2024conditional}, where the conditional score 
\lam{$$s: (x,y,t) \mapsto s(x,t; \mu^y) : \R^n \times \R^m \times \R \to \R^n$$ 
(see Definition \ref{def:conditional_score_infiniteD}) depends on \textbf{both} spatial variables $x$ and $y$. The added dependence on $y$ increases the input dimensionality from $n+1$ to $n+m+1$, which substantially raises the required sample complexity and the size of the neural network required to achieve a given error tolerance.}}
\end{remark}
%In the case of Gaussian prior, the conditional score simplifies to:
%\begin{eqnarray*}
%    s(x, t; \mu^y) = -\left( e^{-t} \left(A^\top \Gamma^{-1} A + S_0^{-1}\right)^{-1} +(1-e^{-t} ) I\right)^{-1} \left(x-e^{-t/2} \left(A^\top \Gamma^{-1} A + S_0^{-1}\right)^{-1} A^\top \Gamma^{-1}y\right).
%\end{eqnarray*}

\subsection{Theoretical foundations of UCoS in function space}
We will now make the transformations above precise in infinite-dimensional separable Hilbert space $H$. First, we define a random process $\{\widetilde{X}^{\mu}_t\}_{t=0}^T$ such that its distribution is given by the mixture model
\begin{equation}\label{eq: adapted forward model}
    \widetilde X^{\mu}_t = \widetilde X^{\mu}_0 + Z_t, \quad \widetilde X^{\mu}_0 \sim \mu \; \text{and} \; Z_t \sim \N(0, \Sigma_t)
\end{equation}
with $\Sigma_t$ given  by
\begin{eqnarray}\label{eq: formal definition of Sigma_t}
   \Sigma_t = (e^t -1)\C - (e^t-1)^2\C A^* C_t^{-1} A \C
\end{eqnarray}
and
\begin{equation}
    \label{eq:Ct}
	C_t = (e^{t}-1) A \C A^* + \Gamma \in \R^{m \times m}.
\end{equation}
\fabian{Notice that $\Sigma_t$ coincides with its formulation in Section \ref{sec: finite dimensional UCoS} in finite-dimensional case, see Lemma \ref{lem: relation between CM spaces of U and Vt}.}
The next lemma ensures that $\Sigma_t$ is a valid covariance operator.
\begin{lemma}\label{lem: Sigma_t is covariance operator}
For $t>0$, the operator $\Sigma_t : H \to H$ is trace class, self-adjoint and positive definite.
\end{lemma}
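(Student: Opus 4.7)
The plan is to verify the three properties (self-adjoint, trace class, positive definite) separately, treating them in order of increasing difficulty. Write $a = e^t - 1 > 0$ for $t>0$, so that $\Sigma_t = a\C - a^2 \C A^* C_t^{-1} A \C$ with $C_t = a A \C A^* + \Gamma$.

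\emph{Self-adjointness} is immediate: $\C$ is self-adjoint by assumption, $\Gamma$ is symmetric positive definite on $\R^m$, and $A \C A^*$ is symmetric on $\R^m$, so $C_t$ and hence $C_t^{-1}$ are symmetric. The second summand in $\Sigma_t$ is therefore of the form $(\C A^*) C_t^{-1} (\C A^*)^*$, which is manifestly self-adjoint on $H$.

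\emph{Trace class} follows because $a\C$ is trace class (as $\C$ is), while $a^2 \C A^* C_t^{-1} A \C$ factors through the finite-dimensional space $\R^m$ and thus has rank at most $m$. Any finite-rank bounded operator is trace class, and trace-class operators form a vector space, so $\Sigma_t$ is trace class.

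\emph{Positive definiteness} is the main obstacle and requires reducing the expression to something whose positivity is transparent. The idea is to rewrite $\Sigma_t$ as a conditional covariance by a Woodbury-type manipulation. Set $u = \C^{1/2} v$ and $B = \Gamma^{-1/2} A \C^{1/2} : H \to \R^m$. Then $A \C A^* = \Gamma^{1/2} B B^* \Gamma^{1/2}$, so $C_t = \Gamma^{1/2}(I_m + a BB^*)\Gamma^{1/2}$, and a direct computation gives
\begin{equation*}
    \langle \Sigma_t v, v\rangle_H = a\norm{u}_H^2 - a^2 \langle (I_m + a B B^*)^{-1} B u, Bu\rangle_{\R^m}.
\end{equation*}
The push-through identity $(I_m + aBB^*)^{-1} B = B (I_H + a B^* B)^{-1}$, which holds because $(I_m + aBB^*)B = B(I_H + aB^*B)$, then yields
\begin{equation*}
    \langle \Sigma_t v, v\rangle_H = a\bigl\langle \bigl(I_H - a B^* B (I_H + a B^* B)^{-1}\bigr) u, u\bigr\rangle_H = a \bigl\langle (I_H + a B^* B)^{-1} u, u\bigr\rangle_H.
\end{equation*}
Since $a B^* B$ is self-adjoint and positive semi-definite, $(I_H + a B^* B)^{-1}$ is self-adjoint with spectrum in $(0,1]$ and hence strictly positive on $H$. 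Thus $\langle \Sigma_t v, v\rangle_H > 0$ whenever $u \neq 0$, and injectivity of $\C^{1/2}$ (which follows from $\C$ being positive definite) gives that $u = 0 \Leftrightarrow v = 0$. The lemma follows.

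The two routine steps are clean; the nontrivial step is the algebraic reduction to $(I_H + aB^*B)^{-1}$, which I expect to be the main work.
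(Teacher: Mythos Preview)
Your proof is correct and follows essentially the same route as the paper. The paper also conjugates by $\C^{1/2}$ and applies a Woodbury-type identity (stated separately as Lemma~\ref{lem: Useful equality that show that Sigma is still p.d.}(a)) to obtain $\Sigma_t = \C^{1/2}\bigl(\tfrac{1}{e^t-1}I + \C^{1/2}A^*\Gamma^{-1}A\C^{1/2}\bigr)^{-1}\C^{1/2}$, which is exactly your $a\,\C^{1/2}(I_H+aB^*B)^{-1}\C^{1/2}$ after the substitution $B=\Gamma^{-1/2}A\C^{1/2}$. The only minor difference is that for the trace-class property the paper bounds $\tr_H(\Sigma_t)\le(e^t-1)\tr_H(\C)$ after establishing positivity, whereas you use the finite-rank structure of the correction term directly; both arguments are immediate.
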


\begin{remark}
While the mixture in \Eqref{eq: adapted forward model} is well-defined for any $t>0$, we observe that the process $\widetilde X_t^{\mu}$ is no longer diffusive in the sense that it may stay dependent of $\widetilde X_0^{\mu}$ as $t$ grows. Indeed, we notice the dual behavior from two cases: 
\begin{itemize}
    \item if $A = I$, we have $\Sigma_t = \C (\C + (e^t-1)^{-1} \Gamma)^{-1} \Gamma$.
    \item for $A=0$ we have $\Sigma_t = (e^t-1)\C$. 
\end{itemize} 
This indicates different asymptotics for the variance of $\widetilde X_t^{\mu}$ depending on the singular values of $A$.
In order to improve intuition on the correlation structure of $\tilde X_t$, we depict the diagonal of $\Sigma_t$ for different values of $t$ and the inverse problems related to inpainting in Figure \ref{fig: Sigmat inpainting} and CT imaging in Figure \ref{fig: Sigmat ct}. A detailed description of these problems can be found in Section \ref{sec: Numerics}. We further note studying an SDE corresponding to $\widetilde X_t^{\mu}$ or its time-reversal is beyond the scope of this paper.
\begin{figure}[ht]
    \centering
    \begin{minipage}{0.24\textwidth}
        \centering
        \includegraphics[width=\linewidth]{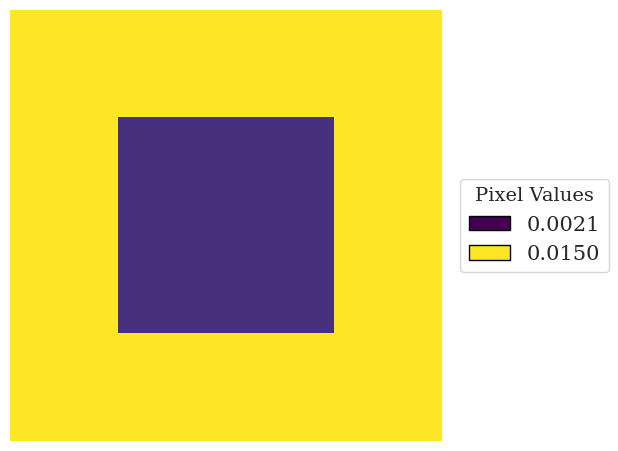}
    \end{minipage}%
    \begin{minipage}{0.24\textwidth}
        \centering
        \includegraphics[width=\linewidth]{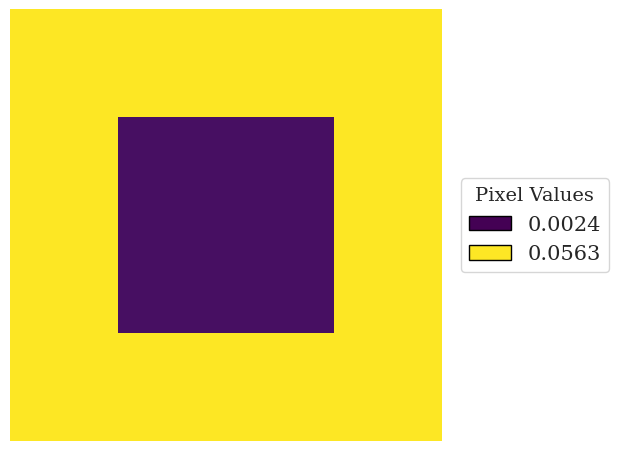}
    \end{minipage}%
    \begin{minipage}{0.24\textwidth}
        \centering
        \includegraphics[width=\linewidth]{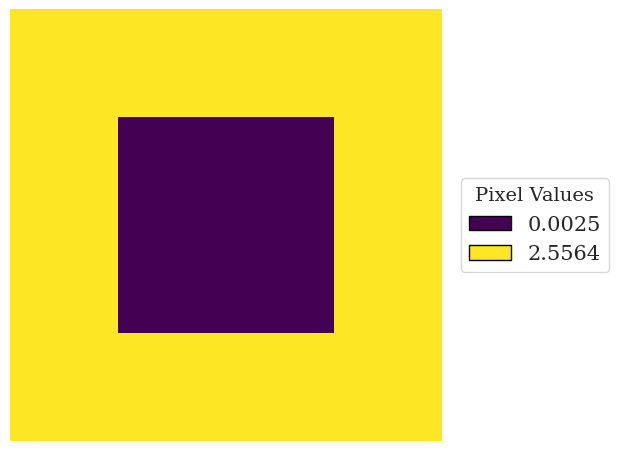}
    \end{minipage}%
    \begin{minipage}{0.24\textwidth}
        \centering
        \includegraphics[width=\linewidth]{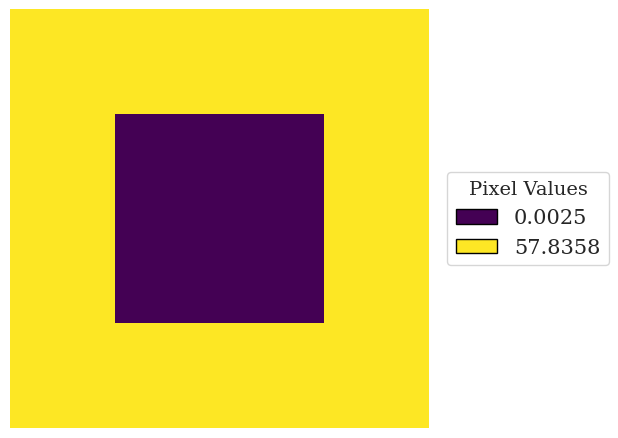}
    \end{minipage}
    \caption{Diagonal of $\Sigma_t$ for inpainting problem. Time-steps from left to right: $t = 0.05, t = 0.1, t = 0.5$ and $t = 0.9$.}
    \label{fig: Sigmat inpainting}
\end{figure}
\begin{figure}[ht]
    \centering
    \begin{minipage}{0.24\textwidth}
        \centering
        \includegraphics[width=\linewidth]{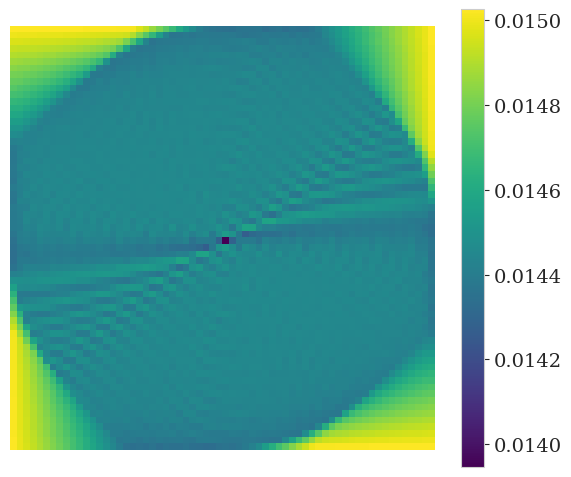}
    \end{minipage}%
    \begin{minipage}{0.24\textwidth}
        \centering
        \includegraphics[width=\linewidth]{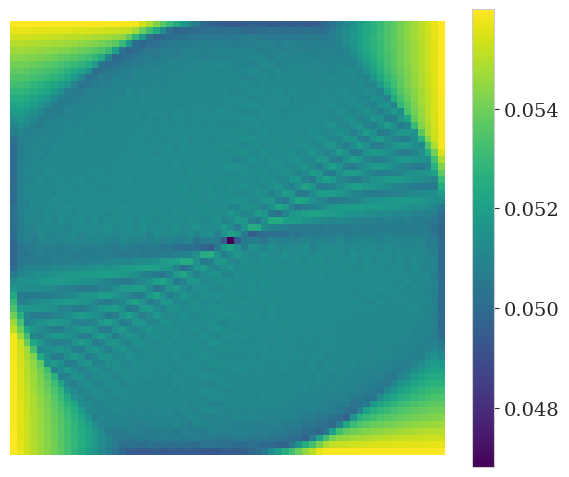}
    \end{minipage}%
    \begin{minipage}{0.24\textwidth}
        \centering
        \includegraphics[width=\linewidth]{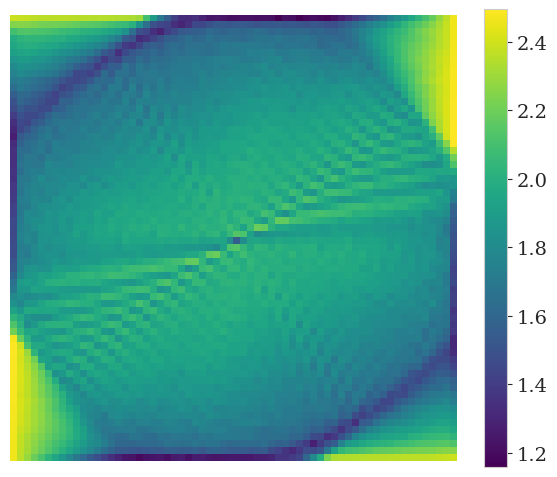}
    \end{minipage}%
    \begin{minipage}{0.24\textwidth}
        \centering
        \includegraphics[width=\linewidth]{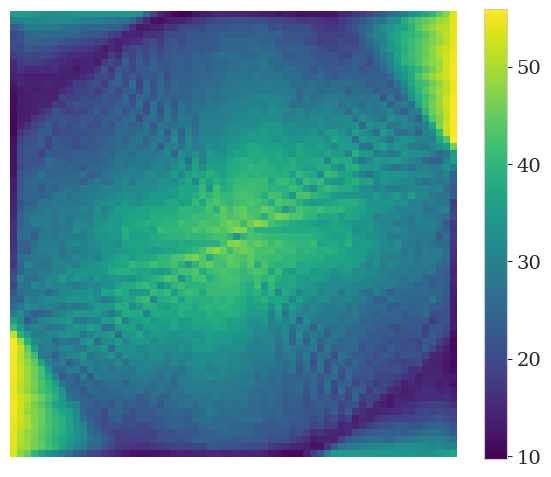}
    \end{minipage}
    \caption{Diagonal of $\Sigma_t$ for CT imaging problem. Time-steps from left to right: $t = 0.05, t = 0.1, t = 0.5$ and $t = 0.9$.}
    \label{fig: Sigmat ct}
\end{figure}
\end{remark}

\begin{definition}
We define the (weighted, unconditional) score function of the process in \Eqref{eq: adapted forward model} by
    \begin{eqnarray}\label{eq: posterior score in infinite dimensional space}
    \tilde{s}(z, t; \mu) = - \left(z -\E(\widetilde X^{\mu}_0 |\widetilde X^{\mu}_t = z)\right).  
\end{eqnarray}
\end{definition}
\begin{lemma}\label{lem: Score function of posterior process in infinite dimension}
Assume that $H = \R^n$ and our prior measure $\mu$ admits a density given by $p_0$. Moreover, let $\widetilde X_t^\mu$ be defined by \Eqref{eq: adapted forward model}.
Then for every $t \ge 0$, the random variable $\widetilde X^{\mu}_t$ admits a density $\tilde p_t(x)$ satisfying
\begin{eqnarray*}
    \tilde s(z, t; \mu) = \Sigma_t \nabla_z \log \tilde p_t(z).
\end{eqnarray*}
\end{lemma}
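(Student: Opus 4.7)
The plan is to recognise the statement as a multivariate Tweedie-type identity applied to the additive mixture model in \Eqref{eq: adapted forward model}. By the standing assumption that the initial condition and the driving noise are independent, $\widetilde X_0^\mu$ and $Z_t$ are independent, so the law of $\widetilde X_t^\mu = \widetilde X_0^\mu + Z_t$ is the convolution of $\mu$ with $\N(0,\Sigma_t)$. In finite dimension $H=\R^n$, Lemma~\ref{lem: Sigma_t is covariance operator} guarantees that $\Sigma_t$ is strictly positive definite (hence invertible) for $t>0$, so $Z_t$ admits the smooth Gaussian density $g_t(z) \propto \exp(-\tfrac12 \langle z, \Sigma_t^{-1} z\rangle)$. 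Consequently $\widetilde X_t^\mu$ has the density $\tilde p_t = p_0 \ast g_t$, which is smooth and strictly positive on $\R^n$.

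Next, I would differentiate $\tilde p_t$ under the integral sign using the elementary identity $\nabla_z g_t(z - z_0) = -\Sigma_t^{-1}(z - z_0)\, g_t(z - z_0)$. This yields $\Sigma_t\,\nabla_z \tilde p_t(z) = -\int_{\R^n}(z - z_0)\, p_0(z_0)\, g_t(z - z_0)\, dz_0$. Dividing by $\tilde p_t(z)$ and recognising the Bayesian conditional density of $\widetilde X_0^\mu$ given $\widetilde X_t^\mu = z$ as $p(z_0 \mid z) = p_0(z_0) g_t(z - z_0)/\tilde p_t(z)$ produces $\Sigma_t\,\nabla_z \log \tilde p_t(z) = \E(\widetilde X_0^\mu \mid \widetilde X_t^\mu = z) - z$, which is exactly the definition \Eqref{eq: posterior score in infinite dimensional space} of $\tilde s(z, t; \mu)$ after rearrangement.

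The only technical point to discharge is the justification of differentiation under the integral sign. For every fixed $t>0$ the density $g_t$ and its gradient decay exponentially uniformly on compact sets in $z$, so dominated convergence yields the interchange regardless of integrability properties of $p_0$ beyond being a probability density. The boundary case $t=0$ is degenerate but consistent: $\Sigma_0 = 0$ makes $\widetilde X_0^\mu = \widetilde X_0^\mu$, so $\tilde s(z,0;\mu) = 0$ and both sides of the claimed identity vanish when interpreted as the limit $t \downarrow 0$.

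I do not anticipate any serious obstacle, as the argument is essentially the classical Tweedie/Miyasawa formula with a matrix noise covariance $\Sigma_t$ in place of a scalar noise level. The only subtlety worth highlighting — and the reason the present lemma is stated explicitly — is that the relevant ``noise'' $Z_t$ has covariance $\Sigma_t$ given by the nontrivial formula \Eqref{eq: formal definition of Sigma_t}, rather than the OU covariance $(1-e^{-t})\C$, so that the multiplicative factor in front of $\nabla_z \log \tilde p_t$ is precisely $\Sigma_t$ and not any other operator.
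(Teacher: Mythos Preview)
Your proposal is correct and follows essentially the same approach as the paper: both write the marginal density $\tilde p_t$ as an integral of the Gaussian transition density against the prior, differentiate the Gaussian under the integral sign using $\nabla_z g_t(z-z_0) = -\Sigma_t^{-1}(z-z_0)g_t(z-z_0)$, and then invoke Bayes' formula to identify the resulting ratio as the conditional density of $\widetilde X_0^\mu$ given $\widetilde X_t^\mu$. Your write-up is slightly more careful than the paper's in that you explicitly justify the interchange of differentiation and integration via dominated convergence and comment on the degenerate case $t=0$, whereas the paper simply cites Leibniz's rule and states that it replicates Lemma~1 of \citet{pidstrigach2023infinitedimensional}.
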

 Let us fix some $t>0$ and define the operator $R_t: H\to H$ through
\begin{equation}
    \label{eq:Rt}
    R_t := (e^t-1) I + (e^t-1)^2 \C A^* C_t^{-1} A.
\end{equation}
By Lemma \ref{lem: Useful equality that show that Sigma is still p.d.}, $R_t$ is bijective.
Define the affine transformation $r$ of $\tilde s$ by
\begin{equation}\label{eq: r equation}
    r(\cdot, t; \mu) := \tilde{s}(R_t \cdot, t; \mu) + R_t\cdot,
\end{equation}

Let us now proceed to the main result of this section.
\begin{theorem}\label{thm:score_id1 infinite dimension}
Let $H_{\C}$ be the Cameron-Martin space of $\C$ (see Appendix \ref{Appendix:prob_measures}) and assume that the prior satisfies $\mu(H_{\C}) = 1$.
Then the for conditional score function $s(x,t;\mu^y)$ it holds that
\begin{equation}
    \label{eq:conditional_score_id}
    s(x, t; \mu^y) = \lambda(t) \left(  r\left(\xi_t(x, y)\right)- e^{t/2}x \right)
\end{equation}
for $(x,y) \in H \times \R^m$ a.e. in $\L(X_t^\mu, Y)$ and $t>0$, where
\begin{equation}
	\label{eq:lambda}
	\lambda(t) = (e^{t/2} - e^{-t/2})^{-1}, \qquad \xi_t(x,y) =  \C A^* \Gamma^{-1} y + \lambda(t) x.
\end{equation}
\end{theorem}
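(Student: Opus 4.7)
My plan is to reduce the claimed identity to an equality of two conditional distributions. Specifically, I will show that the conditional law of $X_0^\mu$ given $(X_t^\mu = x,\, Y=y)$ equals the conditional law of $\widetilde X_0^\mu$ given $\widetilde X_t^\mu = R_t\xi_t(x,y)$. Once this equality is in hand, matching the Tweedie-type identities in Definition~\ref{def:conditional_score_infiniteD} and in \Eqref{eq: posterior score in infinite dimensional space} immediately produces the claim after an elementary algebraic rearrangement using $\lambda(t) = e^{-t/2}/(1-e^{-t})$.

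The first step is a Bayesian computation in $H$. Using the Cameron--Martin theorem, together with the hypothesis $\mu(H_\C)=1$, I express the conditional law of $(X_t^\mu, Y)$ given $X_0^\mu=x_0$ as the product of the Radon--Nikodym densities of $\N(e^{-t/2}x_0, (1-e^{-t})\C)$ versus $\N(0,(1-e^{-t})\C)$ and of $\N(Ax_0,\Gamma)$ versus $\N(0,\Gamma)$. Bayes' rule then gives that $X_0^\mu \mid (X_t^\mu, Y)=(x,y)$ has law absolutely continuous with respect to $\mu$, with density proportional (as a function of $x_0$) to $\exp\!\bigl(-\tfrac12\norm{Ax_0-y}_\Gamma^2 + \lambda(t)\langle x_0, \C^{-1}x\rangle - \tfrac{e^{-t}}{2(1-e^{-t})}\langle x_0,\C^{-1}x_0\rangle\bigr)$. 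Collecting and completing the square in $x_0$, the exponent rewrites as $-\tfrac12\langle x_0, \Sigma_t^{-1}x_0\rangle + \langle x_0, \Sigma_t^{-1}m_t(x,y)\rangle + c(x,y)$, with $\Sigma_t^{-1} = A^*\Gamma^{-1}A + (e^t-1)^{-1}\C^{-1}$ and $m_t(x,y) := \Sigma_t(A^*\Gamma^{-1}y + \lambda(t)\C^{-1}x)$. A Woodbury-type manipulation (Lemma~\ref{lem: Useful equality that show that Sigma is still p.d.}), together with the elementary identities $\lambda(t)(e^t-1)=e^{t/2}$ and the push-through formula $\Sigma_t A^*\Gamma^{-1} = (e^t-1)\C A^* C_t^{-1}$, then yields $m_t(x,y) = R_t \xi_t(x,y)$.

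An entirely parallel computation for the auxiliary mixture $\widetilde X_t^\mu = \widetilde X_0^\mu + Z_t$, with $Z_t \sim \N(0,\Sigma_t)$ independent of $\widetilde X_0^\mu$, shows that the law of $\widetilde X_0^\mu \mid \widetilde X_t^\mu = z$ is absolutely continuous with respect to $\mu$ with density proportional to $\exp\!\bigl(-\tfrac12\langle x_0,\Sigma_t^{-1}x_0\rangle + \langle x_0, \Sigma_t^{-1}z\rangle\bigr)$. Choosing $z = R_t\xi_t(x,y) = m_t(x,y)$, the two conditional densities coincide as functions of $x_0$, hence the two conditional laws agree, and in particular $\E(X_0^\mu \mid X_t^\mu = x, Y=y) = \E(\widetilde X_0^\mu \mid \widetilde X_t^\mu = R_t\xi_t(x,y))$. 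Substituting this equality into Definition~\ref{def:conditional_score_infiniteD}, using $\E(\widetilde X_0^\mu \mid \widetilde X_t^\mu = z) = z + \tilde s(z,t;\mu)$ from \Eqref{eq: posterior score in infinite dimensional space}, and invoking the definition of $r$ in \Eqref{eq: r equation}, one obtains $s(x,t;\mu^y) = \lambda(t)\bigl(\tilde s(R_t\xi_t, t;\mu) + R_t\xi_t - e^{t/2}x\bigr) = \lambda(t)\bigl(r(\xi_t(x,y)) - e^{t/2}x\bigr)$, as required.

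The principal obstacle is to justify the ``completing the square'' step rigorously in infinite dimensions, because the forms $\langle x_0, \Sigma_t^{-1}x_0\rangle$ and $\langle x_0, \C^{-1}x_0\rangle$ are only densely defined on $H$ and the Radon--Nikodym ``densities'' above must be interpreted via Cameron--Martin rather than as pointwise Lebesgue densities. I would address this by exploiting $\mu(H_\C)=1$ together with the fact that $\Sigma_t$ is a bona fide trace-class, positive-definite covariance operator (Lemma~\ref{lem: Sigma_t is covariance operator}), and either (i)~by approximating $\mu$ via its push-forwards onto finite-dimensional subspaces spanned by eigenvectors of $\C$, carrying out the finite-dimensional Bayesian calculation of Section~\ref{sec: finite dimensional UCoS}, and passing to the limit via dominated convergence; or (ii)~by characterizing each Gaussian conditional factor through its mean and covariance operator only, and invoking the regular conditional probability structure on the Polish space $H\times\R^m$ to upgrade the formal density identity into an equality of measures.
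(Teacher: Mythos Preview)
Your overall strategy matches the paper's exactly: the paper also reduces Theorem~\ref{thm:score_id1 infinite dimension} to the equality of conditional expectations
\[
\E(\widetilde X_0^\mu \mid \widetilde X_t^\mu = m_t(x,y)) \;=\; \E(X_0^\mu \mid Y=y,\, X_t^\mu = x)
\]
(its Proposition~\ref{prop:identity_of_expectations}), proves it by writing both conditional laws as reweightings of $\mu$ via Radon--Nikodym derivatives $n_t$ and $\tilde n_t$ coming from the Cameron--Martin theorem, and then performs the algebraic substitution into the Tweedie formulas together with $m_t = R_t\xi_t$ (Lemma~\ref{lem: writing m_t for no forward}).

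Where your proposal and the paper diverge is precisely the point you flag as the principal obstacle. The paper does \emph{not} use a finite-dimensional approximation argument, nor a vague mean/covariance characterization. Instead it proves directly (Lemma~\ref{lem: relation between CM spaces of U and Vt}) that the Cameron--Martin spaces coincide, $H_{\Sigma_t} = H_{(e^t-1)\C}$, and that the Gaussians $\N(0,\Sigma_t)$ and $\N(0,(e^t-1)\C)$ are equivalent. Under the hypothesis $\mu(H_\C)=1$, this means that for $\mu$-a.e.\ $x_0$ both Cameron--Martin norms $\norm{x_0}_{H_{(e^t-1)\C}}$ and $\norm{x_0}_{H_{\Sigma_t}}$ are finite, and the ``completing the square'' identity becomes the exact relation
\[
\norm{x_0}^2_{H_{\Sigma_t}} \;=\; \norm{x_0}^2_{H_{(e^t-1)\C}} + \langle x_0, A^*\Gamma^{-1}Ax_0\rangle_H,
\]
holding $\mu$-a.e.\ without any limiting procedure. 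The Radon--Nikodym derivatives $n_t(x_0,\cdot)$ and $\tilde n_t(x_0,\cdot)$ are then both given in closed form by the Cameron--Martin formula \eqref{CM-theorem}, and the desired equality of reweighted measures follows from matching exponents directly (the paper's Corollary~\ref{cor:ID1} and \Eqref{eq:part-of-prop-proof}). This is cleaner and shorter than an approximation argument, and it is the natural way to exploit the standing assumption $\mu(H_\C)=1$; your option~(i) would work in principle but is unnecessary here, and your option~(ii) is too vague as stated.
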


\paragraph{Special case of Gaussian prior}
Let us now study the score $\tilde s$ corresponding to the special case of a Gaussian prior measure. This enables us to derive an explicit formula which can give insights, in particular, regarding the regularity of the score function.

\begin{lemma}\label{lem: score function is compact}
       Let $\mu = \N(0, S_0)$ be a Gaussian measure and suppose the covariance operator $\widetilde{C}$ satisfies the assumptions of Theorem \ref{thm:score_id1 infinite dimension}. Then there exists a covariance operator $\C$ on $H$ such that $\Sigma_t (\Sigma_t + S_0)^{-1}: H\to H$ can be well-defined as a linear and bounded operator for $t>0$.
       Moreover, in that case
        \begin{eqnarray*}
        \tilde s(z, t; \mu) & = & -\Sigma_t (\Sigma_t + S_0)^{-1}z.
\end{eqnarray*}
\end{lemma}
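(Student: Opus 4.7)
The plan is to exploit the joint Gaussianity of $(\widetilde X_0^\mu, \widetilde X_t^\mu)$ and apply the classical linear regression formula for conditional expectations of jointly Gaussian random elements in a Hilbert space. Since $\widetilde X_0^\mu \sim \N(0, S_0)$ and $Z_t \sim \N(0, \Sigma_t)$ are independent centered Gaussians, the pair $(\widetilde X_0^\mu, \widetilde X_t^\mu) = (\widetilde X_0^\mu,\, \widetilde X_0^\mu + Z_t)$ is jointly centered Gaussian on $H \times H$ with marginal covariances $S_0$ and $S_0 + \Sigma_t$, and cross-covariance $S_0$. Consequently one expects the conditional expectation to take the linear form $\E(\widetilde X_0^\mu \mid \widetilde X_t^\mu = z) = L z$, where $L:H \to H$ is the bounded operator characterised by the normal equation $L(S_0 + \Sigma_t) = S_0$; equivalently $L = I - \Sigma_t(\Sigma_t + S_0)^{-1}$.

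The existence statement for $\C$ then reduces to ensuring that $\Sigma_t(\Sigma_t + S_0)^{-1}$ admits a bounded extension to all of $H$. I would produce such a $\C$ explicitly by requiring it to commute with $S_0$, for instance by sharing its eigenbasis $\{e_k\}$. By \eqref{eq: formal definition of Sigma_t}, $\Sigma_t$ is then simultaneously diagonal in $\{e_k\}$, so $\Sigma_t(\Sigma_t + S_0)^{-1}$ acts coordinate-wise as multiplication by scalars of the form $\sigma_t^{(k)}/(\sigma_t^{(k)} + s_0^{(k)}) \in [0,1)$, which manifestly defines a bounded operator of norm at most one. The eigenvalues $\{c_k\}$ of $\C$ must additionally satisfy the compatibility $\mu(H_\C) = 1$ demanded by Theorem~\ref{thm:score_id1 infinite dimension}; by the Feldman--Hajek dichotomy this reduces to $\sum_k s_0^{(k)}/c_k < \infty$, which is easily arranged by taking, for instance, $c_k = k\, s_0^{(k)}$.

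With $L$ secured as a bounded operator, substituting $Lz = \E(\widetilde X_0^\mu \mid \widetilde X_t^\mu = z)$ into the definition \eqref{eq: posterior score in infinite dimensional space} of $\tilde s$ gives
\begin{equation*}
    \tilde s(z, t; \mu) = -\bigl(z - Lz\bigr) = -(I - L)z = -\Sigma_t(\Sigma_t + S_0)^{-1}z,
\end{equation*}
which is the claimed identity. The principal obstacle is the infinite-dimensional conditioning step: since $S_0 + \Sigma_t$ is trace class and hence not boundedly invertible on $H$, one cannot naively invoke the finite-dimensional Gaussian regression formula, and must instead argue (e.g.\ through the commuting-$\C$ construction or a Cameron--Martin regression argument on $H \times H$) that the composition nevertheless admits a bounded extension that genuinely represents the conditional mean; once that is established, the remaining algebra producing the final expression is immediate.
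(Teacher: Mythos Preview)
Your high-level strategy---identify $(\widetilde X_0^\mu,\widetilde X_t^\mu)$ as jointly Gaussian and read off the conditional mean via the normal equation $L(S_0+\Sigma_t)=S_0$---matches the paper's. The gap is in how you justify boundedness of $\Sigma_t(\Sigma_t+S_0)^{-1}$. You claim that choosing $\C$ diagonal in the eigenbasis $\{e_k\}$ of $S_0$ forces $\Sigma_t$ to be diagonal there as well. That is false: by \eqref{eq: formal definition of Sigma_t},
\[
\Sigma_t=(e^t-1)\C-(e^t-1)^2\,\C A^* C_t^{-1}A\C,
\]
so $\Sigma_t$ inherits structure from $A$ and $\Gamma$ through $\C A^* C_t^{-1}A\C$, which has no reason to be diagonal in $\{e_k\}$. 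Your coordinate-wise bound $\sigma_t^{(k)}/(\sigma_t^{(k)}+s_0^{(k)})\in[0,1)$ therefore does not apply, and the boundedness argument collapses. (Incidentally, your explicit example $c_k=k\,s_0^{(k)}$ also fails: $\sum_k s_0^{(k)}/c_k=\sum_k 1/k$ diverges.)

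The paper avoids diagonalization entirely. It chooses $\C$ so that $\C^{1/2}(X)\subset S_0^{1/2}(X)$ for every linear subspace $X$; combined with the identity $\Sigma_t(H)=\C(H)$ (Lemma~\ref{lem: relation between CM spaces of U and Vt}(i)), this yields the range inclusion $S_0(H)\subset\Sigma_t(H)$. Then $\Sigma_t^{-1}S_0$ is a genuine bounded operator on $H$, and one defines the extension via the adjoint
\[
\Sigma_t(\Sigma_t+S_0)^{-1}:=\bigl((\Sigma_t+S_0)^{-1}\Sigma_t\bigr)^*=\bigl((I+\Sigma_t^{-1}S_0)^{-1}\bigr)^*,
\]
which is bounded regardless of how $A$ and $\Gamma$ interact with $S_0$. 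This range-inclusion device is the missing idea; once you have it, the rest of your algebra producing $\tilde s(z,t;\mu)=-\Sigma_t(\Sigma_t+S_0)^{-1}z$ goes through as written.
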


\subsection{Matrix-free implementation}
\label{sec:matrix_free_implementation}
Building on the theoretical framework established in the previous section, we now describe how UCoS can be implemented efficiently in practice. A key strength of our approach is that it remains entirely matrix-free: it requires only the ability to evaluate the forward operator \(A\) and its adjoint \(A^*\) without ever forming or inverting these operators explicitly. \fabian{In a discretized setting, this corresponds to avoiding computing and saving the matrix representing $A$, which can be expensive in terms of computational time and memory for large scale problems.} This property enables UCoS to scale to high-dimensional and ill-posed inverse problems where standard representations (e.g., via SVD or matrix factorizations) become computationally infeasible. 
In what follows, we outline the two-phase implementation pipeline: an offline training phase in which a task-dependent score approximation is learned, and an online sampling phase where posterior samples are generated using only the trained network and a precomputed observation-dependent shift.
\subsubsection{Offline phase: learning the task-dependent score}

The next lemma adapts the training procedure discussed in Section \ref{subsec:score-diffusion_in_infdim} to the setting of forward process $\widetilde X^{\mu}$. Let us define continuous-time score matching objective (loss function) and the denoising score-matching objective 
\begin{equation*}\label{eq:loss-objective}
     \widetilde{\mathrm{SM}}(r_\theta) := \E\left[\lambda(t)^2\norm{\Tilde{s}(\tilde x_t, t; \mu) + \tilde x_t -r_\theta(R_t^{-1} \tilde x_t, t; \mu)}_H^2\right],
\end{equation*}
    \begin{equation*}
        \widetilde{\mathrm{DSM}}(r_\theta) := \E \left[\lambda(t)^2 \norm{r_\theta(R_t^{-1} \tilde x_t, t; \mu) - \tilde x_0}_H^2\right].
    \end{equation*}
with the expectations taken w.r.t. $ t \sim U[\delta, T], (\tilde x_0, \tilde x_t) \sim \L(\widetilde{X}^{\mu}_0, \widetilde{X}_t^{\mu})$.
We make an assumption regarding the boundedness of the second moment of the conditional score uniformly in time following \citet{baldassari2024conditional}.
\begin{assumption}\label{ass: stability estimate; no forward process}
       The prior $\mu$ has bounded second moment, $\E_{X\sim\mu}\|X\|_H^2 < \infty$, and
    \begin{eqnarray*}
    \sup_{t \in [\delta, T]} \E_{y \sim \pi_y} \E_{x_t \sim \L(X_t^{\mu^y})} \norm{ s(x_t, t; \mu^y)}_H^2 < \infty.
\end{eqnarray*}

\end{assumption}
Notice that Assumption \ref{ass: stability estimate; no forward process} does not require us to sample from $Y \sim \pi_y$ or to compute the transform $m_t(x, y)$. See Lemma \ref{lem: training loss} in the appendix for detailed justification.
\begin{lemma}\label{lem: conditional score matching}
    Let Assumption \ref{ass: stability estimate; no forward process} hold. It follows that
    \begin{eqnarray*}
        \widetilde{\mathrm{SM}}(r_\theta)
        = \widetilde{\mathrm{DSM}}(r_\theta) + V,
    \end{eqnarray*}
    where the constant $V < \infty$ is independent of $\theta$.
\end{lemma}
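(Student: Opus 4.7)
The plan is to follow the classical denoising score-matching argument \cite{Vincent2011ACB-Loss-function, baldassari2024conditional}, adapted to the modified forward process $\widetilde X^{\mu}_t$ and the reparameterized target $r_\theta$. The key observation is that by the definition of $\tilde s$ in \Eqref{eq: posterior score in infinite dimensional space},
\begin{equation*}
    \tilde s(\tilde x_t, t; \mu) + \tilde x_t = \E\bigl[\widetilde X^{\mu}_0 \,\big|\, \widetilde X^{\mu}_t = \tilde x_t\bigr],
\end{equation*}
so the ``target'' inside the norm of $\widetilde{\mathrm{SM}}$ is precisely the conditional expectation of $\widetilde X^{\mu}_0$ given $\widetilde X^{\mu}_t$. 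Writing $g_\theta(\tilde x_t, t) := r_\theta(R_t^{-1}\tilde x_t, t; \mu)$ and $f(\tilde x_t,t) := \E[\widetilde X^{\mu}_0\,|\,\widetilde X^{\mu}_t = \tilde x_t]$, the two objectives take the form
\begin{equation*}
    \widetilde{\mathrm{SM}}(r_\theta) = \E\bigl[\lambda(t)^2 \|f(\tilde x_t,t) - g_\theta(\tilde x_t,t)\|_H^2\bigr], \qquad \widetilde{\mathrm{DSM}}(r_\theta) = \E\bigl[\lambda(t)^2 \|g_\theta(\tilde x_t,t) - \tilde x_0\|_H^2\bigr].
\end{equation*}

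Next I would expand both squared norms and compare term by term. The $\|g_\theta\|_H^2$ contributions cancel between the two, and the $\theta$-dependence in what remains is confined to the cross terms $\E[\lambda(t)^2\langle g_\theta(\tilde x_t,t), f(\tilde x_t,t)\rangle_H]$ and $\E[\lambda(t)^2 \langle g_\theta(\tilde x_t,t), \tilde x_0\rangle_H]$. The crucial step is to show these cross terms are equal. Conditioning on $(t, \tilde x_t)$ and pulling the $\tilde x_t$-measurable factor $g_\theta(\tilde x_t,t)$ out of the conditional expectation yields
\begin{equation*}
    \E\bigl[\langle g_\theta(\tilde x_t,t), \tilde x_0\rangle_H \,\big|\, t, \tilde x_t\bigr] = \bigl\langle g_\theta(\tilde x_t,t),\, \E[\tilde x_0 \mid t, \tilde x_t]\bigr\rangle_H = \langle g_\theta(\tilde x_t,t), f(\tilde x_t,t)\rangle_H,
\end{equation*}
so the tower property closes the identity. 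Collecting the remaining $\theta$-free contributions gives
\begin{equation*}
    V = \E\bigl[\lambda(t)^2 \bigl(\|f(\tilde x_t,t)\|_H^2 - \|\tilde x_0\|_H^2\bigr)\bigr].
\end{equation*}

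It remains to verify that $V$ is finite. Since $t \sim U[\delta, T]$ with $\delta > 0$, the weight $\lambda(t) = (e^{t/2}-e^{-t/2})^{-1}$ is bounded on $[\delta, T]$. The bound $\E\|\tilde x_0\|_H^2 = \E_{X\sim\mu}\|X\|_H^2 < \infty$ is provided directly by Assumption \ref{ass: stability estimate; no forward process}, and a conditional Jensen inequality gives $\|f(\tilde x_t,t)\|_H^2 \le \E[\|\tilde x_0\|_H^2 \mid t, \tilde x_t]$, so $\E\|f(\tilde x_t,t)\|_H^2 \le \E\|\tilde x_0\|_H^2 < \infty$. This makes both expectations defining $V$ finite and shows the constant $V$ depends only on $\mu$, $\C$, $A$, $\Gamma$, $\delta$ and $T$ but not on $\theta$.

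The only delicate point is making the interchange of inner product and conditional expectation rigorous in the infinite-dimensional Hilbert setting, where $\tilde x_0$ is an $H$-valued random element. The standard Bochner-integral theory for conditional expectations of separable Hilbert-space valued random variables (under the $L^1$-integrability guaranteed by the prior's second moment assumption) supports this manipulation, together with a dominated-convergence argument to handle the $t$-integration; beyond this formal care the calculation is essentially the same as in the finite-dimensional denoising score matching identity.
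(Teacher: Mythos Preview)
Your proposal is correct and follows essentially the same route as the paper: both arguments expand the two squared norms, identify $\tilde s(\tilde x_t,t;\mu)+\tilde x_t=\E[\widetilde X_0^\mu\mid \widetilde X_t^\mu=\tilde x_t]$, and use the tower property to match the $\theta$-dependent cross terms, leaving a $\theta$-free constant $V$. The only noteworthy difference is in the finiteness argument for $V$: the paper bounds the term $\lambda(t)^2\E\|\tilde s(\tilde x_t,t;\mu)+\tilde x_t\|_H^2$ by invoking the full Assumption~\ref{ass: stability estimate; no forward process} on the conditional score (via the identity of Theorem~\ref{thm:score_id1 infinite dimension} and Lemma~\ref{lem: training loss}), whereas you use the conditional Jensen inequality $\|\E[\tilde x_0\mid\tilde x_t]\|_H^2\le\E[\|\tilde x_0\|_H^2\mid\tilde x_t]$ directly, which is more elementary and in fact only requires the prior second-moment part of the assumption.
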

The result follows from repeating the arguments in \citet{baldassari2024conditional}. 
    Truncation $t\geq \delta$ guarantees that complications with singularity at $t\to 0$ are avoided, see Appendix \ref{sec: proof of conditional score matching} for details.

By the definition of the process $\widetilde{X}_t^{\mu}$, we have $\widetilde{X}_t^{\mu} = \widetilde{X}^{\mu}_0 + Z_t$ in distribution, with $Z_t \sim N(0, \Sigma_t)$. 
Using this identity, we observe that $R_t^{-1} \tilde x_t$ conditioned on $\tilde x_0$ is Gaussian with covariance
$$ \text{Cov}(R_t^{-1}\widetilde{X}_t^{\mu} \; | \; \widetilde{X}^{\mu}_0) = R_t^{-1} \Sigma_t \left(R_t^{-1}\right)^* = \frac{1}{e^t-1}{\mathcal C} + {\mathcal C} A^* \Gamma^{-1}A {\mathcal C}$$
by the identity $$R_t^{-1} = \frac{1}{e^t-1}I + {\mathcal C} A^* \Gamma^{-1}A,$$
followed from Lemma \ref{lem: Useful equality that show that Sigma is still p.d.} and Lemma \ref{lem: relation between CM spaces of U and Vt} (i).
Now we can directly generate the desired quantity by observing that
$$R_t^{-1} \tilde x_t = \left(\frac{1}{e^t-1}I + {\mathcal C} A^* \Gamma^{-1}A\right) \tilde x_0 + \frac{1}{\sqrt{e^t-1}}{\mathcal C}^{1/2} z_1 + {\mathcal C} A^* \Gamma^{-1/2} z_2 $$
in distribution, where $(\tilde x_0, \tilde x_t) \sim {\mathcal L}(\widetilde{X}^{\mu}_0, \widetilde{X}_t^{\mu})$ and $z_1, z_2 \sim N(0, I)$ i.i.d.
In consequence, the training phase requires application of the forward operator $A$, its adjoint $A^*$, the covariance of the noising process ${\mathcal C}$ and the precision matrix of the observational noise $\Gamma^{-1}$. We note that $\Gamma$ in the case of degenerative noise distribution, $\Gamma^{-1}$ can be replaced by the pseudoinverse.
%Thus, the offline phase requires only access to the operators \(A, A^*, \mathcal{C}, \Gamma^{-1}\), all of which can be implemented matrix-free. In degenerate noise settings, \(\Gamma^{-1}\) may be interpreted as a pseudoinverse.
%
\subsubsection{Online phase: posterior sampling without forward map evaluations}
The offline training, detailed above, will yield an (e.g. FNO) approximation $r_\theta \approx r$ independent of any measurement data. Once this is done, sampling from the posterior is performed by simulating a reverse-time diffusion process using the transformed score $s$, defined as (taking into account Theorem \ref{thm:score_id1 infinite dimension})
$$ s(x,y; \mu^y) = \lambda(t) \left(r_\theta(\xi_t(x,y), t; \mu) - e^{-\frac t2} x\right), $$
where 
\begin{eqnarray*}
    \xi_t(x,y) = {\mathcal C}A^* \Gamma^{-1}y + \lambda(t) x.
\end{eqnarray*}
Note that after collecting a new measurement vector $y$, the term ${\mathcal C}A^* \Gamma^{-1}y$ is computed once before the generative process and remains constant throughout it. In consequence, the generative process does not require evaluation of any operators in addition to $r_\theta$.

%%%%%% Convergence analysis section %%%%
\section{Convergence analysis}\label{sec: Estimating numerical errors}

The aim of this section is to conduct rigorously the theoretical convergence of the proposed method. More precisely, we establish a quantitative bound of the error term that indicates how far the samples generated by UCoS lie from the true posterior target measure. In particular, we quantify how this error term depends on different types of numerical approximations. Let us fix $T >0$ and $\delta \in (0, T)$. 
Consider a partition $\{\delta = t_1 \le \cdots \le t_n = T \}$ of $[\delta, T]$ with mesh size $\Delta t := \min\{t_i - t_{i+1} | 0 \le i \le N-1\}$ and define $\low{t}:= \max \left\{t_i | 1 \le i \le n, \; t_i \le t\right\}$. 
For clarity, we reverse time in \Eqref{eq: BSDE infdim} and let $W_t^{\mu^y} = Y_{T-t}^{\mu^y}$ for $y$ $\pi_y$-a.e. be the ideal solution satisfying
\begin{equation}\label{reverse time eq}
    dW_t^{\mu^y} = \left(-\frac{1}{2} W_t^{\mu^y} - s(W_t^{\mu^y}, T-t; \mu^y) \right) dt + \C^{1/2} dB_t
\end{equation}
initiated at $w_0 = (1-e^{-T}) z + e^{- T/2} x_0$ for $z \sim \N(0, \C)$ and $x_0 \sim \mu^y$.
Observe that \Eqref{reverse time eq} is a `standard' SDE where $W_t^{\mu^y}$ is independent of the future increments of $B_t$. Moreover, let $v_t$ correspond the numerical approximation to $W_t^{\mu^y}$ satisfying
\begin{equation*}
\begin{aligned}
    d  V_t^{\mu^y} = & -\frac{1}{2}  V_\low{t}^{\mu^y} - \lambda(T-\low{t})\left[ r_\theta\left(\xi_{T-\low t}(V_{\low{t}}^{\mu^y}, y), T-\low{t}; \mu\right) \right. \left. - e^{(T-\low{t})/2} V_{\low{t}}^{\mu^y} \right] dt +\C^{1/2} dB_t
\end{aligned}
\end{equation*} 
and initiated at $v_0 =  (1-e^{- T}) z$. Here, the two stochastic processes share the same Wiener process $B_t$ and initialization $z$. In what follows, we consider a discrete-time loss given by
\begin{equation}
\label{eq:discrete_loss}
\begin{aligned}
    &\varepsilon_{LOSS} =  \E_{t \sim U[\delta, T]} \E_{\tilde x_{\low t} \sim  \L(\widetilde{X}_{\low t}^{\mu})} \lambda(\low t)^2\norm{\Tilde{s}(\tilde x_{\low t}, \low t; \mu) + \tilde x_{\low t} -r_\theta(R_{\low t}^{-1} \tilde x_{\low t}, \low t; \mu)}_H^2,
\end{aligned}
\end{equation}
We are now ready to state the main result of this section.
\begin{theorem}\label{thm: stability estimate}
Let Assumption \ref{ass: stability estimate; no forward process} and the assumptions of Theorem \ref{thm:score_id1 infinite dimension} hold. Assume further that $s(\cdot,t;\mu)$ and $r_\theta(\cdot,t;\mu)$ are Lipschitz continuous with Lipschitz constant $L_s(\cdot) \in L^2([\delta,T])$. %and assume that some measurement $y$ from the random variable $Y \sim \pi_y$ is drawn randomly, where $\pi_y$ is the marginal distribution of $y$ in the inverse problem \Eqref{eq:ip}.
Then, %we have the following bound 
\begin{eqnarray*}
    \E_{y \sim \pi_y} \E_{(w_T, v_{T-\delta}) \sim \L( W_T^{\mu^y},V_{T-\delta}^{\mu^y})} {\norm{w_T -  v_{T-\delta}}_H^2} 
    \le M\left( \varepsilon_{NUM} + \varepsilon_{LOSS} + \varepsilon_{INIT} + \delta \right)
\end{eqnarray*}
where $\varepsilon_{LOSS}$ is given in \Eqref{eq:discrete_loss}, and 
\begin{equation*}
    %\varepsilon_{TRUNC} \le  {\mathcal O}(\delta), \; 
    \varepsilon_{INIT} \le e^{-T} \E_{X\sim\mu}\|X\|_H^2, \;
    \varepsilon_{NUM} \le {\mathcal O}(\Delta t),
\end{equation*}
%\begin{eqnarray*}
%     \varepsilon_{LOSS} := \int_\delta^T \E_{\tilde x_0 \sim \L(\widetilde{X}^{\mu}_0)} \E_{\tilde x_\low{t} \sim \L(\widetilde{X}_\low{t}^{\mu} | \widetilde{X}_0^{\mu})} \left[\lambda(\low{t})^2\norm{\Tilde{s}(\tilde x_\low{t}, \low{t}; \mu) + \tilde x_\low{t} -r_\theta(T_{T-\low{t}}^{-1} \tilde x_\low{t}, \low{t}; \mu)}_H^2\right],
%\end{eqnarray*}
and the constant $M$ depends on the quantities
\begin{eqnarray*}
    \E_{t\sim U[0,T-\delta]}\E_{y \sim \pi_y}\E_{w_t \sim \L(w_t)} \norm{s(w_t, T-t; \mu^y)}_H^2,  \E_{X\sim\mu}\|X\|_H^2, \; \int_\delta^{T} L_s^2(\tau) d \tau, \; \tr_H (\C)\; \text{ and } \; T.
\end{eqnarray*}
\end{theorem}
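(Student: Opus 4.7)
The plan is to adapt the stability argument of \citet{pidstrigach2023infinitedimensional} to our conditional/task-dependent setting by coupling the ideal reverse SDE $W_t^{\mu^y}$ with the discrete-time approximate process $V_t^{\mu^y}$ through the shared Wiener process $B_t$, and then decomposing the terminal error into contributions matching $\varepsilon_{INIT}$, $\varepsilon_{LOSS}$, $\varepsilon_{NUM}$, and $\delta$. I would begin with the triangle-type decomposition
\begin{equation*}
	\norm{W_T^{\mu^y} - V_{T-\delta}^{\mu^y}}_H^2 \le 2\norm{W_T^{\mu^y} - W_{T-\delta}^{\mu^y}}_H^2 + 2\norm{W_{T-\delta}^{\mu^y} - V_{T-\delta}^{\mu^y}}_H^2.
\end{equation*}
The first summand is the ``truncation gap''; integrating the SDE \eqref{reverse time eq} on $[T-\delta, T]$, applying It\^o's isometry together with Assumption \ref{ass: stability estimate; no forward process} and $\tr_H(\C) < \infty$, bounds it by $M\delta$.

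For the second summand, since both SDEs share $B_t$ and $\C^{1/2}$, the noise cancels in the difference and $e_t := W_t^{\mu^y} - V_t^{\mu^y}$ satisfies a pathwise ODE. Applying the chain rule to $\norm{e_t}_H^2$ and integrating, the drift difference splits into four contributions: a dissipative piece $-\tfrac12 e_t$; a Lipschitz piece $s(W_t, T-t;\mu^y) - s(V_{\lfloor t \rfloor}, T-\lfloor t \rfloor; \mu^y)$ controlled by $L_s(T-t)$ on the state variable and by a mean-value argument in time; a time-inconsistency piece $\tfrac12(V_t - V_{\lfloor t \rfloor})$ which, using the SDE for $V$ together with the Lipschitz bound on $r_\theta$ and It\^o's isometry on $[\lfloor t\rfloor, t]$, contributes $\mathcal{O}(\Delta t)$ and is absorbed into $\varepsilon_{NUM}$; and a score-estimation piece which, by Theorem~\ref{thm:score_id1 infinite dimension}, equals $\lambda(T-\lfloor t \rfloor)(r - r_\theta)(\xi_{T-\lfloor t \rfloor}(V_{\lfloor t \rfloor}, y))$.

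The critical step is relating this last piece to $\varepsilon_{LOSS}$, since the loss is defined under samples of the training process $\widetilde X_{\lfloor t \rfloor}^\mu$ while the sampling trajectory visits $\xi_{T-\lfloor t \rfloor}(V_{\lfloor t \rfloor}, y)$. I would exploit the distributional identity
\begin{equation*}
	\L\bigl(\xi_t(X_t^\mu, Y)\bigr) = \L\bigl(R_t^{-1}\widetilde X_t^\mu\bigr),
\end{equation*}
which follows by writing $Y = AX_0 + \epsilon$ and $X_t^\mu = e^{-t/2}X_0 + \sqrt{1-e^{-t}}\, G$, computing $\xi_t = R_t^{-1} X_0 + \C A^* \Gamma^{-1}\epsilon + (e^t-1)^{-1/2} G$, and matching its covariance with $R_t^{-1} \Sigma_t R_t^{-*}$ as recorded in Section~\ref{sec:matrix_free_implementation}. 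Once we pass to the ideal trajectory (incurring a Lipschitz error absorbed by Gr\"onwall and the $(b)$ bound above), the expected square error marginalizes to $\varepsilon_{LOSS}$.

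Putting the pieces together yields a scalar differential inequality of the form $\frac{d}{dt}\E\norm{e_t}_H^2 \le (1 + L_s^2(T-t))\E\norm{e_t}_H^2 + F(t)$ with $\int_0^{T-\delta} F(t)\,dt$ bounded by $\varepsilon_{LOSS} + \varepsilon_{NUM}$. Gr\"onwall's lemma and the initialization bound $\E\norm{e_0}_H^2 = e^{-T}\E_{X\sim \mu}\norm{X}_H^2 = \varepsilon_{INIT}$ complete the proof, with the constant $M$ emerging as $e^{C(1+\int_\delta^T L_s^2)} \cdot \text{(polynomial in $T$, $\tr_H \C$, and the score moment)}$. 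The main obstacle, as anticipated, is the change-of-measure step tying $\varepsilon_{LOSS}$ to the score mismatch along the sampling trajectory; secondary care is needed to handle the singular factor $\lambda(t) \sim t^{-1}$ near $t = 0$, which is precisely why the truncation $\delta > 0$ keeps all integrals finite.
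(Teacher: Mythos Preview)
Your proposal follows the same overall strategy as the paper: couple $W$ and $V$ through the shared noise, decompose the drift difference, invoke the distributional identity $\L\bigl(\xi_t(X_t^\mu,Y)\bigr)=\L\bigl(R_t^{-1}\widetilde X_t^\mu\bigr)$ (this is exactly Lemma~\ref{lem: training loss} combined with Lemma~\ref{lem: writing m_t for no forward}) to marginalize the score-mismatch term into $\varepsilon_{LOSS}$, close with Gr\"onwall, and treat the $\delta$-truncation separately.

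The one substantive organizational difference is \emph{where} you evaluate the numerical and score-mismatch terms. The paper centers its decomposition on the ideal trajectory: $\mathcal I_1$ is $-\tfrac12(w_\tau-w_{\low\tau})+s(w_\tau)-s(w_{\low\tau})$, $\mathcal I_2$ is the score mismatch $s-\lambda(r_\theta-\cdots)$ evaluated at $w_{\low\tau}$, and only $\mathcal I_3$ compares $r_\theta$ at $w_{\low\tau}$ versus $v_{\low\tau}$. This buys two things: moment bounds for the discretization error come directly from the assumed bound on $\E\|s(w_t,\cdot;\mu^y)\|^2$ and the OU transition kernel for $W$, and the loss term marginalizes to $\varepsilon_{LOSS}$ without any Lipschitz swap. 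Your version instead carries the time-inconsistency piece $\tfrac12(V_t-V_{\low t})$ and the score-estimation piece along the \emph{approximate} trajectory $V$, then shifts to $W$ via Lipschitz of $r-r_\theta$. That is workable, but controlling $\E\|V_t-V_{\low t}\|_H^2=\mathcal O(\Delta t)$ requires an a priori moment bound on $\E\|V_{\low t}\|_H^2$ (equivalently on the drift of $V$), which you have not mentioned and which is not among the stated hypotheses; you would need a separate Gr\"onwall-type argument on $\|V_t\|_H^2$ to obtain it. The paper's arrangement sidesteps this circularity entirely by keeping all ``absolute'' error terms on the ideal side.
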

%Note that by assumption \ref{ass: stability estimate; no forward process}, $\pi_y$ almost surely
%\begin{eqnarray*}
%    \sup_{t \in [\delta, T]} \E_{x_t \sim \L(X_t^{\mu^y})} \norm{ s(x_t, t; \mu^y)}_H^2 < \infty.
%\end{eqnarray*}
Note that the Lipschitz continuity assumption of the drift is a natural setup for the backward SDE to retain the uniqueness of a (strong) solution, see \citet{pidstrigach2023infinitedimensional}. It also follows from there that for $\pi_y$-almost every $y \in \R^m$ the random variable $W_T^{\mu^y}$ is distributed according to the posterior $\mu^y$.

%%% Numeric section %%%%

\section{Numerical experiments}\label{sec: Numerics}

We showcase our method in the context of inverse problems related to a toy inpainting example, computerized tomography (CT) and a deblurring problem. Details of the numerical implementation are described in Appendix \ref{section: Details of NN} and enlarged depictions of posterior samples can be found in Appendix \ref{sec: additional samples}.

\fabian{To enable a principled comparison, we examine the accuracy of posterior sampling using score approximations trained with architectures that offer comparable runtimes. 
Methods that avoid forward evaluations during sampling, such as UCoS and conditional method, exhibit similar runtimes. However, we find that the performance of the conditional method is hindered by a more complex score approximation, especially when neural operators have very few parameters.
In contrast, the computational cost of posterior sampling with unconditional methods increases due to the need for repeated forward evaluations during the sampling process. When comparing unconditional methods to UCoS, deviations from the true posterior are primarily attributed to the approximative nature of the conditioning on measurement information, rather than inaccuracies in the score approximation. This is because the dimensionality of the score approximation is the same for both UCoS and unconditional methods.}

Following \cite{baldassari2024conditional}, we utilize a Fourier neural operator (FNO) \cite{li2021fourier} to parameterise $r_\theta$ and perform training outlined in Section \ref{subsec:score-diffusion_in_infdim} on an Nvidia A100 GPU with 80 GiB. We compare UCoS to four different methods. First, we consider one of the first conditional score approximation considered in \cite{jalal2021robustannealedlangevian, feng2023scorebased} and refer to it by SDE ALD.
Additionally, we implement Diffusion Posterior Sampling (DPS) \cite{chung2023diffusion}, projection based methods for general inverse problems (Proj) \cite{Dey_2024} and the conditional score approach (Conditional) \cite{baldassari2024conditional}. 
Each score approximation is given explicitly in Appendix \ref{section: Details of NN}. 
The numerical implementation can be found at 
\ExplSyntaxOn
\bool_if:NTF \accepted
    {\url{https://github.com/FabianSBD/SBD-task-dependent/tree/UCoS}.}
    {\url{https://github.com/FabianSBD/SBD-task-dependent/tree/UCoS}.}
\ExplSyntaxOff
%\url{https://anonymous.4open.science/r/SBD-task-dependent-E5F5}
\paragraph{Inpainting} In this paragraph, the prior is modeled by a Gaussian distribution with squared exponential covariance structure. This allows us to compute all score functions in closed form without any learning. Uniquely to this example, we are able to sample from the (Gaussian) posterior. 
The forward mapping $A: \R^n \rightarrow \R^n$ is diagonal and with diagonal entries one in the center of the image and zero elsewhere. The information on the center pixels is lost and in the inverse problem we seek to recover the full image. The image resolution is $64 \times 64$ pixels.

Figure \ref{fig:Inpainting posterior samples} and Table \ref{tab: inpainting} illustrate that the posterior ensemble ($N=1000$) generated by UCoS (our method) and conditional method generate samples that are consistent with the true posterior distribution. Since no score needs to be learned, UCoS and Conditional perform very similarly.
\begin{figure}[h]
    \centering
    % First column: Two vertically stacked images
    \begin{subfigure}[c]{0.2\textwidth}
        \centering
        truth
        \includegraphics[width=\textwidth]{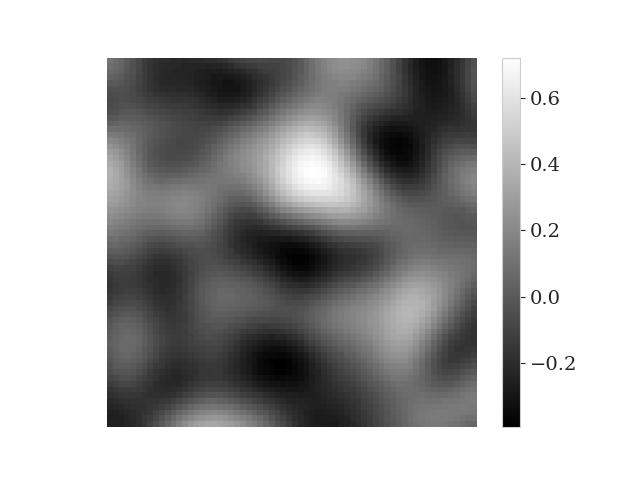} \\ % First image
        \vspace{5pt} % Optional spacing between the two images
        measurement
        \includegraphics[width=\textwidth]{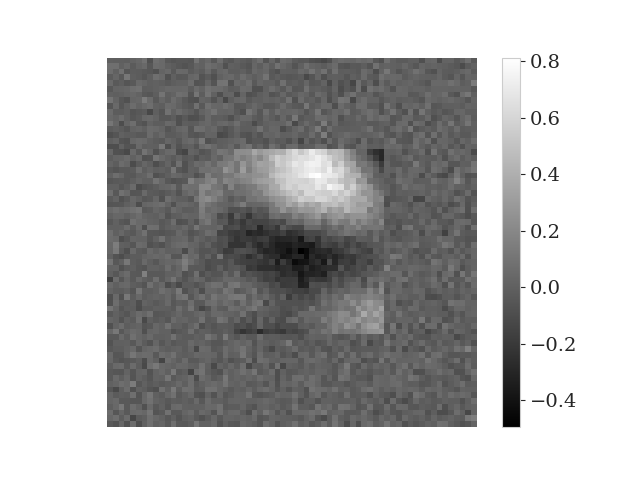}   % Second image
    \end{subfigure}
    % Second column: Single image
    \begin{subfigure}[c]{0.6\textwidth}
        \centering
        \hspace{1.5cm} samples \hspace{3.2cm} bias \hspace{5mm} std
        \includegraphics[width=\textwidth]{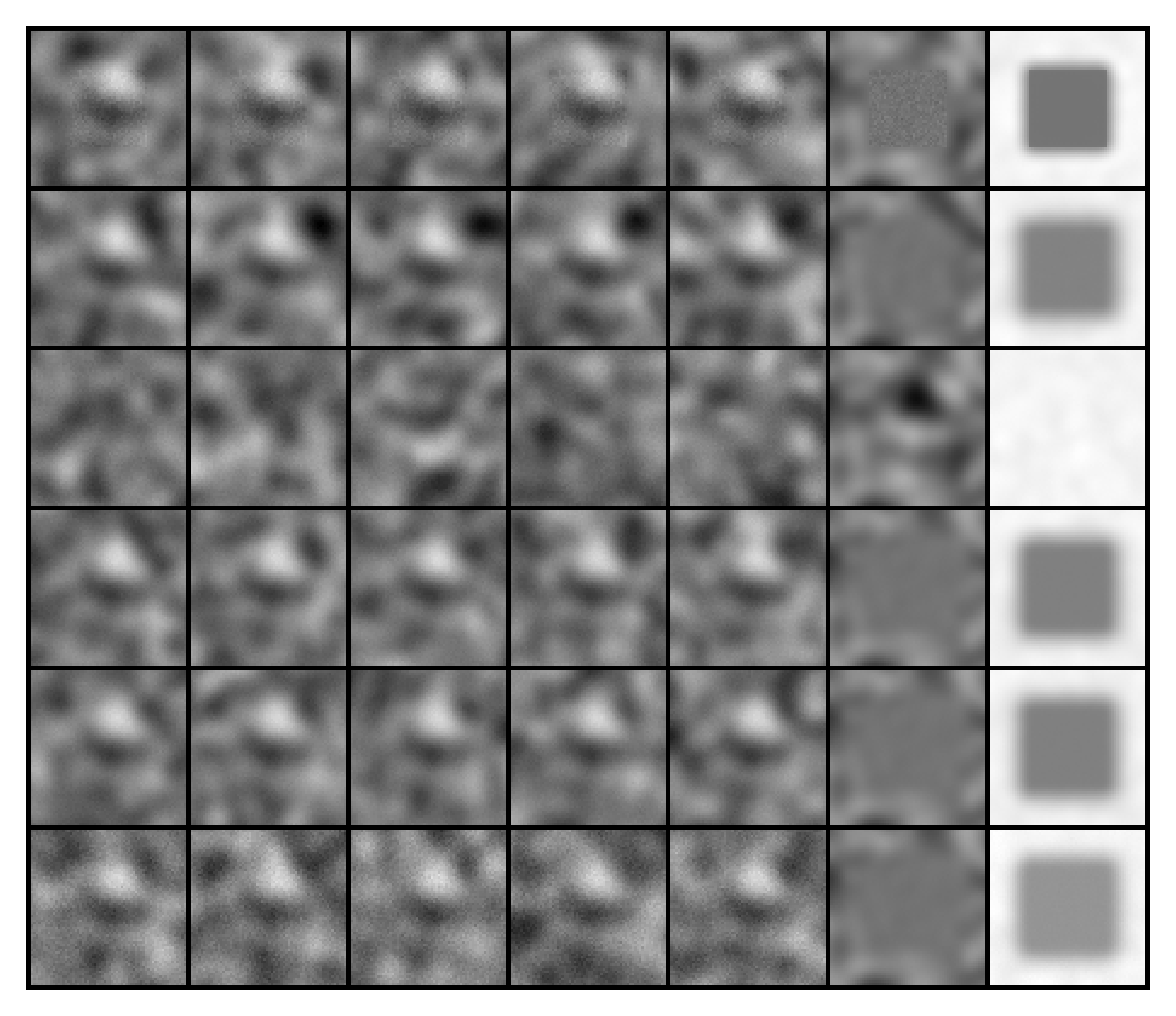}
    \end{subfigure}   
    \begin{subfigure}[c]{0.17\textwidth}
        %\phantom{.} 
        %\\
        SDE ALD \\
        \vspace{6mm}
        \\
        DPS \\
        \vspace{6mm}
        \\
        Proj \\
        \vspace{6mm}
        \\
        Conditional \\
        \vspace{6mm}
        \\
        UCoS \\
        \vspace{2mm}
        \\
        true posterior
    \end{subfigure}
   \caption{Summary of posterior samples for inpainting problem. On the left column, the top image is ground truth and the bottom is measurement data. On the right column, 5 posterior samples with true score corresponding to a Gaussian prior and their bias and standard deviation, with methods used from top to bottom: SDE ALD, DPS, Proj, Conditional, UCoS and true posterior.}
    \label{fig:Inpainting posterior samples}
\end{figure}
\begin{table}
    \centering
    \begin{tabular}{c c c c }
         \centering
             method & bias& std & time\\
             \hline
             SDE ALD& 0.1341 & 0.1419 & 202\\  
             DPS & 0.1050 & 0.1242 & 467 \\
             Proj & 0.2008 & 0.1772 & 254 \\
             Conditional & 0.1011 & 0.1238 & 198 \\
             UCoS & 0.1003 & 0.1226 & 197 \\
             True & 0.1013 & 0.1304 & <1 \\
             \hline
        \end{tabular}
    \caption{Summary statistics of all methods for the inpainting problem. Bias and std are computed on 1000 generated posterior samples and averaged over the number of pixels. Computational time is given in seconds.}
    \label{tab: inpainting}
\end{table}

\paragraph{CT imaging}
Here, the forward mapping $A$ models a sparse-view imaging setting with a 45-degree angle of view with 256 equiangular directions. The detector is assumed to have 256 apertures spanning over the width of imaging area (i.e., 256 parallel line integrals per angle are measured) and, consequently, the problem dimensions are given by $m = n = 256^2$. Moreover, the measurement data is corrupted by additive Gaussian noise with signal--to--noise ratio of 20 dB.

Our training data is the Lung Image Database Consortium image collection (LIDC--IRI)--dataset \cite{armato2015lidc}, containing $>200.000$ 2D slices of resolution $512 \times 512$, we rescale them to $256 \times 256$. 

We analyze the influence of the complexity of the FNO approximation on the posterior samples. We employ an architecture with 4 layers while varying the number of nodes uniformly across all layers. For the full $256 \times 256$ resolution, we analyze posterior samples for an architecture with 32, 64 and 128 neurons per layer in Figure \ref{fig: CT posterior samples} and Table \ref{tab: CT}. We see that UCoS and conditional method have a significant edge in terms of computational time over the other methods as a consequence of avoiding repeated forward evaluations during sampling. With larger number of neurons per layer, conditional method produces relatively similar samples as UCoS in terms bias and standard deviation, but has worse statistics and visual sample quality when using only 32 neurons. Due to the complexity of the score approximation, the performance of conditional method at 32 nodes per layer differs the most from the performance using more nodes.
Sample quality of UCoS seems comparable or better to the unconditional methods (SDE ALD, DPS and Proj) in terms of standard deviation and bias and a naive visual inspection. Noticeably, UCoS generates samples with low variance compared to the other methods.

 \begin{figure}[t]
     \centering
     % First column: Two vertically stacked images
     \begin{subfigure}[c]{0.07\textwidth}
        \textbf{32}
        \text{nodes}
     \end{subfigure}
     \begin{subfigure}[c]{0.12\textwidth}
         \centering
         truth \\
         \vspace{1mm}
         \includegraphics[scale=0.05]{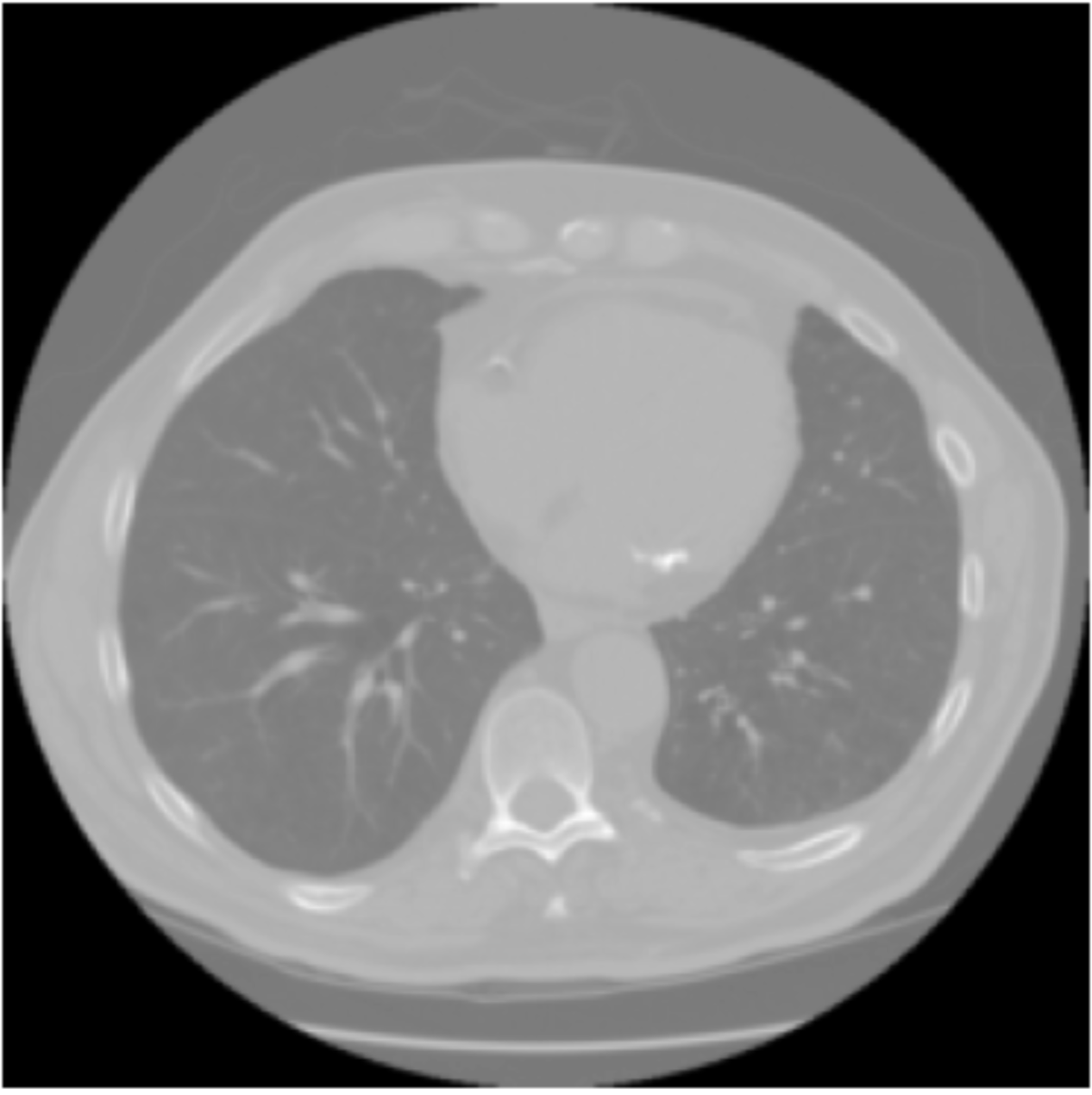} \\ % First image
         measurement \\
         \vspace{1mm}
         \includegraphics[scale=0.09]{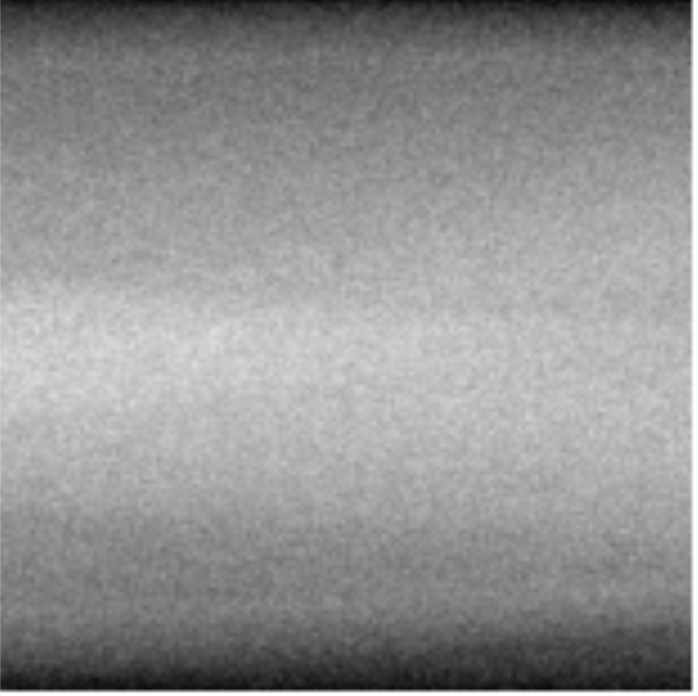}   % Second image
     \end{subfigure}
     % Second column: Single image
     \begin{subfigure}[c]{0.5\textwidth}
         \centering
         \hspace{1.5cm} samples \hspace{2.5cm} bias \hspace{5mm} std
         \includegraphics[width=\textwidth]{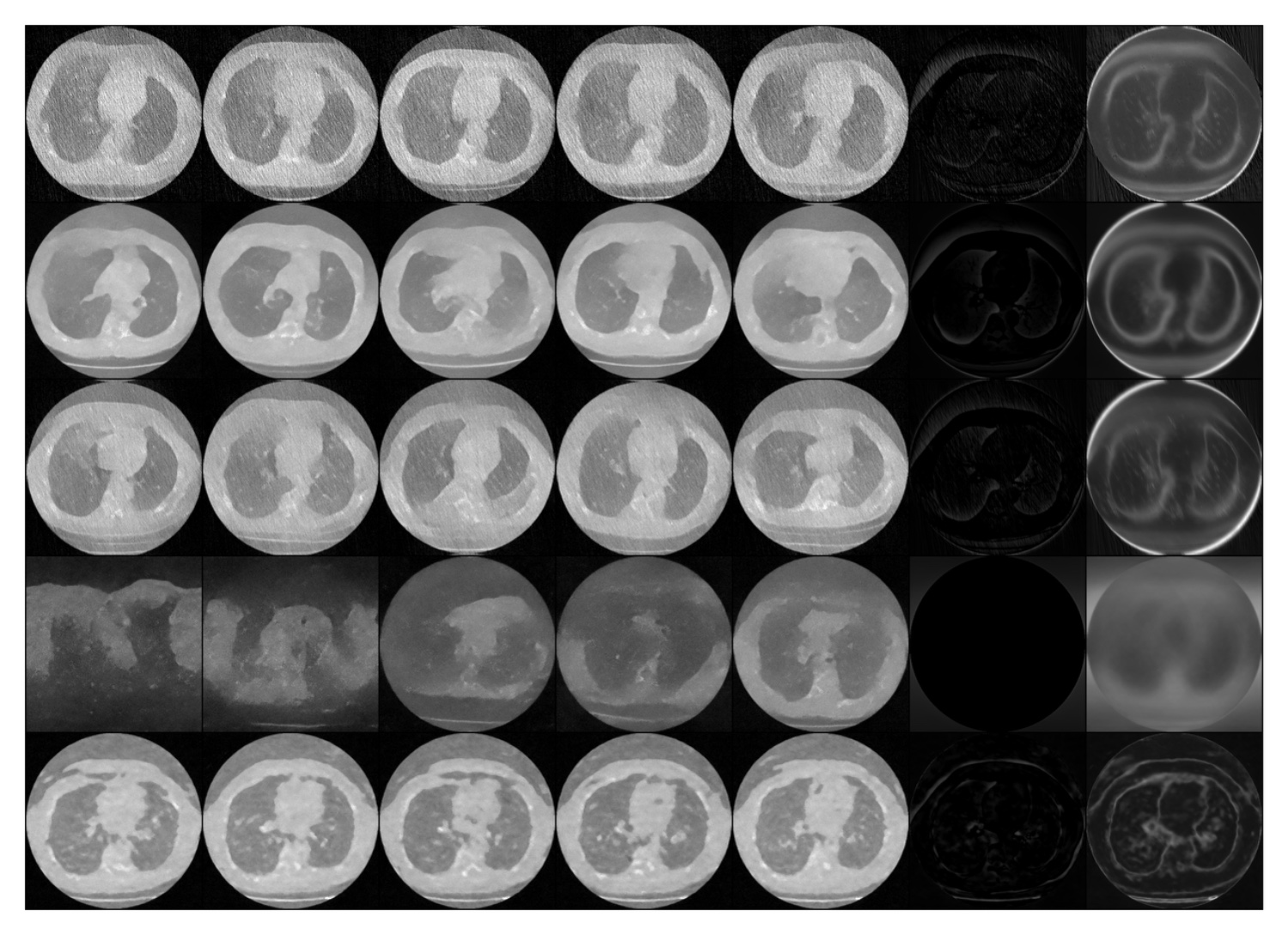}
     \end{subfigure}   
    \begin{subfigure}[c]{0.2\textwidth}
        \vspace{8mm}
        SDE ALD 
        \vspace{7mm}
        \\
        \vspace{7mm}
        DPS \\
        \vspace{7mm}
        Proj \\
        \vspace{8.5mm}
        Conditional \\
        \vspace{7mm}
        UCoS 
    \end{subfigure}
    
     \begin{subfigure}[c]{0.075\textwidth}
        \textbf{64}
        \text{nodes}
     \end{subfigure}
     \begin{subfigure}[c]{0.12\textwidth}
         \centering
         truth \\
         \vspace{1mm}
         \includegraphics[scale=0.05]{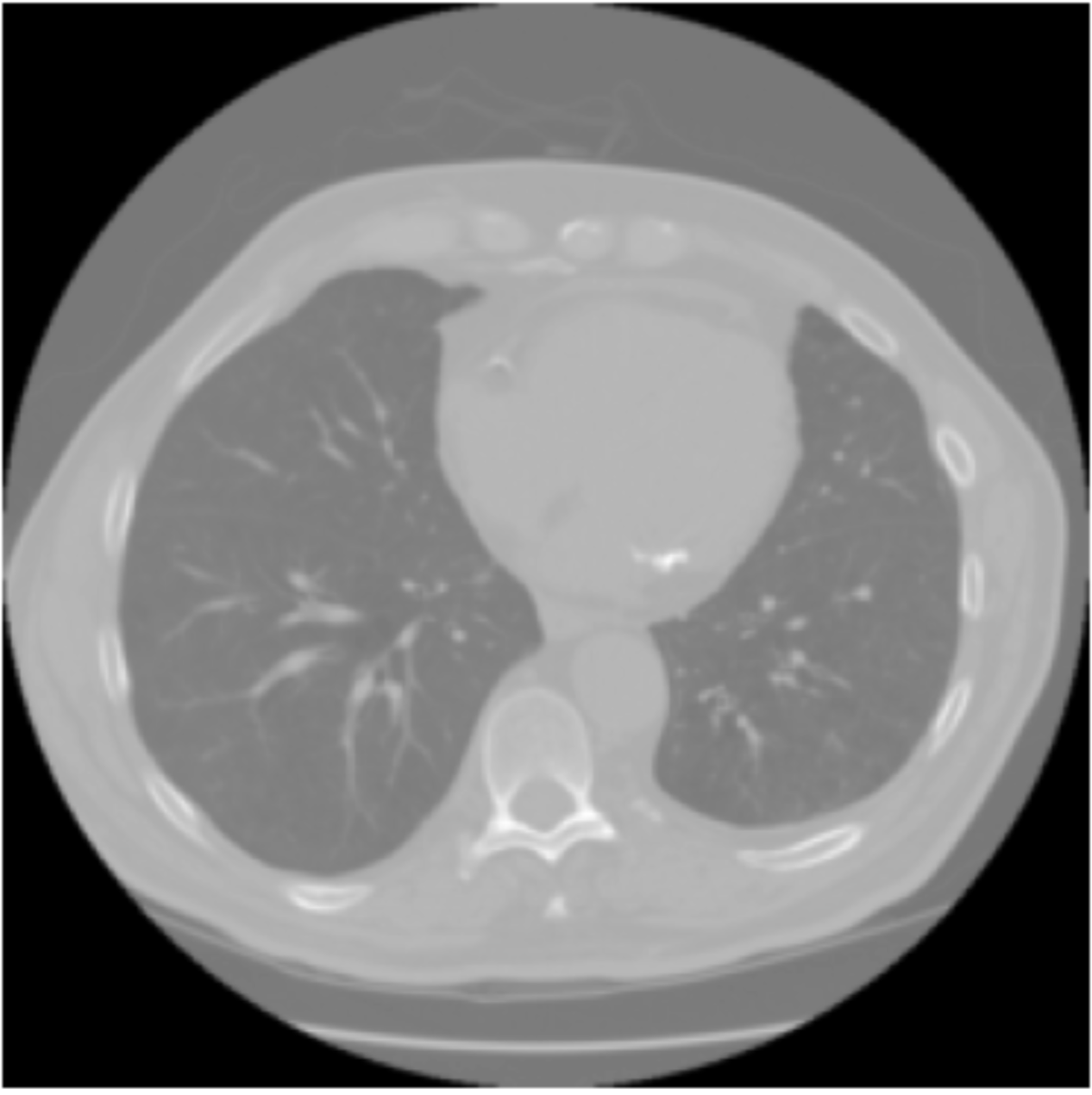} \\ % First image
         measurement \\
         \vspace{1mm}
         \includegraphics[scale=0.09]{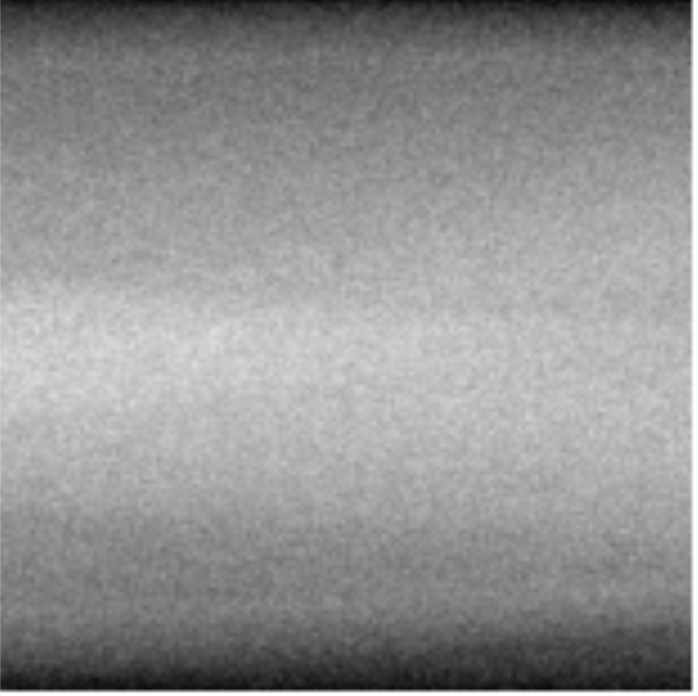}   % Second image
     \end{subfigure}
     % Second column: Single image
     \begin{subfigure}[c]{0.5\textwidth}
         \centering
         \includegraphics[width=\textwidth]{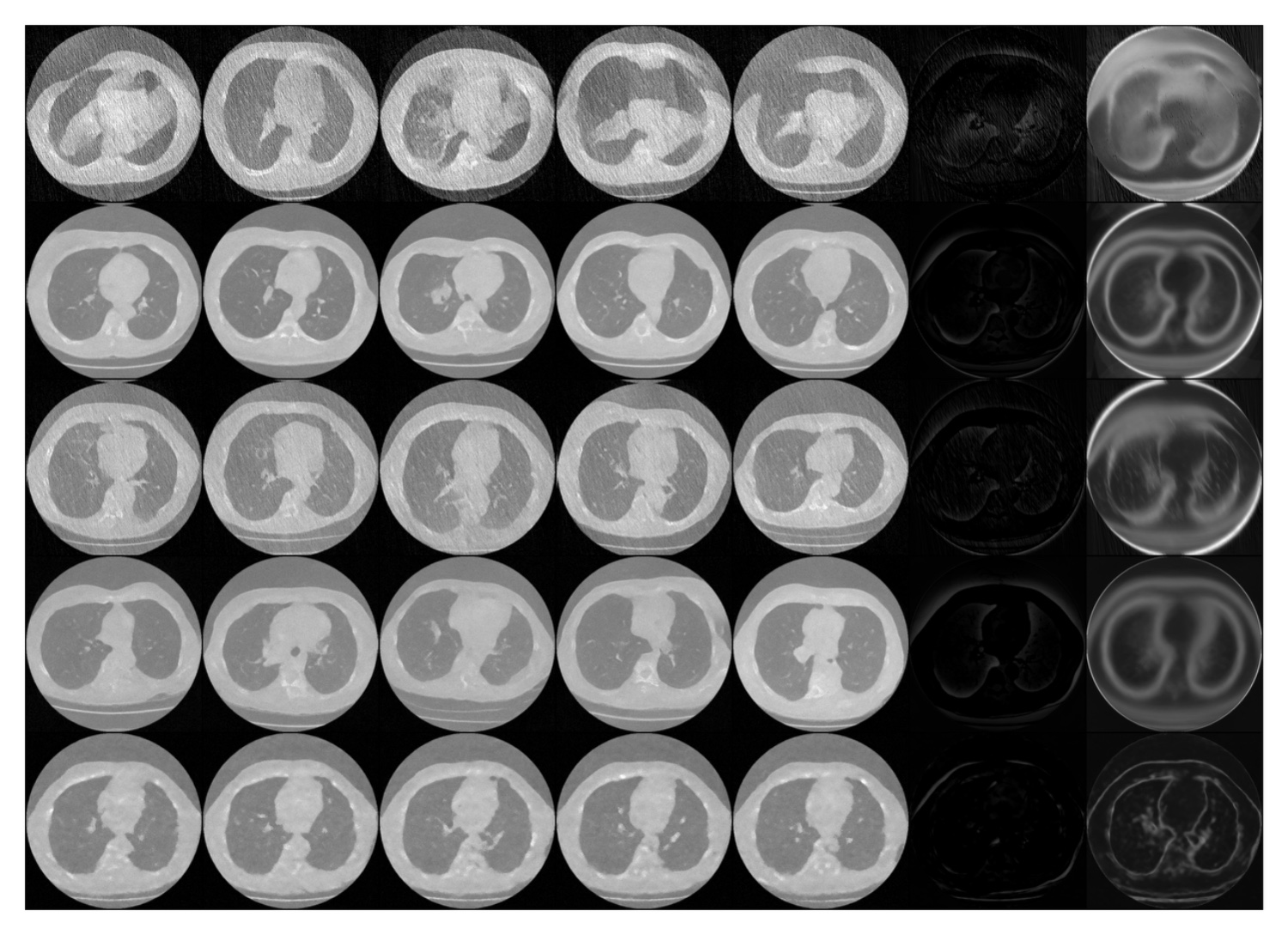}
     \end{subfigure}   
\begin{subfigure}[c]{0.2\textwidth}
        \vspace{6mm}
        SDE ALD 
        \vspace{7mm}
        \\
        \vspace{7mm}
        DPS \\
        \vspace{7mm}
        Proj \\
        \vspace{7mm}
        Conditional \\
        \vspace{7mm}
        UCoS 
    \end{subfigure}
    
     \begin{subfigure}[c]{0.07\textwidth}
        \textbf{128}
        \text{nodes}
     \end{subfigure}
    \begin{subfigure}[c]{0.12\textwidth}
        \centering
        truth \\
        \vspace{1mm}
        \includegraphics[scale=0.05]{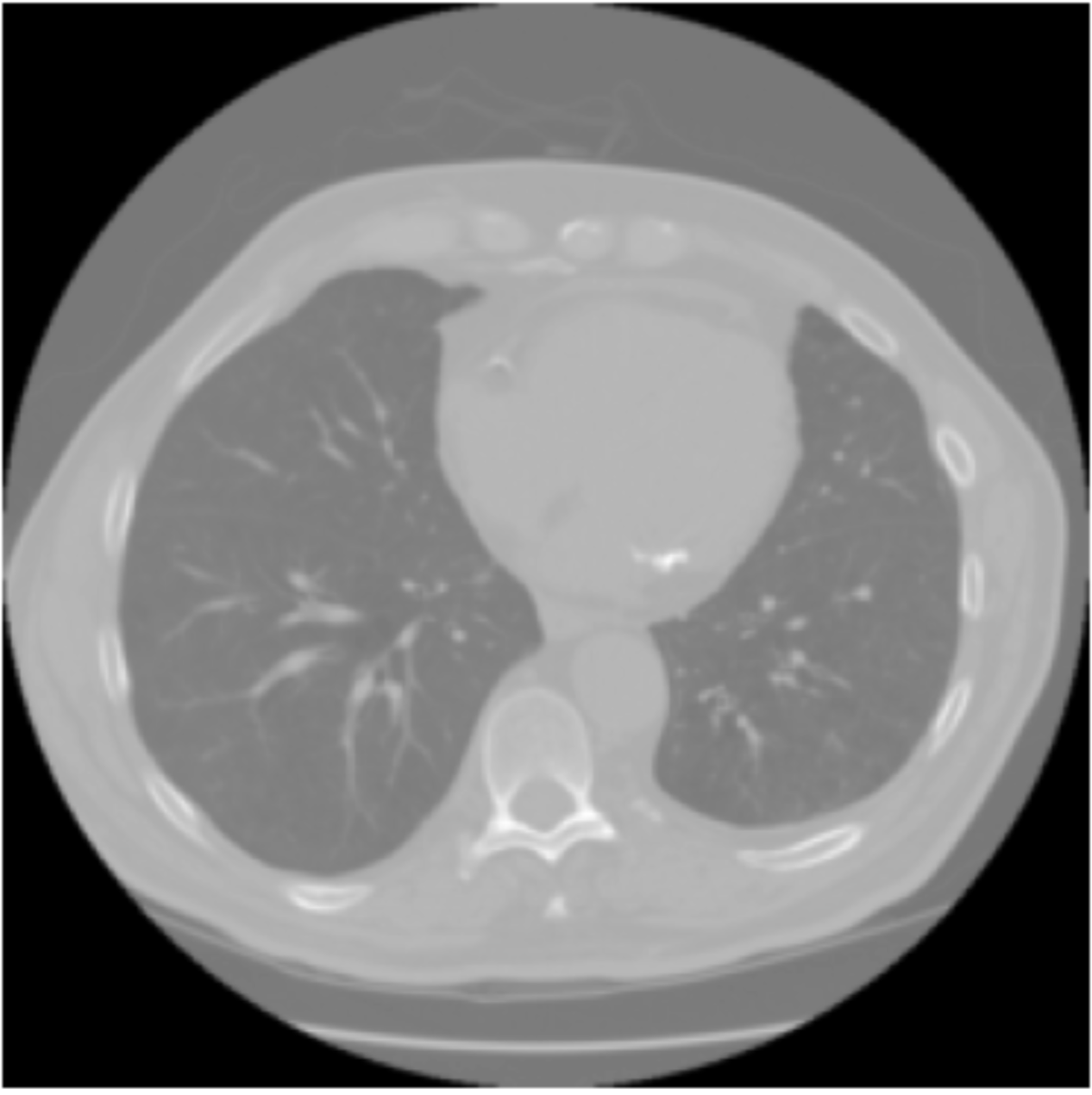} \\ % First image
        measurement \\
        \vspace{1mm}
        \includegraphics[scale=0.09]{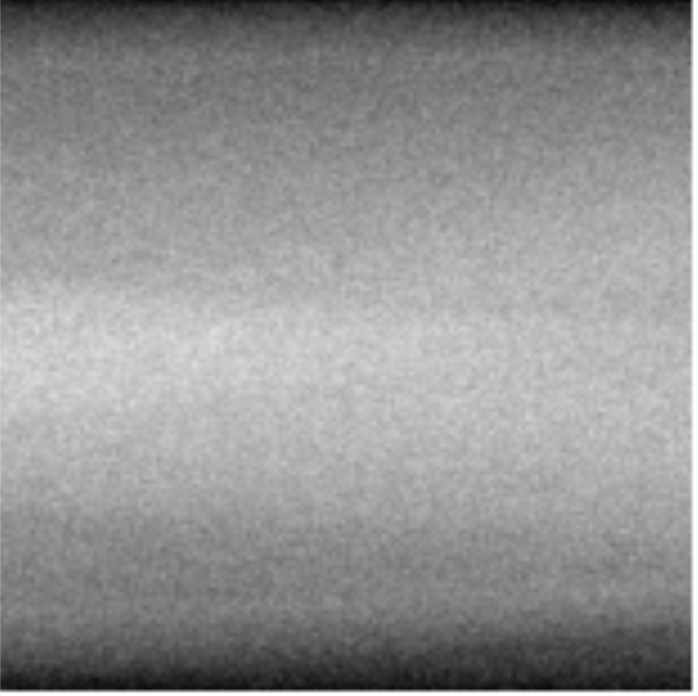}   % Second image
    \end{subfigure}
    % Second column: Single image
    \begin{subfigure}[c]{0.5\textwidth}
        \centering
        \includegraphics[width=\textwidth]{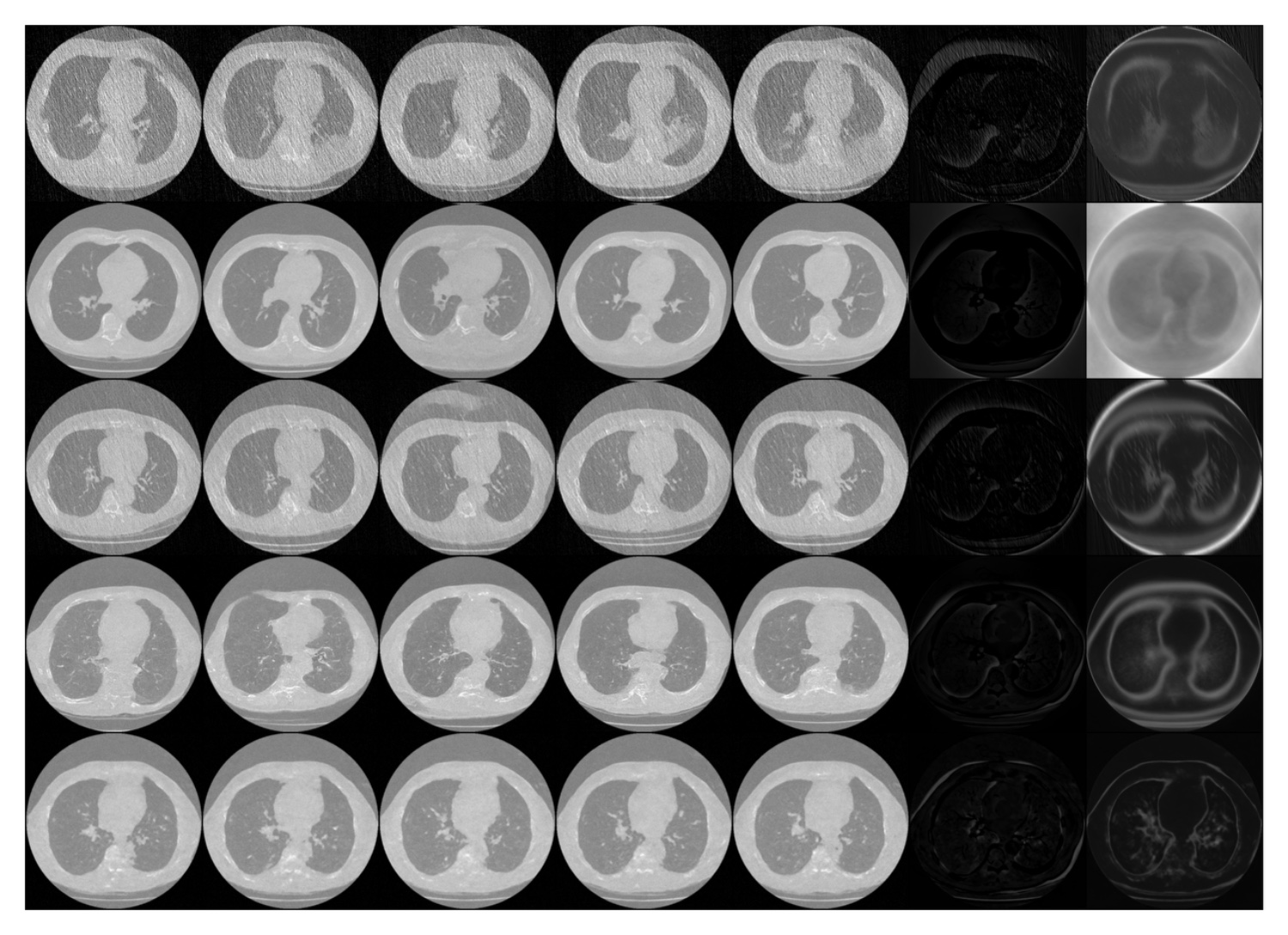}
    \end{subfigure}        
   \begin{subfigure}[c]{0.2\textwidth}
        \vspace{6mm}
        SDE ALD 
        \vspace{7mm}
        \\
        \vspace{7mm}
        DPS \\
        \vspace{7mm}
        Proj \\
        \vspace{7mm}
        Conditional \\
        \vspace{7mm}
        UCoS 
    \end{subfigure}
    \caption{Summary of posterior samples for CT imaging problem. Top figure uses FNO with 32 nodes per layer, middle figure uses 64 nodes per layer and bottom row 128 nodes per layer.
    In each figure, on the left column, the top image is ground truth and the bottom is measurement data (sinogram). Both are fixed for all FNO architectures. On the right column, 5 posterior samples, their bias and standard deviation. Methods used from top to bottom: SDE ALD, DPS, Proj, Conditional, and UCoS.}
    \label{fig: CT posterior samples}
\end{figure}

\begin{table}[h]
\centering
    \begin{tabular}{c  c c c c c c c c c c c}
            \hline
            method &  \multicolumn{3}{c}{\textbf{bias} for k nodes} & & \multicolumn{3}{c} {\textbf{std} for k nodes}  & & \multicolumn{3}{c} {\textbf{time} for k nodes}  \\ 
            \cline{2-4}  \cline{6-8} \cline{10-12} 
            & k=32 & k=64 & k=128 &  &k=32 & k=64 & k=128 &  &k=32 & k=64 & k=128\\
            \hline
            SDE ALD	&  0.1014 & 0.1042 & 0.1044  & & 0.0741 & 0.1064 & 0.0644 &  & 35 & 48 & 77\\
            DPS	 &  0.0605 & 0.0552 & 0.0788 & &0.068 & 0.0768 & 0.1723 & & 69 &99 & 176\\
            Proj &  0.0644 & 0.0610 & 0.0616 && 0.0667 & 0.078 & 0.0702 & & 224 & 237 & 266 \\
            Conditional	&  0.2336 & 0.0621 & \textbf{0.0467} && 0.0923 & 0.0594 & 0.0516 & & \textbf{15} & \textbf{27} & \textbf{57}\\
            UCoS &  \textbf{0.0489} & \textbf{0.0498} & 0.0541 & & \textbf{0.0379} & \textbf{0.0316} & \textbf{0.0262} & & \textbf{15} & \textbf{27} & \textbf{57}\\
            \hline
    \end{tabular}
\caption{Summary statistics of all methods for the CT imaging problem. Left table: bias, middle: std and right table: computational time in minutes. Bias and std are computed on 1000 generated posterior samples and averaged over the number of pixels. In each table, the columns correspond to the number of neurons used per layer.}
\label{tab: CT}
\end{table}

In addition to the previous experiment, for UCoS and conditional method, we train models for numerous number of neurons on a reduced $64 \times 64$ resolution. In Figure \ref{fig: graph nodes vs error}, we compare the number of nodes to the resulting posterior bias and standard deviation for the full $256 \times 256$ resolution (crosses) and the reduced $64 \times 64$ resolution (lines).
We observe that, especially in terms of standard deviation, UCoS achieves saturation with lower model complexity. To provide context, halving number of nodes approximately halves the online sampling time.

\begin{figure}[h]
    \centering
    % First subfigure
    \begin{subfigure}[b]{0.49\linewidth} % Adjust width as needed
        \centering
        \includegraphics[width=0.7\linewidth]{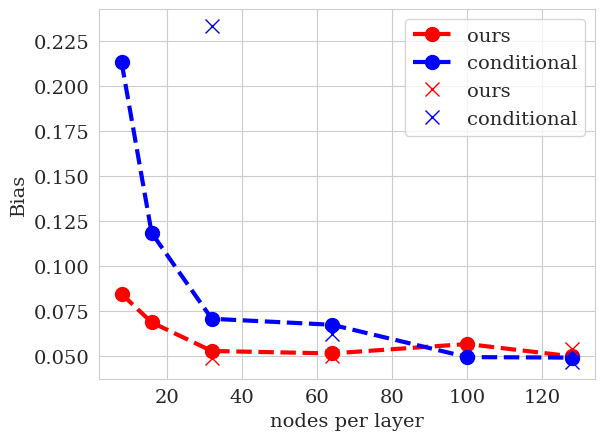}
    \end{subfigure}
    \hfill % Add spacing between figures
    % Second subfigure
    \begin{subfigure}[b]{0.49\linewidth} % Adjust width as needed
        \centering
        \includegraphics[width=0.7\linewidth]{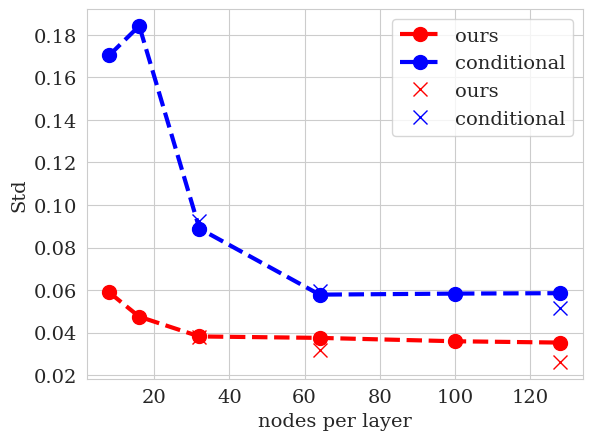}
    \end{subfigure}
    \caption{Comparison of the error dependence of the conditional method and UCoS on the parametrization of the score approximation. X-axis: number of nodes per layer in the FNO, Y-axis: L2 norm of the bias (left) or std (right), averaged over the number of pixels. Dashed lines correspond to $64 \times 64$ resolution and crosses correspond to the problem with $256 \times 256$ resolution.}
    \label{fig: graph nodes vs error}
\end{figure}

\paragraph{Deblurring}
In this example, the forward operator A corresponds to a two-dimensional convolution with a Gaussian kernel over the image domain. We observe the output function on the same grid with an additive Gaussian noise vector.

The training data is given by the Large-scale CelebFaces Attributes (CelebA) Dataset \cite{liu2015faceattributes}, which contains $> 200.000$ face images of celebrities. We scale the images to square resolution $175 \times 175$.

\begin{table}[H]
    \centering
    \begin{tabular}{c  c c c}
            \hline
             method & bias& std & time\\
             \hline
             SDE ALD & 0.0878 & 0.0915 & 300 \\
             
             DPS & 0.0872 & 0.0801 &  239\\
 
             Proj & \textbf{0.0785} & 0.0622 & 15822 \\

             Conditional & 0.2178 & 0.0950 &  \textbf{85}\\
             UCoS & 0.0884 & \textbf{0.0345} & \textbf{85} \\
             \hline
        \end{tabular}
    \caption{Summary statistics of all methods for the Deblurring problem. Bias and std are computed on 1000 generated posterior samples and averaged over the number of pixels. Computational time is given in minutes.}
    \label{tab: deblurring}
\end{table}

We demonstrate in Figure \ref{fig:deblurr posterior samples 128} and Table \ref{tab: deblurring} that UCoS and conditional achieve the fastest sampling speed. For the chosen FNO parameterization, the conditional method produces samples with higher bias and standard deviation as well as low visual quality. Visually, the posterior ensemble $(N = 1000)$  produced by UCoS is consistent with the ground truth, while other methods introduce artifacts that are inconsistent with human facial features. In terms of bias, UCoS is at a roughly similar or slightly better level than unconditional methods, while the pixel-wise standard deviation is noticeably lower than other methods.

\begin{figure}[h]
    \centering
    % First column: Two vertically stacked images
    \begin{subfigure}[c]{0.2\textwidth}
    \centering
    truth
        \includegraphics[width=\textwidth]{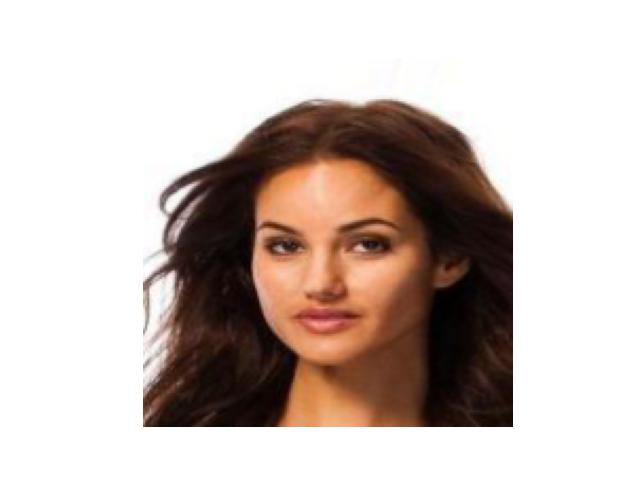} \\ % First image
        \vspace{5pt} % Optional spacing between the two images
        measurement
        \includegraphics[width=\textwidth]{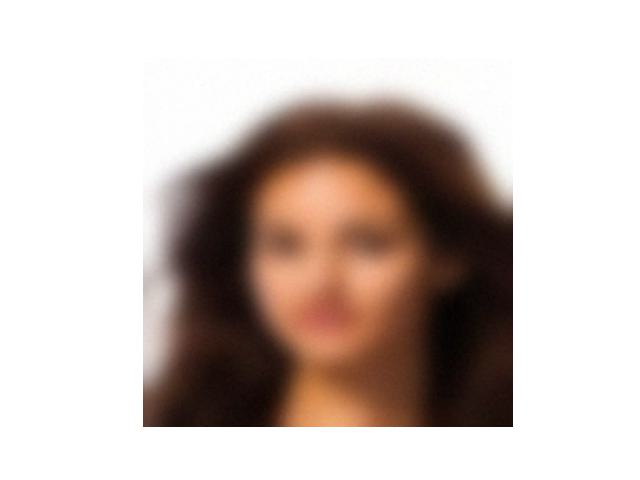}   % Second image
    \end{subfigure}
    % Second column: Single image
    \begin{subfigure}[c]{0.6\textwidth}
        \centering
        \hspace{1.5cm} samples \hspace{3cm} bias \hspace{5mm} std
        \includegraphics[width=\textwidth]{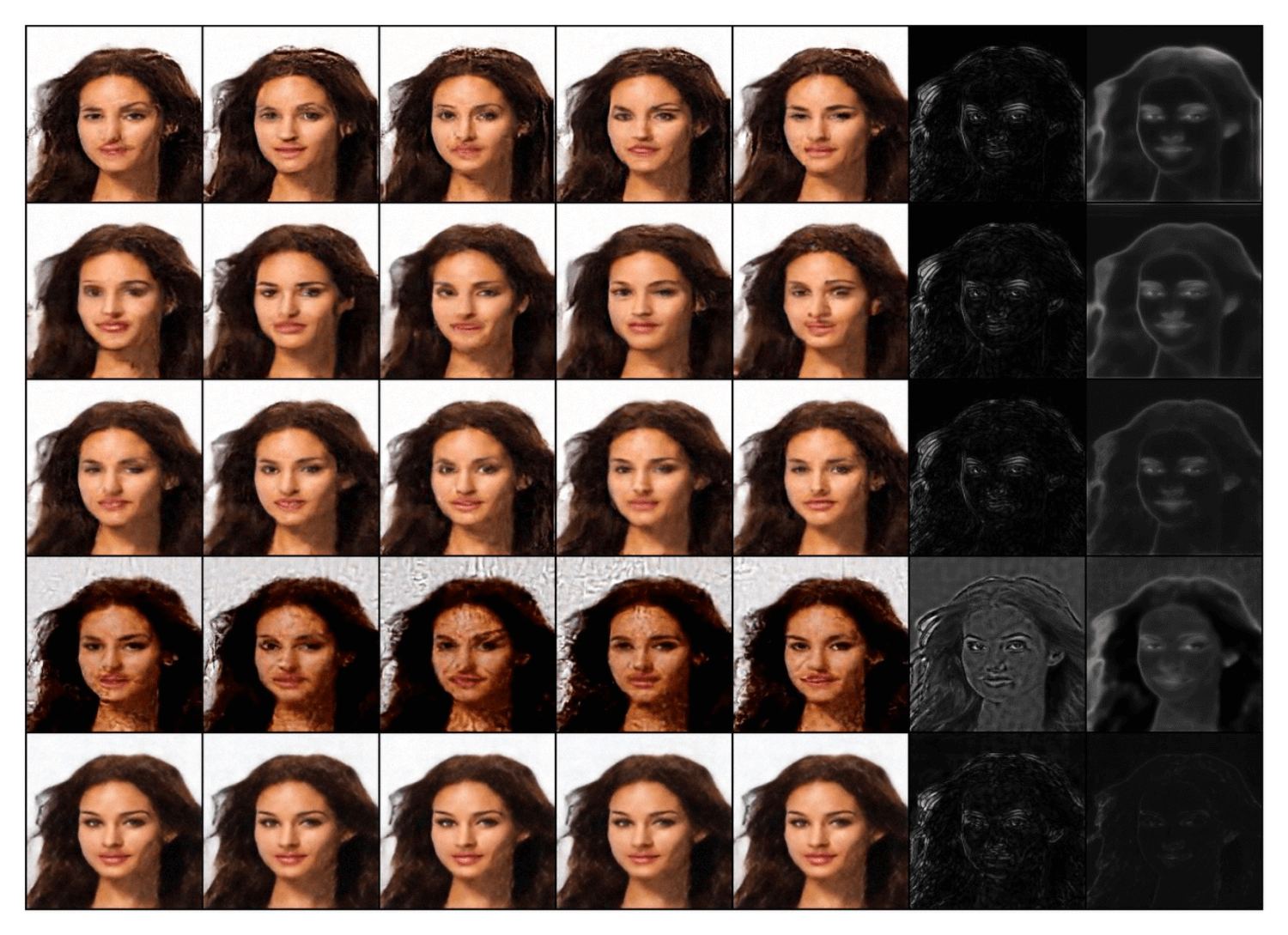}
    \end{subfigure}
    \begin{subfigure}[c]{0.15\textwidth}
        SDE ALD \\
        \vspace{6mm}
        \\
        DPS \\
        \vspace{6mm}
        \\
        Proj \\
        \vspace{6mm}
        \\
        Conditional \\
        \vspace{2mm}
        \\
        UCoS 
    \end{subfigure}
   \caption{Summary of posterior samples for the Deblurring problem. On the left column, the top image is ground truth and the bottom is measurement data. On the right column, 5 posterior samples and their bias and standard deviation, with methods used from top to bottom: SDE ALD, DPS, Proj, Conditional, and UCoS.}
    \label{fig:deblurr posterior samples 128}
\end{figure}

\section{Conclusion, limitations and future work}
This paper addresses the challenge of balancing the computational cost between offline training and online posterior sampling in large-scale inverse problems. In many such problems, evaluating the forward operator during sampling constitutes a major computational bottleneck. We introduce a novel, theoretically grounded method UCoS that entirely removes this burden during sampling by shifting it to the offline training phase, without introducing any approximation error. Our approach combines the computational benefits of the conditional method introduced in \citet{baldassari2024conditional} with a more scalable score-learning formulation. Furthermore, we demonstrate that this method is rigorous in an infinite-dimensional setting and, therefore, independent of discretization.

\fabian{We validate the correctness of our approach with numerical experiments related to CT imaging and Deblurring. UCoS generates samples with approximately comparable or better quality than unconditional approaches while offering tangible computational benefits during sampling. However, it remains task-dependent, requiring re-training when the forward problem is modified. \lam{The quality of posterior samples is assessed using simple statistics such as bias, pixel-wise standard deviation, and visual inspection. Hence, the results should be viewed as proof of concept. Further experiments are needed for a more comprehensive comparison of sample quality.}
While UCoS does not accelerate posterior sampling compared to the conditional method when using score approximation with similar complexity, we show that conditional methods, unlike UCoS, involve more complex function approximations. This complexity can hinder performance, particularly for neural operators with a limited number of parameters.}

The general principle of transforming a task-dependent unconditional score into a conditional score, as developed in this work, extends beyond the specific OU diffusion process considered here. This opens up intriguing questions about selecting the appropriate process to achieve a more efficient balance of computational effort.

\duclam{That said, the theoretical foundation of UCoS relies critically on the interplay between a linear forward model and a Gaussian likelihood, which together enable the diffused posterior to be reinterpreted as a diffused prior under a new, task-specific process. This structure allows us to derive an exact identity between the conditional score and a transformed unconditional score, as captured in Theorem~\ref{thm:score_id1 infinite dimension}. However, extending this exact correspondence beyond settings where one can `complete the square' -- as done in \Eqref{eq:qt_identity} -- appears nontrivial. Investigating whether similar structures can be identified in nonlinear or non-Gaussian problems remains an important avenue for future work.
}

\section*{Acknowledgements}

This work has been supported by the Research Council of Finland (RCoF) through the \emph{Flagship of advanced mathematics for sensing, imaging and modelling} (FAME), decision number 358944. Moreover, TH was supported through RCoF decision numbers 353094 and 348504. M.L. was partially supported by PDE-Inverse project of the European Research Council of the European Union,  the FAME and Finnish Quantum flagships and the grant 336786 of the RCoF. Views and opinions expressed are those of the authors only and do not necessarily reflect those of the European Union or the other funding organizations. Neither the European Union nor the other funding organizations can be held responsible for them. MVdH gratefully acknowledges the support of the Department of Energy BES, under grant DE-SC0020345, Oxy, the corporate members of the Geo-Mathematical Imaging Group at Rice University and the Simons Foundation under the MATH + X Program. The authors acknowledge the National Cancer Institute and the Foundation for the National Institutes of Health, and their critical role in the creation of the free publicly available LIDC/IDRI Database used in this study.

\bibliography{references}

@inproceedings{cardoso2023monte,
  title={Monte Carlo guided Denoising Diffusion models for Bayesian linear inverse problems.},
  author={Cardoso, Gabriel and Le Corff, Sylvain and Moulines, Eric and others},
  booktitle={The Twelfth International Conference on Learning Representations},
  year={2023}
}

@inproceedings{chen2022sampling,
title={Sampling is as easy as learning the score: theory for diffusion models with minimal data assumptions},
author={Sitan Chen and Sinho Chewi and Jerry Li and Yuanzhi Li and Adil Salim and Anru Zhang},
booktitle={The Eleventh International Conference on Learning Representations },
year={2023}
}

@article{de2022convergence,
title={Convergence of denoising diffusion models under the manifold hypothesis},
author={Valentin De Bortoli},
journal={Transactions on Machine Learning Research},
issn={2835-8856},
year={2022},
}

@article{hyvarinen2005estimation,
  title={Estimation of non-normalized statistical models by score matching.},
  author={Hyv{\"a}rinen, Aapo and Dayan, Peter},
  journal={Journal of Machine Learning Research},
  volume={6},
  number={4},
  year={2005}
}

@book{kaipio2006statistical,
  title={Statistical and Computational Inverse Problems},
  author={Kaipio, Jari and Somersalo, Erkki},
  volume={160},
  year={2006},
  publisher={Springer Science \& Business Media}
}

@article{song2019generative,
  title={Generative modeling by estimating gradients of the data distribution},
  author={Song, Yang and Ermon, Stefano},
  journal={Advances in neural information processing systems},
  volume={32},
  year={2019}
}

@inproceedings{sohl2015deep,
  title={Deep unsupervised learning using nonequilibrium thermodynamics},
  author={Sohl-Dickstein, Jascha and Weiss, Eric and Maheswaranathan, Niru and Ganguli, Surya},
  booktitle={International conference on machine learning},
  pages={2256--2265},
  year={2015},
  organization={PMLR}
}

@article{ho2020denoising,
  title={Denoising diffusion probabilistic models},
  author={Ho, Jonathan and Jain, Ajay and Abbeel, Pieter},
  journal={Advances in neural information processing systems},
  volume={33},
  pages={6840--6851},
  year={2020}
}

@inproceedings{chung2023solving,
  title={Solving 3d inverse problems using pre-trained 2d diffusion models},
  author={Chung, Hyungjin and Ryu, Dohoon and McCann, Michael T and Klasky, Marc L and Ye, Jong Chul},
  booktitle={Proceedings of the IEEE/CVF Conference on Computer Vision and Pattern Recognition},
  pages={22542--22551},
  year={2023}
}

@article{barbano2023steerable,
  title={Steerable Conditional Diffusion for Out-of-Distribution Adaptation in Medical Image Reconstruction},
  author={Barbano, Riccardo and Denker, Alexander and Chung, Hyungjin and Roh, Tae Hoon and Arridge, Simon and Maass, Peter and Jin, Bangti and Ye, Jong Chul},
  journal={IEEE Transactions on Medical Imaging},
  year={2025},
  publisher={IEEE}
}

@inproceedings{levac2023mri,
  title={MRI reconstruction with side information using diffusion models},
  author={Levac, Brett and Jalal, Ajil and Ramchandran, Kannan and Tamir, Jonathan I},
  booktitle={2023 57th Asilomar Conference on Signals, Systems, and Computers},
  pages={1436--1442},
  year={2023},
  organization={IEEE}
}

@inproceedings{dou2023diffusion,
  title={Diffusion posterior sampling for linear inverse problem solving: A filtering perspective},
  author={Dou, Zehao and Song, Yang},
  booktitle={The Twelfth International Conference on Learning Representations},
  year={2023}
}

@article{lassas2004can,
  title="{Can one use total variation prior for edge-preserving Bayesian inversion?}",
  author={Lassas, Matti and Siltanen, Samuli},
  journal={Inverse problems},
  volume={20},
  number={5},
  pages={1537},
  year={2004},
  publisher={IOP Publishing}
}

@article{lassas2009,
title = "{Discretization-invariant Bayesian inversion
and Besov space priors}",
journal = {Inverse Problems and Imaging},
volume = {3},
number = {1},
pages = {87-122},
year = {2009},
issn = {1930-8337},
doi = {10.3934/ipi.2009.3.87},
author = {Matti Lassas and Eero Saksman and Samuli Siltanen},
}

@article{lehtinen1989linear,
  title={Linear inverse problems for generalised random variables},
  author={Lehtinen, Markku S and Paivarinta, Lassi and Somersalo, Erkki},
  journal={Inverse Problems},
  volume={5},
  number={4},
  pages={599},
  year={1989},
  publisher={IOP Publishing}
}

@article{hagemann2023multilevel,
author = {Hagemann, Paul and Mildenberger, Sophie and Ruthotto, Lars and Steidl, Gabriele and Yang, Nicole Tianjiao},
title = {Multilevel Diffusion: Infinite Dimensional Score-Based Diffusion Models for Image Generation},
journal = {SIAM Journal on Mathematics of Data Science},
volume = {7},
number = {3},
pages = {1337-1366},
year = {2025}
}

@article{lim2023score,
  author  = {Jae Hyun Lim and Nikola B. Kovachki and Ricardo Baptista and Christopher Beckham and Kamyar Azizzadenesheli and Jean Kossaifi and Vikram Voleti and Jiaming Song and Karsten Kreis and Jan Kautz and Christopher Pal and Arash Vahdat and Anima Anandkumar},
  title   = {Score-Based Diffusion Models in Function Space},
  journal = {Journal of Machine Learning Research},
  year    = {2025},
  volume  = {26},
  number  = {158},
  pages   = {1--62}
}

@article{pidstrigach2023infinitedimensional,
  author  = {Jakiw Pidstrigach and Youssef Marzouk and Sebastian Reich and Sven Wang},
  title   = {Infinite-Dimensional Diffusion Models},
  journal = {Journal of Machine Learning Research},
  year    = {2024},
  volume  = {25},
  number  = {414},
  pages   = {1--52}
}

@article{stuart2010inverse,
  title={Inverse problems: a Bayesian perspective},
  author={Stuart, Andrew M},
  journal={Acta numerica},
  volume={19},
  pages={451--559},
  year={2010},
  publisher={Cambridge University Press}
}

@misc{hairer2023introductionSPDE,
      title={An Introduction to Stochastic PDEs}, 
      author={Martin Hairer},
      year={2023},
      eprint={0907.4178},
      archivePrefix={arXiv},
      primaryClass={math.PR}
}

@book{da2006introduction,
  title={An Introduction to Infinite-Dimensional Analysis},
  author={Da Prato, G.},
  isbn={9783540290216},
  lccn={2006924566},
  series={Universitext},
  year={2006},
  publisher={Springer Berlin Heidelberg}
}

@book{da2014stochastic,
  title={Stochastic equations in infinite dimensions},
  author={Da Prato, Giuseppe and Zabczyk, Jerzy},
  year={2014},
  publisher={Cambridge University Press}
}

@inproceedings{
  song2021scorebasedDiffusion,
  title={Score-Based Generative Modeling through Stochastic Differential Equations},
  author={Yang Song and Jascha Sohl-Dickstein and Diederik P Kingma and Abhishek Kumar and Stefano Ermon and Ben Poole},
  booktitle={International Conference on Learning Representations},
  year={2021}
}

@article{Vincent2011ACB-Loss-function,
  title={A Connection Between Score Matching and Denoising Autoencoders},
  author={Pascal Vincent},
  journal={Neural Computation},
  year={2011},
  volume={23},
  pages={1661-1674}
}

@article{ANDERSON1982-BSDE,
	author = {Brian D.O. Anderson},
	issn = {0304-4149},
	journal = {Stochastic Processes and their Applications},
	number = {3},
	pages = {313-326},
	title = {Reverse-time diffusion equation models},
	volume = {12},
	year = {1982}
}

@article{baldassari2024conditional,
  title={Conditional score-based diffusion models for Bayesian inference in infinite dimensions},
  author={Baldassari, Lorenzo and Siahkoohi, Ali and Garnier, Josselin and Solna, Knut and de Hoop, Maarten V},
  journal={Advances in Neural Information Processing Systems},
  volume={36},
  year={2024}
}

@InProceedings{chen2023improved,
  title = 	 {Improved Analysis of Score-based Generative Modeling: User-Friendly Bounds under Minimal Smoothness Assumptions},
  author =       {Chen, Hongrui and Lee, Holden and Lu, Jianfeng},
  booktitle = 	 {Proceedings of the 40th International Conference on Machine Learning},
  pages = 	 {4735--4763},
  year = 	 {2023},
  volume = 	 {202},
  series = 	 {Proceedings of Machine Learning Research},
  date = 	 {23--29 Jul},
  publisher =    {PMLR},
}

@incollection{FactsForInfiniteDimOU,
title = {Infinite Dimensional Ornstein-Uhlenbeck Processes},
editor = {Kiyosi Itô},
series = {North-Holland Mathematical Library},
publisher = {Elsevier},
volume = {32},
pages = {197-224},
year = {1984},
booktitle = {Stochastic Analysis},
issn = {0924-6509},
author = {Kiyosi Itô}
}

@inproceedings{song2022solving,
title={Solving Inverse Problems in Medical Imaging with Score-Based Generative Models},
author={Yang Song and Liyue Shen and Lei Xing and Stefano Ermon},
booktitle={International Conference on Learning Representations},
year={2022}
}

@inproceedings{graikos2023diffusionplugandplay,
title={Diffusion Models as Plug-and-Play Priors},
author={Alexandros Graikos and Nikolay Malkin and Nebojsa Jojic and Dimitris Samaras},
booktitle={Advances in Neural Information Processing Systems},
year={2022}
}

@article{jalal2021robustannealedlangevian,
  title={Robust compressed sensing mri with deep generative priors},
  author={Jalal, Ajil and Arvinte, Marius and Daras, Giannis and Price, Eric and Dimakis, Alexandros G and Tamir, Jon},
  journal={Advances in Neural Information Processing Systems},
  volume={34},
  pages={14938--14954},
  year={2021}
}

@inproceedings{chung2023diffusion,
title={Diffusion Posterior Sampling for General Noisy Inverse Problems},
author={Hyungjin Chung and Jeongsol Kim and Michael Thompson Mccann and Marc Louis Klasky and Jong Chul Ye},
booktitle={The Eleventh International Conference on Learning Representations },
year={2023}
}

@article{sun2023provable,
  title={Provable probabilistic imaging using score-based generative priors},
  author={Sun, Yu and Wu, Zihui and Chen, Yifan and Feng, Berthy T and Bouman, Katherine L},
  journal={arXiv preprint arXiv:2310.10835},
  year={2023}
}

@INPROCEEDINGS{feng2023scorebased,
  author={Feng, Berthy T. and Smith, Jamie and Rubinstein, Michael and Chang, Huiwen and Bouman, Katherine L. and Freeman, William T.},
  booktitle={2023 IEEE/CVF International Conference on Computer Vision (ICCV)}, 
  title={Score-Based Diffusion Models as Principled Priors for Inverse Imaging}, 
  year={2023},
  volume={},
  number={},
  pages={10486-10497}}

@ARTICLE{saharia2021image,
  author={Saharia, Chitwan and Ho, Jonathan and Chan, William and Salimans, Tim and Fleet, David J. and Norouzi, Mohammad},
  journal={IEEE Transactions on Pattern Analysis and Machine Intelligence}, 
  title={Image Super-Resolution via Iterative Refinement}, 
  year={2023},
  volume={45},
  number={4},
  pages={4713-4726}
}

@misc{batzolis2021conditional,
      title={Conditional Image Generation with Score-Based Diffusion Models}, 
      author={Georgios Batzolis and Jan Stanczuk and Carola-Bibiane Schönlieb and Christian Etmann},
      year={2021},
      eprint={2111.13606},
      archivePrefix={arXiv},
      primaryClass={cs.LG},
      url={https://arxiv.org/abs/2111.13606}
}

@article{hairer2006analysis,
author = {Hairer, Martin and Stuart, A. and Voss, J. and Wiberg, P.},
year = {2006},
month = {01},
pages = {},
title = {Analysis of SPDEs arising in path sampling. Part I: The Gaussian case},
volume = {3},
journal = {Communications in Mathematical Sciences},
}

@book{DaleckijJurijL1991Made,
series = {Mathematics and its applications : Soviet series ; 76},
publisher = {Kluwer},
isbn = {9789401051484},
year = {1991},
title = {Measures and differential equations in infinite-dimensional space},
language = {eng},
address = {Dordrecht [u.a.]},
author = {Daleckij, Jurij L},
keywords = {Maßtheorie ; Differentialgleichung ; Unendlichdimensionaler Raum},
}

@inproceedings{Dey_2024,
   title={Score-based Diffusion Models for Photoacoustic Tomography Image Reconstruction},
   url={http://dx.doi.org/10.1109/ICASSP48485.2024.10447579},
   DOI={10.1109/icassp48485.2024.10447579},
   booktitle={ICASSP 2024 - 2024 IEEE International Conference on Acoustics, Speech and Signal Processing (ICASSP)},
   publisher={IEEE},
   author={Dey, Sreemanti and Saha, Snigdha and Feng, Berthy T. and Cui, Manxiu and Delisle, Laure and Leong, Oscar and Wang, Lihong V. and Bouman, Katherine L.},
   year={2024},
   month=apr }

@article{holzschuh2023solving,
  title={Solving Inverse Physics Problems with Score Matching},
  author={Holzschuh, Benjamin and Vegetti, Simona and Thuerey, Nils},
  journal={Advances in Neural Information Processing Systems},
  volume={36},
  year={2023}
}

@inproceedings{song2023pseudoinverseguided,
title={Pseudoinverse-Guided Diffusion Models for Inverse Problems},
author={Jiaming Song and Arash Vahdat and Morteza Mardani and Jan Kautz},
booktitle={International Conference on Learning Representations},
year={2023}
}

@article{Adam2022Posteriorsamples,
  publtype={informal},
  author={Alexandre Adam and Adam Coogan and Nikolay Malkin and Ronan Legin and Laurence Perreault Levasseur and Yashar Hezaveh and Yoshua Bengio},
  title={Posterior samples of source galaxies in strong gravitational lenses with score-based priors},
  year={2022},
  cdate={1640995200000},
  journal={CoRR},
  volume={abs/2211.03812}
}

@article{CHUNG2022102479,
title = {Score-based diffusion models for accelerated MRI},
journal = {Medical Image Analysis},
volume = {80},
pages = {102479},
year = {2022},
issn = {1361-8415},
doi = {https://doi.org/10.1016/j.media.2022.102479},
author = {Hyungjin Chung and Jong Chul Ye},
keywords = {Score-based models, Diffusion models, Inverse problems, MRI},
abstract = {Score-based diffusion models provide a powerful way to model images using the gradient of the data distribution. Leveraging the learned score function as a prior, here we introduce a way to sample data from a conditional distribution given the measurements, such that the model can be readily used for solving inverse problems in imaging, especially for accelerated MRI. In short, we train a continuous time-dependent score function with denoising score matching. Then, at the inference stage, we iterate between the numerical SDE solver and data consistency step to achieve reconstruction. Our model requires magnitude images only for training, and yet is able to reconstruct complex-valued data, and even extends to parallel imaging. The proposed method is agnostic to sub-sampling patterns and has excellent generalization capability so that it can be used with any sampling schemes for any body parts that are not used for training data. Also, due to its generative nature, our approach can quantify uncertainty, which is not possible with standard regression settings. On top of all the advantages, our method also has very strong performance, even beating the models trained with full supervision. With extensive experiments, we verify the superiority of our method in terms of quality and practicality.}
}

@INPROCEEDINGS{Chung2022ComeCloser,
  author={Chung, Hyungjin and Sim, Byeongsu and Ye, Jong Chul},
  booktitle={2022 IEEE/CVF Conference on Computer Vision and Pattern Recognition (CVPR)}, 
  title={Come-Closer-Diffuse-Faster: Accelerating Conditional Diffusion Models for Inverse Problems through Stochastic Contraction}, 
  year={2022},
  volume={},
  number={},
  pages={12403-12412}}

@article{armato2015lidc,
author = {Armato 3rd, Samuel G. and McLennan, Geoffrey and Bidaut, Luc and McNitt-Gray, Michael F. and Meyer, Charles R. and Reeves, Anthony P. and Zhao, Binsheng and Aberle, Denise R. and Henschke, Claudia I. and Hoffman, Eric A. and Kazerooni, Ella A. and MacMahon, Heber and van Beek, Edwin J. R. and Yankelevitz, David and Biancardi, Alberto M. and Bland, Peyton H. and Brown, Matthew S. and Engelmann, Roger M. and Laderach, Gary E. and Max, Daniel and Pais, Richard C. and Qing, David P.-Y. and Roberts, Rachael Y. and Smith, Amanda R. and Starkey, Adam and Batra, Poonam and Caligiuri, Philip and Farooqi, Ali and Gladish, Gregory W. and Jude, C. Matilda and Munden, Reginald F. and Petkovska, Iva and Quint, Leslie E. and Schwartz, Lawrence H. and Sundaram, Baskaran and Dodd, Lori E. and Fenimore, Charles and Gur, David and Petrick, Nicholas and Freymann, John and Kirby, Justin and Hughes, Brian and Vande Casteele, Alessi and Gupte, Sangeeta and Sallam, Maha and Heath, Michael D. and Kuhn, Michael H. and Dharaiya, Ekta and Burns, Richard and Fryd, David S. and Salganicoff, Marcos and Anand, Vikram and Shreter, Uri and Vastagh, Stephen and Croft, Barbara Y. and Clarke, Laurence P.},
title = {The Lung Image Database Consortium (LIDC) and Image Database Resource Initiative (IDRI): A Completed Reference Database of Lung Nodules on CT Scans},
journal = {Medical Physics},
volume = {38},
number = {2},
pages = {915-931},
keywords = {Computed tomography, Computer-aided diagnosis, Digital radiography, computerised tomography, diagnostic radiography, lung, medical computing, visual databases, XML, lung nodule, computed tomography (CT), thoracic imaging, interobserver variability, computer-aided diagnosis (CAD), Radiologists, Databases, Medical imaging, Computed tomography, Lungs, Cancer, Computer aided diagnosis, Computer software, Medical image reconstruction, Medical image quality},
year = {2011}
}

@inproceedings{
wu2024principled,
title={Principled Probabilistic Imaging using Diffusion Models as Plug-and-Play Priors},
author={Zihui Wu and Yu Sun and Yifan Chen and Bingliang Zhang and Yisong Yue and Katherine Bouman},
booktitle={The Thirty-eighth Annual Conference on Neural Information Processing Systems},
year={2024}
}

@inproceedings{
feng2023efficient,
title={Efficient Bayesian Computational Imaging with a Surrogate Score-Based Prior},
author={Berthy Feng and Katherine Bouman},
booktitle={NeurIPS 2023 Workshop on Deep Learning and Inverse Problems},
year={2023},
url={https://openreview.net/forum?id=l4ki2nsrwS}
}

@inproceedings{
song2024solving,
title={Solving Inverse Problems with Latent Diffusion Models via Hard Data Consistency},
author={Bowen Song and Soo Min Kwon and Zecheng Zhang and Xinyu Hu and Qing Qu and Liyue Shen},
booktitle={The Twelfth International Conference on Learning Representations},
year={2024}
}

@article{baldassari2024taming,
  title={Taming Score-Based Diffusion Priors for Infinite-Dimensional Nonlinear Inverse Problems},
  author={Baldassari, Lorenzo and Siahkoohi, Ali and Garnier, Josselin and Solna, Knut and de Hoop, Maarten V},
  journal={arXiv preprint arXiv:2405.15676},
  year={2024}
}

@article{wu2024practical,
  title={Practical and asymptotically exact conditional sampling in diffusion models},
  author={Wu, Luhuan and Trippe, Brian and Naesseth, Christian and Blei, David and Cunningham, John P},
  journal={Advances in Neural Information Processing Systems},
  volume={36},
  year={2024}
}

@inproceedings{
kveton2024online,
title={Online Posterior Sampling with a Diffusion Prior},
author={Branislav Kveton and Boris N. Oreshkin and Youngsuk Park and Aniket Anand Deshmukh and Rui Song},
booktitle={The Thirty-eighth Annual Conference on Neural Information Processing Systems},
year={2024}
}

@article{meng2022diffusion,
  title={Diffusion model based posterior sampling for noisy linear inverse problems},
  author={Meng, Xiangming and Kabashima, Yoshiyuki},
  journal={arXiv preprint arXiv:2211.12343},
  year={2022}
}

@inproceedings{liu2015faceattributes,
  title = {Deep Learning Face Attributes in the Wild},
  author = {Liu, Ziwei and Luo, Ping and Wang, Xiaogang and Tang, Xiaoou},
  booktitle = {Proceedings of International Conference on Computer Vision (ICCV)},
  month = {December},
  year = {2015} 
}

@article{denker2024deft,
  title={DEFT: Efficient Fine-tuning of Diffusion Models by Learning the Generalised $h$-transform},
  author={Denker, Alexander and Vargas, Francisco and Padhy, Shreyas and Didi, Kieran and Mathis, Simon and Barbano, Riccardo and Dutordoir, Vincent and Mathieu, Emile and Komorowska, Urszula Julia and Lio, Pietro},
  journal={Advances in Neural Information Processing Systems},
  volume={37},
  pages={19636--19682},
  year={2024}
}

@article{li2021fourier,
  title={Fourier neural operator for parametric partial differential equations},
  author={Li, Zongyi and Kovachki, Nikola and Azizzadenesheli, Kamyar and Liu, Burigede and Bhattacharya, Kaushik and Stuart, Andrew and Anandkumar, Anima},
  journal={International Conference on Learning Representations},
  year={2021}
}

@article{daras2024survey,
  title={A survey on diffusion models for inverse problems},
  author={Daras, Giannis and Chung, Hyungjin and Lai, Chieh-Hsin and Mitsufuji, Yuki and Ye, Jong Chul and Milanfar, Peyman and Dimakis, Alexandros G and Delbracio, Mauricio},
  journal={arXiv preprint arXiv:2410.00083},
  year={2024}
}

@misc{yao2025guideddiffusionsamplingfunction,
      title={Guided Diffusion Sampling on Function Spaces with Applications to PDEs}, 
      author={Jiachen Yao and Abbas Mammadov and Julius Berner and Gavin Kerrigan and Jong Chul Ye and Kamyar Azizzadenesheli and Anima Anandkumar},
      year={2025},
      eprint={2505.17004},
      archivePrefix={arXiv},
      primaryClass={cs.LG},
      url={https://arxiv.org/abs/2505.17004}, 
}

@book{Mueller2012Linearandnonlinear,
series = {Computational science and engineering series},
publisher = {SIAM, Society for Industrial and Applied Mathematics},
isbn = {1611972345},
year = {2012},
title = {Linear and nonlinear inverse problems with practical applications},
language = {eng},
address = {Philadelphia},
author = {Mueller, Jennifer and Siltanen, Samuli}
}

@misc{chen2025solvinginverseproblemsdiffusionbased,
      title={Solving Inverse Problems via Diffusion-Based Priors: An Approximation-Free Ensemble Sampling Approach}, 
      author={Haoxuan Chen and Yinuo Ren and Martin Renqiang Min and Lexing Ying and Zachary Izzo},
      year={2025},
      eprint={2506.03979},
      archivePrefix={arXiv},
      primaryClass={cs.LG},
      url={https://arxiv.org/abs/2506.03979}, 
}
\bibliographystyle{tmlr} 

%%%% Appendix %%%
\newpage
\appendix

\section{Probability measures on Hilbert spaces}
\label{Appendix:prob_measures}

\subsection{Gaussian random processes in an infinite-dimensional Hilbert space} %temporary title

This section introduces notations and outlines some basic properties of probability measures on Hilbert spaces. For a more comprehensive introduction, we refer to \citet{da2014stochastic,hairer2023introductionSPDE}.

\paragraph{Gaussian measures on Hilbert space}
Let $(H,  \langle \cdot, \cdot \rangle_H)$ be a separable Hilbert space with norm $\|\cdot\|_H = \sqrt{ \langle \cdot, \cdot \rangle_H}$. 
A bounded linear operator $C: H \to H$ is called \emph{self-adjoint} if $\langle x, C y \rangle_H = \langle C x, y \rangle_H $ for all $x,y \in H$ and \emph{positive definite} if $\langle C x, x \rangle_H \ge 0 $ for all $x\in H$. We say that a self-adjoint and positive definite operator $C$ is of \emph{trace class} if  
\begin{equation*}
    \tr_H(C):= \sum_{n=1}^\infty \langle Ce_n, e_n \rangle < \infty,
\end{equation*}
where $\{e_n\}$ is an orthogonal basic of $H$. We denote by $L_1^+(H)$ the space of all self-adjoint, positive definite and trace class operators on $H$.
A random variable taking values in $H$ is called \emph{Gaussian} if the law of $\langle h, X \rangle_H $ is Gaussian for each $h\in H$. Gaussian random variables are determined by their \emph{mean} $m = \mathbb E[X] \in H$ and their \emph{covariance operator} defined as
\begin{equation*}
    \langle C g, h \rangle = \mathbb E \left[\langle g, X - m \rangle\langle h, X - m \rangle\right].
\end{equation*}
In this case, we denote $X \sim \mathcal{N}(m, C)$. If $m=0$, $X$ is called \emph{centred}. It can be shown that if $X$ is a Gaussian random variable on $H$ then $C \in L_1^+(H)$, moreover, $\mathbb E[\|X\|_H^2] = \tr_H(C)$.

\paragraph{The Cameron--Martin space} We define the Cameron--Martin space associated with a Gaussian measure $\mu = \mathcal{N}(0, C)$ on $H$ to be the intersection of all linear spaces of full measure under $\mu$, and denote it by $H_\mu$ or $H_C$. It can be shown that $H_\mu = C^{1/2} H$ and $H_\mu$ is compactly embedded and dense in $H$. In infinite dimensions, it is necessarily the case that $\mu(H_\mu) = 0$. Moreover, $H_\mu$ can be endowed with a Hilbert space structure with an inner product
\begin{equation*}
    \langle g, h\rangle_{H_\mu} = \langle g, C^{-1}h\rangle_H = \langle C^{-1/2}g, C^{-1/2}h\rangle_H.
\end{equation*}
\begin{example}
    Let $H=\R^d$ and $\mu = \mathcal{N}(0,C)$ be a Gaussian measure on $H$ with a positive definite covariance matrix $C \in \R^{d\times d}$. Then since $C^{1/2}H = H$, the Cameron--Martin space is the whole space $\R^d$. 
\end{example}
% \begin{example}
%     Let $H = L^2([0,1])$ and consider the Wiener process $B = (B_t: t\in [0,1])$ taking values in $H$ and satisfying $B_t(0)=0$. In non-rigorous manner, the probability density function of the Wiener process is formally written as $\pi_B(b)=c\exp(-\|\partial_tb\|_{L^2(0,1)}^2/2)$. The Cameron--Martin space of $B$ is given as  $\{b(t)\in H^1([0,1]): b(0)=0\}$, where  $H^1([0,1])$ is the Sobolev space of weakly differentiable functions on $[0,1]$. It is well-known that sample paths of $B$ are almost surely nowhere differentiable, thus they do not lie in $H^1([0,1])$ almost surely. Therefore $\mu(H^1([0,1])) = 0$ where $\mu$ is the Wiener measure that describes $B$.
% \end{example}
\paragraph{Cameron--Martin's theorem} The Cameron–Martin space $H_\mu$ plays a special role in that it characterises precisely those directions in which one can translate the Gaussian measure $\mu$ without changing its null sets, thus $\mu_h:= \mathcal{N}(h, C)$ and $\mu = \mathcal{N}(0, C)$ are equivalent if and only if $h\in H_\mu$. Moreover, the Radon--Nikodym derivative of $\mu_h$ with respect to $\mu$ is given by
\begin{equation}\label{CM-theorem}
    \frac{\dd\mu_h}{\dd\mu}(x) = \exp\left( \langle h, x\rangle_{H_\mu} - \frac{1}{2}\|h\|_{H_\mu}^2 \right), \quad \mu\text{-a.s.}, x\in H.
\end{equation}
Note that since $H_\mu=C^{1/2}H$ is dense in $H$, the random variable $\langle h, x\rangle_{H_\mu} = \langle C^{-1/2}h, C^{-1/2}x\rangle_H$, $x\in H$, can be defined properly using a limiting process, see Remark 2.24 in  \citet{da2014stochastic}.

\subsection{Wiener processes $B_t$ in an infinite-dimensional Hilbert space}
\label{subsec: Wiener process definition}
\label{appendix:wiener_process}

In this subsection, we recall the definition of the Wiener processes $B_t$ in an infinite-dimensional Hilbert space.  As $H$ is an infinite-dimensional Hilbert space, the process 
$B_t$ can not be defined as random variable taking values in the Hilbert space $H$, but we need
to start our considerations with a larger Hilbert space $Y$ having the additional properties
that are described below.
At each time $t\in [0,T]$ the process $B_t$ is defined to be a Gaussian random
variable taking values in the Hilbert space $Y$ so that
 the expectation of $B_t$ is zero and the covariance operator  is
$tC_Y$, where $C_Y\colon Y\to Y$ is a symmetric and injective trace-class operator in $Y$.
That is, for $\phi,\psi\in Y$ and $0\le t<t+s\le T$ we have
\begin{eqnarray*}
& &    \mathbb E(\langle B_t, \phi\rangle_Y)=0,\\
& &\mathbb E( \langle B_t, \phi\rangle_Y\cdot 
\langle B_t, \psi\rangle_Y)=t\langle \phi,C_Y \phi\rangle_Y,
\\
& &\mathbb E( \langle B_{t+s}-B_t, \phi\rangle_Y\cdot\langle B_{t+s}-B_t, \psi\rangle_Y)=s\langle \phi,C_Y \phi\rangle_Y.
\end{eqnarray*}

We require that $Y$ and $C_Y$ are such that
\begin{equation*}
H=\hbox{Ran}(C_Y^{1/2})=C_Y^{1/2}(Y).
\end{equation*}
Under the above assumptions, $H$ is the Cameron-Martin 
of the Gaussian random variable $B_t$ for $t>0$. Then, we can define for $b\in Y$ and $\phi\in C_Y(Y)$ an
extension of $H$-inner product $\langle u, \phi\rangle_H$
by setting
$$
\langle u, \phi\rangle_H:=\langle u,C_Y^{-1} \phi\rangle_Y.
$$
Using these definitions,
it holds that for all $\phi,\psi\in C_Y(Y)\subset H$ and $0\le t<t+s\le T$ we have
\begin{eqnarray*}
& &    \mathbb E(\langle B_t, \phi\rangle_H)=0,\\
& &\mathbb E( \langle B_t, \phi\rangle_H\cdot 
\langle B_t, \psi\rangle_H)=t\langle \phi, \phi\rangle_H,
\\
& &\mathbb E( \langle B_{t+s}-B_t, \phi\rangle_H\cdot 
\langle B_{t+s}-B_t, \psi\rangle_H)=s\langle \phi, \phi\rangle_Y.
\end{eqnarray*}
Motivated by these formulas, we call $B_t$, $0\le t\le T$, a Wiener process in $H$
having the (generalized) covariance operator $tI$.

We also note that in formula \Eqref{eq: diffusion model; initiated from posterior}, we use the symmetric trace-class operator $\mathcal{C}\colon H \to H$ and the increments $\mathcal{C}^{1/2} dB_t$. These increments can be interpreted as the differences of the random process $t \mapsto \mathcal{C}^{1/2} B_t$. At any time $0 \le t \le T$, the random variable $\mathcal{C}^{1/2} B_t$ takes values in the space $H$, and its covariance operator in $H$ is $\mathcal{C}\colon H \to H$.

\section{Proofs for Section \ref{sec:UCoS}}

% \subsection{Completing the square}
% \label{sec:completing_square}
% \begin{proof}[Proof of \Eqref{eq:qt_identity}] 
% We first write
% \begin{equation}
%     \label{eq:Az-y}
%     \begin{aligned}
%     \norm{Az-y}^2_\Gamma = & \left\langle \Gamma^{-1}(Az - y), Az - y \right \rangle \\
%     = & z^\top A^\top \Gamma^{-1} Az - z^\top A^\top \Gamma^{-1} y - y^\top \Gamma^{-1} Az + y^\top \Gamma^{-1} y.
%     \end{aligned}
% \end{equation}
% Now since the transitional probability for the multivariate OU process is given by $\rho_t(x | x_0) = {\mathcal N}(e^{-t/2}x_0, (1-e^{-t})\C)$, we have
% \begin{equation}
%     \label{eq:transition_prop}
%     \begin{aligned}
%         \rho_t(z | x) = & \frac{1}{(2\pi)^{n/2}[\det \C (1 - e^{-t})]^{1/2}} \exp \left( -\frac{1}{2} \left(z - xe^{-t/2} \right)^\top \C^{-1}(1 - e^{-t})^{-1}\left(z - xe^{-t/2} \right) \right) \\
%         = & \frac{1}{(2\pi)^{n/2}[\det \C (1 - e^{-t})]^{1/2}} \exp \left( -\frac{1}{2(1-e^{-t})} \left(z^\top e^{-t}\C^{-1} z - z^\top e^{-t/2}\C^{-1} x - x^\top \C^{-1} z e^{-t/2} + x^\top \C^{-1} x \right) \right)
%     \end{aligned} 
% \end{equation}
% By combining \Eqref{eq:Az-y} and \Eqref{eq:transition_prop},
% \begin{equation*}
%     \begin{aligned}
%         \exp\left(-\frac 12 \norm{Az-y}^2_\Gamma\right) \rho_t(x | z) \propto \exp\left(-\frac{1}{2}\left( z^\top \Sigma_t^{-1} z - z^\top \left(A\top \Gamma^{-1}y + \frac{1}{e^{t/2} - e^{-t/2}} \C^{-1}x \right) \right) - \left(y^\top \Gamma^{-1}A + \frac{1}{e^{t/2} - e^{-t/2}} x^\top \C^{-1} \right) \right)
%     \end{aligned}
% \end{equation*}
% TBC
% \end{proof}

\subsection{Task-depedent score} %temporary title

Recall that $C_t = \Gamma + (e^t -1 ) A \C A^* : H\to H$ and $\Sigma_t = (e^t -1)\C - (e^t-1)^2\C A^* C_t^{-1} A \C : H\to H$.

\begin{proof}[Proof of Lemma \ref{lem: Sigma_t is covariance operator}]
It is clear that $\Sigma_t$ is self-adjoint. For positive definiteness, we write
    \begin{eqnarray*}
        \Sigma_t 
        %& = & (e^t -1)\C - (e^t-1)^2\C A^* C_t^{-1} A \C \\
        &=& \C^{1/2} \left((e^t-1)I - (e^t-1)^2  \C^{1/2} A^* C_t^{-1}A \C^{1/2}\right)  \C^{1/2} \\
        & = & \C^{1/2} \left( \frac{1}{e^t-1}I + \C^{1/2}A^* \Gamma^{-1}A \C^{1/2}\right)^{-1} \C^{1/2},
    \end{eqnarray*}
where we applied Lemma \ref{lem: Useful equality that show that Sigma is still p.d.} (a) for the last identity.
Since $\C A^* C_t^{-1} A \C$ is also positive definite, it immediately follows that
\begin{equation*}
    0<\tr_H(\Sigma_t) = (e^t-1) \tr_H(\C) - (e^t-1)^2 \tr_H(\C A^* C_t^{-1} A \C \leq (e^t-1) \tr_H(\C) < \infty
\end{equation*}
and, consequently, $\Sigma_t$ is trace-class.
\end{proof}

\begin{proof}[Proof of Lemma \ref{lem: Score function of posterior process in infinite dimension}] 
We replicate the proof of Lemma 1 in \citet{pidstrigach2023infinitedimensional} for the process $\widetilde{X}^{\mu}_t$.
The density $\tilde p_t(\cdot |x_0)$ of $\widetilde{X}^{\mu}_t $ conditioned on $\widetilde{X}^{\mu}_0 = x_0$ is Gaussian centered at $x_0$ with covariance $\Sigma_t$. 

For what follows, let $\tilde{p}_0(x_0)$ be the density of $\widetilde{X}_0^{\mu}$ and $\tilde p_t(x)$ the density of $\widetilde{X}^{\mu}_t$.
We apply Leibniz's rule to obtain
\begin{eqnarray*}
    \Sigma_t \nabla_x \log \tilde p_t(x) 
%    & = & \Sigma_t  \frac{\nabla_x \tilde p_t(x)}{\tilde p_t(x)}\\
    &=&  \Sigma_t \left(\frac{1}{\tilde p_t(x)} \nabla_x \int \tilde p_t(x|x_0) d \tilde p_0(x_0)\right) \\
    & = &-\Sigma_t  \int \Sigma_t^{-1}(x-x_0) \frac{\tilde p_t(x| x_0)}{\tilde p_t(x)} d \tilde p_0(x_0) \\
    & = & - \int (x-x_0) \tilde p(x_0 |x) \\
    & = &
    - \left(x - \E(\widetilde{X}^{\mu}_0 |\widetilde{X}^{\mu}_t = x)\right), 
\end{eqnarray*}
where we utilized Bayes' formula for the second last identity.
This proves the claim.
\end{proof}

\begin{lemma}\label{lem: Useful equality that show that Sigma is still p.d.}
For any $t>0$, we have that
\begin{itemize}
    % \item [a)] 
    % \begin{eqnarray*}
    %     \left( I -(e^t-1) A^*C_t^{-1}A \C \right)^{-1} = I + (e^t-1)A^* \Gamma^{-1}A \C
    % \end{eqnarray*}
    \item[(a)] the linear operator $\Xi_t = (e^t-1)I -(e^t-1)^2 \C^{1/2} A^*C_t^{-1}A \C^{1/2} : H \to H$ is bijective and
    \begin{eqnarray*}
    \Xi_t^{-1} = \frac{1}{e^t-1}I + \C^{1/2} A^* \Gamma^{-1}A \C^{1/2},
    \end{eqnarray*}
    \item[(b)] the linear operator $\Xi_t' = I -(e^t-1) \C A^*C_t^{-1}A$ is bijective and
    \begin{eqnarray*}
        \left(\Xi_t'\right)^{-1} = I + (e^t-1)\C A^* \Gamma^{-1}A; \quad \text{and}
    \end{eqnarray*}
    \item[(c)] it holds that
    \begin{eqnarray*}
        (e^t-1) \C A^* C_t^{-1} = \Sigma_t A^* \Gamma^{-1}.
    \end{eqnarray*}
\end{itemize}
\end{lemma}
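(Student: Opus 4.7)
The plan is to treat all three parts as Woodbury-style identities and verify them by direct computation using only the definition $C_t = \Gamma + (e^t-1) A\C A^*$, so no functional-analytic subtlety beyond boundedness and self-adjointness of $\Gamma^{-1}$ is needed. To streamline notation, I would set $B := \C^{1/2} A^* : \R^m \to H$ (so that $B^* = A\C^{1/2}$ and $B^*B = A\C A^*$), which exposes part (a) as a classical push-through identity.

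For part (a), I would multiply the candidate inverse on the right of $\Xi_t$ and expand, giving four terms; three of them collapse to $I$ once one factors $B C_t^{-1}$ on the right and uses the identity $C_t \Gamma^{-1} = I + (e^t-1) B^* B \Gamma^{-1}$, which is just the definition of $C_t$. The same computation carried out in the reverse order shows that the product is $I$ on both sides, and since $\Gamma^{-1}$ and $C_t^{-1}$ are bounded on $\R^m$ while $\C^{1/2}$ is bounded on $H$, every operator appearing is bounded, so bijectivity is immediate from the two-sided inverse. Part (b) is essentially identical with $\C^{1/2}$ on the left replaced by $\C$; one checks that
\begin{equation*}
\bigl[I - (e^t-1)\C A^* C_t^{-1} A\bigr]\bigl[I + (e^t-1)\C A^* \Gamma^{-1} A\bigr] = I
\end{equation*}
by the same factorization trick, invoking $C_t \Gamma^{-1} = I + (e^t-1) A\C A^* \Gamma^{-1}$ at the relevant step.

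For part (c), I would substitute the definition of $\Sigma_t$ into the right-hand side,
\begin{equation*}
\Sigma_t A^* \Gamma^{-1} = (e^t-1)\C A^* \Gamma^{-1} - (e^t-1)^2 \C A^* C_t^{-1} (A\C A^*) \Gamma^{-1},
\end{equation*}
factor out $(e^t-1)\C A^* C_t^{-1}$ on the left, and inside the bracket use $C_t \Gamma^{-1} - (e^t-1) A\C A^* \Gamma^{-1} = I$, which collapses the expression to $(e^t-1)\C A^* C_t^{-1}$.

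I do not anticipate a real obstacle here beyond bookkeeping: the only thing to be careful about is the order of operators (since $A$ acts between $H$ and $\R^m$, the dimensions must line up correctly), and making sure that the push-through identity is applied on the correct side. Once $B = \C^{1/2}A^*$ is fixed, the three computations are one-line manipulations of the single algebraic relation encoded by the definition of $C_t$.
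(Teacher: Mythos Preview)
Your proposal is correct and follows essentially the same approach as the paper: both verify (a) and (b) by directly multiplying $\Xi_t$ (resp.\ $\Xi_t'$) with its candidate inverse and collapsing the cross terms using the definition of $C_t$, and both handle (c) by the same algebraic identity. The only cosmetic differences are that you introduce the shorthand $B=\C^{1/2}A^*$ and frame the computations as Woodbury/push-through identities, and for (c) you factor out $(e^t-1)\C A^* C_t^{-1}$ on the left whereas the paper instead multiplies $\Sigma_t A^*\Gamma^{-1}$ on the right by $C_t$ and then inverts; these are equivalent one-line manipulations.
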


\begin{proof}
a) Invertibility from the right can be derived by straightforward computation
    \begin{multline*}
         \Xi_t \bigg( \frac{1}{e^t-1}I + \C^{1/2}A^* \Gamma^{-1}A \C^{1/2}\bigg) \\
%        & = I + (e^t-1) \C^{1/2}A^* \Gamma^{-1}A \C^{1/2} \\
%        &\quad - (e^t-1) \C^{1/2} A^* C_t^{-1}A \C^{1/2} - (e^t-1)^2  \C^{1/2} A^* C_t^{-1}A \C A^* \Gamma^{-1}A \C^{1/2} \\
         = I + (e^t-1) \C^{1/2}A^* \Gamma^{-1}A \C^{1/2} 
         -  (e^t-1)\C^{1/2} A^* C_t^{-1} \underbrace{\left((e^t-1) A\C A^* + \Gamma \right)}_{= C_t} \Gamma^{-1}A\C^{1/2} = I
    \end{multline*}
Similarly, invertibility from the left follows from an analogous computation. The invertibility in (b) can be established using the same arguments.
    % In the same fashion:
    % \begin{eqnarray*}
    %     & & \left( \frac{1}{e^t-1}I + \C^{1/2}A^* \Gamma^{-1}A \C^{1/2}\right) \left((e^t-1)I - (e^t-1)^2  \C^{1/2} A^\top C_t^{-1}A \C^{1/2}\right)  \\ 
    %     & = & I  - (e^t-1) \C^{1/2} A^* C_t^{-1}A\C^{1/2} + (e^t-1) \C^{1/2} A^* C_t^{-1}A\C^{1/2} \\
    %     & & - (e^t-1)^2 \C^{1/2}A^* \Gamma^{-1}A\C A^* C_t^{-1}A \C^{1/2} \\
    %     & = & I + (e^t-1) \C^{1/2} A^* C_t^{-1}A\C^{1/2} \\
    %     & & - (e^t-1) \C^{1/2}A^* \Gamma^{-1} \underbrace{\left( \Gamma + (e^t-1) A^*\C A) \right)}_{= C_t}C_t^{-1} A\C^{1/2} \\
    %     &=& I
    %     \end{eqnarray*}
For the identity in (c), we have
    \begin{eqnarray*}
        \Sigma_t A^* \Gamma^{-1} C_t 
%        & = & \Sigma_t A^* \Gamma^{-1}(\Gamma + (e^t-1)A\C A^* ) \\
        & = & (e^t-1)\C A^* -(e^t-1)^2 \C A^*C_t^{-1} A \C A^* + (e^t-1)^2 \C A^* \Gamma^{-1} A \C A^* \\
        & & - (e^t-1)^3 \C A^* C_t^{-1} A \C A^* \Gamma^{-1} A \C A^* \\
        & = & (e^t-1) \C A^* + (e^t-1)^2 \C A^* \Gamma^{-1} A \C A^* \\
        & & - (e^t-1)^2 \C A^* C_t^{-1} \underbrace{\left((e^t-1)A \C A^* + \Gamma \right)}_{=C_t} \Gamma^{-1} A \C A^* \\
        & =& (e^t-1) \C A^*.
    \end{eqnarray*}
The desired identity follows by inverting $C_t$.
\end{proof}

%Before we start proving the main Theorem \ref{thm:score_id1 infinite dimension}, we recall some necessary elements of theory of probability on general Hilbert spaces.

\subsection{Proof of Theorem \ref{thm:score_id1 infinite dimension}} 
We follow the structure in Section \ref{sec: finite dimensional UCoS} and first prove the following equivalent of \Eqref{eq:finite_dim_score_id 2}. Theorem \ref{thm:score_id1 infinite dimension} follows directly from substituting  \Eqref{eq: r equation} into Theorem \ref{thm:prestep to score_id1 infinite dimension} while utilizing Lemma \ref{lem: writing m_t for no forward}.
\begin{theorem}\label{thm:prestep to score_id1 infinite dimension}
Let $H_{\C}$ be the Cameron-Martin space of $\C$ and assume that the prior satisfies $\mu(H_{\C}) = 1$. For $t>0$, let $\Sigma_t$ be given by  \Eqref{eq: formal definition of Sigma_t} and define   
\begin{eqnarray}\label{eq: definition of mut}
    m_t(x, y) = e^{t/2}x + (e^t - 1)\C A^* C_t^{-1}(y - e^{t/2}Ax).
\end{eqnarray}
Then the conditional score function $s(x,t;\mu^y)$ related to it holds that
\begin{equation}
    \label{eq:conditional_score_id}
    s(x, t; \mu^y) = \lambda(t) \left(  \Tilde{s}\left(m_t(x, y\right), t; \mu) + m_t(x, y) - e^{t/2}x \right)
\end{equation}
for $(x,y) \in H \times \R^m$ a.e. in $\L(X_t^\mu, Y)$ and $t>0$.
\end{theorem}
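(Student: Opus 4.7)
The strategy is to show that the regular conditional laws $\L(X_0^\mu \mid X_t^\mu = x, Y = y)$ and $\L(\widetilde X_0^\mu \mid \widetilde X_t^\mu = m_t(x,y))$ coincide, and then to simplify Definition~\ref{def:conditional_score_infiniteD} using this identification together with the definition of $\tilde s$ in \Eqref{eq: posterior score in infinite dimensional space}. In essence this is the infinite-dimensional lift of the ``completing the square'' calculation from Section~\ref{sec: finite dimensional UCoS}, with Cameron--Martin's theorem playing the role of Lebesgue densities.

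I would begin by disintegrating the joint law of $(X_0^\mu, X_t^\mu, Y)$ through $\mu(\mathrm{d}z)$, the OU transition $X_t^\mu \mid X_0^\mu = z \sim \mathcal{N}(e^{-t/2}z, (1-e^{-t})\C)$, and the likelihood $Y \mid X_0^\mu = z \sim \mathcal{N}(Az, \Gamma)$, which is conditionally independent of $X_t^\mu$ given $X_0^\mu$. The hypothesis $\mu(H_{\C})=1$ ensures that $e^{-t/2}z$ lies in the Cameron--Martin space of $(1-e^{-t})\C$ for $\mu$-a.e.\ $z$, so the transition kernel admits a Cameron--Martin density with respect to the reference measure $\mathcal{N}(0,(1-e^{-t})\C)$ that is exponential--affine--quadratic in $z$. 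Combining this with the Gaussian likelihood and applying Bayes' rule, the conditional law $\P(X_0^\mu \in \mathrm{d}z \mid X_t^\mu = x, Y = y)$ is $\mu$-absolutely continuous with unnormalised density proportional to $\exp\!\bigl(-\tfrac{1}{2}\langle z, \Sigma_t^{-1}z\rangle_H + \langle z, v_t(x,y)\rangle_H\bigr)$, where
\begin{equation*}
\Sigma_t^{-1} = \tfrac{1}{e^t-1}\C^{-1} + A^*\Gamma^{-1}A, \qquad v_t(x,y) = \tfrac{e^{t/2}}{e^t-1}\C^{-1}x + A^*\Gamma^{-1}y.
\end{equation*}
By Lemma~\ref{lem: Useful equality that show that Sigma is still p.d.}(a), this inverse coincides with $\Sigma_t^{-1}$ for the $\Sigma_t$ of \Eqref{eq: formal definition of Sigma_t}, and part~(c) of the same lemma together with the expansion $\Sigma_t \C^{-1} = (e^t-1)I - (e^t-1)^2 \C A^* C_t^{-1}A$ reduces the completed centre to $\Sigma_t v_t(x,y) = m_t(x,y)$. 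An identical Cameron--Martin/Bayes computation applied to $\widetilde X_t^\mu = \widetilde X_0^\mu + Z_t$ with $Z_t \sim \mathcal{N}(0,\Sigma_t)$ -- which is well-posed because $H_{\Sigma_t}=H_{\C}$ (a consequence of $\Sigma_t = \C^{1/2}\Xi_t\C^{1/2}$ with $\Xi_t$ boundedly invertible) and hence $\mu(H_{\Sigma_t})=1$ as well -- yields the conditional density $\propto \mu(\mathrm{d}z)\exp(-\tfrac12\langle z-w, \Sigma_t^{-1}(z-w)\rangle_H)$ at $\widetilde X_t^\mu = w$. Setting $w = m_t(x,y)$ the two conditional laws coincide, so taking expectations and recalling \Eqref{eq: posterior score in infinite dimensional space} gives
\begin{equation*}
\E(X_0^\mu \mid X_t^\mu = x, Y = y) \;=\; \E(\widetilde X_0^\mu \mid \widetilde X_t^\mu = m_t(x,y)) \;=\; m_t(x,y) + \tilde s(m_t(x,y), t; \mu).
\end{equation*}

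Substituting into Definition~\ref{def:conditional_score_infiniteD} and simplifying via the identity $\tfrac{e^t}{1-e^t}e^{-t/2} = -\lambda(t)$ produces the claimed formula. I expect the main technical obstacle to be the rigorous ``completing the square'' step in infinite dimensions, where there is no Lebesgue reference measure; this is precisely where the assumption $\mu(H_{\C})=1$ is essential, as it validates the Cameron--Martin changes of measure both for the original OU kernel and for the auxiliary kernel $\mathcal{N}(z,\Sigma_t)$, and ensures the induced quadratic forms in $z$ are $\mu$-a.s.\ finite. Once this foundation is in place, the remaining reductions are operator-algebra manipulations enabled by Lemma~\ref{lem: Useful equality that show that Sigma is still p.d.}.
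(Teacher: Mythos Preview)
Your proposal is correct and follows essentially the same route as the paper: both reduce the theorem to the identity $\E(X_0^\mu \mid X_t^\mu = x, Y = y) = \E(\widetilde X_0^\mu \mid \widetilde X_t^\mu = m_t(x,y))$ (the paper's Proposition~\ref{prop:identity_of_expectations}), obtained by writing the posterior-weighted OU transition kernel and the $\Sigma_t$-transition kernel as Cameron--Martin densities and matching them via a completing-the-square argument. The only notable difference is in bookkeeping: the paper avoids writing expressions like $\C^{-1}x$ for $x$ in the support of $\L(X_t^\mu)$ and instead works with the white-noise pairing $\langle x_0, X_t^\mu\rangle_{H_{(e^t-1)\C}}$ interpreted in distribution (Corollary~\ref{cor:ID1} and Lemma~\ref{lem: relation between CM spaces of U and Vt}(iv)), which is precisely the care needed at the point you flag as the ``main technical obstacle.''
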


Theorem \ref{thm:prestep to score_id1 infinite dimension} is a direct consequence of the following proposition.
\begin{proposition}\label{prop:identity_of_expectations}
    Let the assumptions of Theorem \ref{thm:score_id1 infinite dimension} hold. Then
    \begin{eqnarray}\label{eq: equality of expectations in infinite dimensions}
       \E(\widetilde{X}^{\mu}_0 |\widetilde{X}^{\mu}_t = m_t(x,y)) =  \E(X^{\mu}_0 |Y = y, X^{\mu}_t = x),
    \end{eqnarray}
    for $(x,y) \in H \times \R^m$ a.e. in $\L(X_t^\mu, Y)$ and $t>0$.
\end{proposition}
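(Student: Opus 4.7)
The plan is to identify both conditional distributions appearing in \Eqref{eq: equality of expectations in infinite dimensions} as Radon--Nikodym derivatives with respect to the prior $\mu$, and then show that after the substitution $z=m_t(x,y)$ these two densities coincide $\mu$-almost everywhere. Since the distributions themselves match, so do their expectations, which is the desired identity. The argument mirrors the finite-dimensional ``completion of the square'' in \Eqref{eq:qt_identity}, but phrased through the Cameron--Martin theorem so that it is valid in infinite dimensions.

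First, I would treat the left-hand side of \Eqref{eq: equality of expectations in infinite dimensions}. Since $\widetilde X^\mu_t = \widetilde X^\mu_0 + Z_t$ with $Z_t\sim\N(0,\Sigma_t)$ independent of $\widetilde X^\mu_0\sim\mu$, Bayes' rule together with the Cameron--Martin theorem applied to the translate $\N(x_0,\Sigma_t)$ versus $\N(0,\Sigma_t)$ yields, for $\mu$-a.e. $x_0\in H_{\Sigma_t}$ and $\L(\widetilde X_t^\mu)$-a.e. $z$,
\begin{equation*}
    \frac{d\P(\widetilde X_0^\mu\in\cdot\mid\widetilde X_t^\mu=z)}{d\mu}(x_0) \;\propto\; \exp\!\left(\langle x_0,z\rangle_{H_{\Sigma_t}} - \tfrac12\|x_0\|_{H_{\Sigma_t}}^2\right).
\end{equation*}
For the right-hand side, I observe that conditional on $X_0^\mu=x_0$ the random variables $X_t^\mu\sim\N(e^{-t/2}x_0,(1-e^{-t})\C)$ and $Y\sim\N(Ax_0,\Gamma)$ are independent (they are driven by the independent noise sources $B_t$ and $\epsilon$). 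Applying Cameron--Martin once for $(1-e^{-t})\C$ and once for the finite-dimensional Gaussian with covariance $\Gamma$, the conditional law of $X_0^\mu$ given $(X_t^\mu=x,Y=y)$ has density with respect to $\mu$ proportional to $\exp(Q(x,y,x_0))$, where $Q$ is the sum of the two Cameron--Martin exponents.

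The core calculation is then to verify
\begin{equation*}
    Q(x,y,x_0) \;=\; \langle x_0, m_t(x,y)\rangle_{H_{\Sigma_t}} - \tfrac12\|x_0\|_{H_{\Sigma_t}}^2 + c(x,y),
\end{equation*}
with $c(x,y)$ independent of $x_0$. The quadratic part in $x_0$ collapses via the identity
\begin{equation*}
    \Sigma_t^{-1} \;=\; \tfrac{1}{e^t-1}\C^{-1} + A^*\Gamma^{-1}A,
\end{equation*}
which is precisely the conjugated form of Lemma \ref{lem: Useful equality that show that Sigma is still p.d.}(a). The linear part in $x_0$ reads $\lambda(t)\langle x_0,\C^{-1}x\rangle_H + \langle x_0,A^*\Gamma^{-1}y\rangle_H$, which equals $\langle x_0,\Sigma_t^{-1} m_t(x,y)\rangle_H = \langle x_0,m_t(x,y)\rangle_{H_{\Sigma_t}}$ once $m_t(x,y)$ is rewritten as $\Sigma_t(A^*\Gamma^{-1}y+\lambda(t)\C^{-1}x)$; this reformulation of \Eqref{eq: definition of mut} is supplied by Lemma \ref{lem: writing m_t for no forward}. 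Substituting $z=m_t(x,y)$ therefore matches the two Radon--Nikodym derivatives $\mu$-a.s., so the conditional laws and hence the conditional expectations agree.

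The main obstacle is the rigorous justification of the Cameron--Martin steps in infinite dimensions. Concretely, one must ensure $\mu(H_{\Sigma_t})=1$; this follows from the standing hypothesis $\mu(H_\C)=1$ and the pointwise bound
\begin{equation*}
    \|x_0\|_{H_{\Sigma_t}}^2 \;=\; \tfrac{1}{e^t-1}\|x_0\|_{H_\C}^2 + \|Ax_0\|_\Gamma^2,
\end{equation*}
which implies $H_\C\hookrightarrow H_{\Sigma_t}$. One additionally has to interpret pairings such as $\langle x_0,\C^{-1}x\rangle_H$ as the densely-defined extension of the Cameron--Martin inner product discussed in Appendix \ref{Appendix:prob_measures}, since a generic $x\in H$ need not lie in $H_\C$. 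With these technicalities in place, the almost-sure equality of the two RN derivatives holds on a set of full $\L(X_t^\mu,Y)$-measure, which is exactly the statement of the proposition.
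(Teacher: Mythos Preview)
Your proposal is correct and follows essentially the same route as the paper: both arguments identify the two conditional laws as Radon--Nikodym derivatives with respect to $\mu$, expand them via the Cameron--Martin theorem, and match the quadratic and linear terms in $x_0$ using the identity $\Sigma_t^{-1}=\tfrac{1}{e^t-1}\C^{-1}+A^*\Gamma^{-1}A$ (the paper packages these as Lemma \ref{lem: tranisition kernels}, Lemma \ref{lem: relation between CM spaces of U and Vt}(iv) and Corollary \ref{cor:ID1}). The only point where the paper is slightly more careful is in justifying the linear identity a.e.\ in $\L(X_t^\mu,Y)$: rather than writing the formal expression $m_t(x,y)=\Sigma_t(A^*\Gamma^{-1}y+\lambda(t)\C^{-1}x)$, it invokes Lemma \ref{lem: training loss} to identify the law of $m_t(X_t^\mu,Y)$ and then applies the white-noise pairing of Lemma \ref{lem: relation between CM spaces of U and Vt}(iv) in distribution, which is exactly the technicality you flag at the end.
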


%    \begin{eqnarray*}
%            -\frac{1}{1-e^{-t}}\left(x - e^{-t/2}\E(X^{\mu^y}_0 |X^{\mu^y}_t = x)\right) &=& \frac {1}{(e^{t/2} - e^{-t/2})}\left( \left(-m_t + \E(\widetilde{X}^{\mu}_0 |\tilde{X^{\mu}}_t = m_t)\right) + m_t - e^{t/2}x \right).
%    \end{eqnarray*}
%    For the terms involving $m_t$ and $x$ notice that $\frac{e^{t/2}}{e^{t/2} - e^{-t/2}} = \frac{1}{1-e^{-t}}$ to see that they equal.
%\end{proof}

Throughout this section, we denote the transition kernel densities
\begin{eqnarray*}
        n_t(x_0, x) := \frac{d \N( x_0, (e^t-1) \C)}{d \N(0, (e^t-1) \C)}(e^{t/2}x), \quad 
        \tilde n_t(x_0, x) := \frac{d \N(x_0, \Sigma_t)}{d \N(0, \Sigma_t)}(x),
    \end{eqnarray*}
whenever the Radon--Nykodym derivatives make sense.

A plan of proof for Proposition \ref{prop:identity_of_expectations} is as follows. We first develop auxiliary results: Lemma \ref{lem: training loss} shows that the laws of $m_t(X^\mu_t, Y)$ and $\widetilde{X}^\mu_t$ coincide when conditioned on $X_0=0$. Lemma \ref{lem: tranisition kernels} is used to express each expectation in \Eqref{eq: equality of expectations in infinite dimensions} in terms of transition kernels $n_t$ and $\tilde n_t$ of the corresponding forward SDEs, which in turn can be written as Radon--Nykodym derivatives of certain measures. After that, we use Lemma \ref{lem: relation between CM spaces of U and Vt} to show that the measures $\N(0, (e^t-1)\C)$ and $\N(0, \Sigma_t)$ are equivalent, in particular their Cameron--Martin spaces equal, which concludes the proof.
Finally, we put together the argument at the end of the section.

\begin{lemma}\label{lem: training loss}
Let $Z_1 \sim \N(0, (1-e^{-t})\C)$, $Z_2 \sim \N(0,\Gamma)$ and $Z_3 = \N(0,\Sigma_t)$ be mutually independent Gaussian random variables on $H$.
Moreover, let $x_0 \in H$ be arbitrary. It holds that
\begin{equation*}
	\L(m_t(e^{-t/2}x_0 + Z_1, Ax_0 + Z_2)) = \L(x_0 + Z_3).
\end{equation*}
for any $t>0$.
\end{lemma}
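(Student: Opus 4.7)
The plan is a direct computation. First I would substitute into the definition of $m_t$ to obtain
\begin{align*}
m_t(e^{-t/2}x_0 + Z_1,\, Ax_0 + Z_2)
&= e^{t/2}(e^{-t/2}x_0 + Z_1) \\
&\quad + (e^t-1)\C A^* C_t^{-1}\bigl(Ax_0 + Z_2 - e^{t/2}A(e^{-t/2}x_0 + Z_1)\bigr).
\end{align*}
The $Ax_0$ terms inside the bracket cancel, leaving $x_0 + W$, where
\[
W := e^{t/2}Z_1 + (e^t-1)\C A^* C_t^{-1}\bigl(Z_2 - e^{t/2} A Z_1\bigr) = M_1 Z_1 + M_2 Z_2,
\]
with $M_1 = e^{t/2}\bigl(I - (e^t-1)\C A^* C_t^{-1}A\bigr)$ and $M_2 = (e^t-1)\C A^* C_t^{-1}$. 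Since $Z_1, Z_2$ are jointly Gaussian and independent, $W$ is Gaussian with mean $0$, so it only remains to identify its covariance with $\Sigma_t$.

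The covariance of $W$ is $M_1\bigl((1-e^{-t})\C\bigr) M_1^* + M_2\,\Gamma\, M_2^*$. Writing $K := (e^t-1)\C A^* C_t^{-1}A$ and using self-adjointness of $C_t$ (hence $K\C = \C K^*$), the first term expands to
\[
(e^t-1)(I-K)\C(I-K^*) = (e^t-1)\C - 2(e^t-1)^2\C A^* C_t^{-1} A\C + (e^t-1)^3 \C A^* C_t^{-1} A\C A^* C_t^{-1} A\C.
\]
The second term is $(e^t-1)^2\C A^* C_t^{-1} \Gamma C_t^{-1} A\C$. The main step -- and the only nontrivial point in the proof -- is to combine the last two summands using the definition $C_t = (e^t-1) A\C A^* + \Gamma$:
\[
(e^t-1)^2 \C A^* C_t^{-1}\bigl[(e^t-1)A\C A^* + \Gamma\bigr] C_t^{-1} A\C = (e^t-1)^2 \C A^* C_t^{-1} A\C.
\]
Substituting this back collapses the expression to $(e^t-1)\C - (e^t-1)^2 \C A^* C_t^{-1} A\C$, which is precisely $\Sigma_t$ by \Eqref{eq: formal definition of Sigma_t}.

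Thus $m_t(e^{-t/2}x_0 + Z_1, Ax_0 + Z_2) = x_0 + W$ in distribution, with $W \sim \N(0,\Sigma_t)$, so its law coincides with that of $x_0 + Z_3$. The whole argument is algebraic identity manipulation; there is no analytic subtlety since $A$, $\C$, $\Gamma$ and $C_t^{-1}$ are all bounded (the relevant trace-class/invertibility properties are supplied by Lemma \ref{lem: Sigma_t is covariance operator} and the definition of $C_t$), and the construction never requires inverting $\C$ or $A$.
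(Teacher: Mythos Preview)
Your proof is correct and follows the same overall strategy as the paper: show that $m_t(e^{-t/2}x_0+Z_1,\,Ax_0+Z_2)$ is Gaussian with mean $x_0$ and then verify that its covariance equals $\Sigma_t$. The organization of the covariance computation differs slightly. The paper first rewrites the random part as $e^{t/2}\frac{1}{e^t-1}\Sigma_t\C^{-1}Z_1 + \Sigma_t A^*\Gamma^{-1}Z_2$, using the identity $(e^t-1)\C A^* C_t^{-1}=\Sigma_t A^*\Gamma^{-1}$ (Lemma~\ref{lem: Useful equality that show that Sigma is still p.d.}(c)), and then concludes via $\Sigma_t\bigl(A^*\Gamma^{-1}A+\frac{1}{e^t-1}\C^{-1}\bigr)\Sigma_t=\Sigma_t$, which is Lemma~\ref{lem: relation between CM spaces of U and Vt}(i). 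Your route expands the covariance directly and collapses it using only the defining relation $C_t=(e^t-1)A\C A^*+\Gamma$. This is marginally more elementary: it avoids any appearance of the unbounded operator $\C^{-1}$ and does not rely on the auxiliary lemmas about $\Sigma_t^{-1}$. The paper's version is shorter once those lemmas are in hand, but both arguments are equivalent algebraic verifications.
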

\begin{proof}
	We have that
    \begin{eqnarray*}
         m_t(e^{-t/2}x_0 + Z_1, Ax_0 + Z_2) & = & x_0 + e^{t/2}Z_1 + (e^t-1) \C A^* C_t^{-1}(Z_1 - e^{t/2}A Z_1) \\
         & = & x_0 + e^{t/2} \frac{1}{e^t-1}\Sigma_t \C^{-1} Z_1 + \Sigma_t A^* \Gamma^{-1} Z_2
    \end{eqnarray*}
    is a Gaussian random variable centered at $x_0$ with a covariance
    \begin{eqnarray*}
        \text{Cov}(m_t(e^{-t/2}x_0 + Z_1, Ax_0 + Z_2)) &=& \Sigma_t A^* \Gamma^{-1} A \Sigma_t + \frac{1}{e^t-1} \Sigma_t \C^{-1} \Sigma_t \\
        & = & \Sigma_t \left( A^* \Gamma^{-1}A + \frac{1}{e^t-1} \C^{-1}\right) \Sigma_t = \Sigma_t.
    \end{eqnarray*}
    This proves the claim.
\end{proof}

%The following lemmas are used in the proof of Proposition \ref{prop:identity_of_expectations}.
\begin{lemma}\label{lem: tranisition kernels}
    The following holds
    \begin{itemize}
    \item[(i)] For $(x,y) \in H \times \R^m$ a.e. in $\L(X_t^\mu, Y)$, 
    \begin{eqnarray*}
        \E(X^{\mu}_0 | Y = y, X^{\mu}_t = x) = \frac{\int_H x_0 n_t(x_0, x) \mu^y(dx_0)}{\int_H n_t(x_0, x) \mu^y(dx_0)}.
    \end{eqnarray*} 
    % with 
    % \begin{eqnarray*}
    %     n_t(x_0, x) 
    %     &=& \exp\left( e^{t/2} \langle x_0, x \rangle_{H_{\widetilde{\C}_t}} - \frac{1}{2}\norm{x_0}_{H_{\widetilde{\C}_t}}^2 \right).
    % \end{eqnarray*}
    \item[(ii)] For $x\in H$ $\L(\widetilde{X}_t^{\mu})$-a.e.,
    \begin{eqnarray*}
        \E(\widetilde{X}^{\mu}_0 |\widetilde{X}^{\mu}_t = x) = \frac{\int_H x_0 \tilde n_t(x_0, x) \mu(dx_0)}{\int_H \tilde n_t(x_0, x) \mu(dx_0)}.
    \end{eqnarray*}
    % with 
    % \begin{eqnarray*}
    %     \tilde n_t(x_0, x) = \frac{d \N(x_0, \Sigma_t)}{d \N(0, \Sigma_t)}(x) = \exp\left( \langle x_0, x \rangle_{H_{\Sigma_t}} - \frac{1}{2}\norm{x_0}^2_{H_{\Sigma_t}} \right)
    % \end{eqnarray*}    
    \end{itemize}
\end{lemma}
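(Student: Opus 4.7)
The plan is to compute both conditional expectations via a Bayes-type disintegration, using Gaussian Radon--Nikodym derivatives in place of conditional densities (the latter not being available on an infinite-dimensional Hilbert space). The densities $n_t(x_0, x)$ and $\tilde n_t(x_0, x)$ are, by construction, the Radon--Nikodym derivatives of the transition law of the forward variable given the initial point, taken against a reference Gaussian measure that does not depend on $x_0$. Since this reference factor is independent of $x_0$, it cancels between numerator and denominator and produces the asserted quotient.

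I would start with part (ii), as it is simpler. Conditionally on $\widetilde X_0^\mu = x_0$ we have $\widetilde X_t^\mu \sim \N(x_0, \Sigma_t)$, so by the Cameron--Martin theorem
\begin{equation*}
    \frac{d\N(x_0, \Sigma_t)}{d\N(0, \Sigma_t)}(x) = \tilde n_t(x_0, x),
\end{equation*}
provided $x_0$ lies in the Cameron--Martin space $H_{\Sigma_t}$ of $\Sigma_t$. This holds $\mu$-almost surely by the hypothesis $\mu(H_\C) = 1$ combined with Lemma \ref{lem: relation between CM spaces of U and Vt}, which identifies $H_\C$ with $H_{\Sigma_t}$. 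The joint law of $(\widetilde X_0^\mu, \widetilde X_t^\mu)$ is therefore absolutely continuous with respect to the product measure $\mu \otimes \N(0, \Sigma_t)$, with density $\tilde n_t(x_0, x)$. Disintegrating against the second marginal gives
\begin{equation*}
    \P(\widetilde X_0^\mu \in dx_0 \mid \widetilde X_t^\mu = x) = \frac{\tilde n_t(x_0, x)\, \mu(dx_0)}{\int_H \tilde n_t(x_0', x)\, \mu(dx_0')},
\end{equation*}
and integrating $x_0$ against this kernel, together with Fubini, produces the stated identity.

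For part (i) the same machinery applies modulo two additions. First, since $e^{t/2} X_t^\mu \mid X_0^\mu = x_0 \sim \N(x_0, (e^t-1)\C)$, the Cameron--Martin theorem applied under $\mu(H_\C) = 1$ gives the RN derivative $n_t(x_0, \cdot)$ against a reference Gaussian that is independent of $x_0$. Second, since $Y = A X_0^\mu + \epsilon$ with $\epsilon$ independent of the Wiener process driving the OU dynamics, $Y$ and $X_t^\mu$ are conditionally independent given $X_0^\mu$. Bayes' rule therefore factors
\begin{equation*}
    \P(X_0^\mu \in dx_0 \mid Y = y, X_t^\mu = x) \propto n_t(x_0, x)\, \P(X_0^\mu \in dx_0 \mid Y = y) = n_t(x_0, x)\, \mu^y(dx_0),
\end{equation*}
with normalizing constant equal to the integral of the right-hand side. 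Integrating $x_0$ yields (i).

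The main obstacle is making the Bayes disintegration rigorous in infinite dimensions. The absence of Lebesgue densities forces us to work with RN derivatives against a carefully chosen product reference measure, and the Cameron--Martin assumption $\mu(H_\C) = 1$ (together with Lemma \ref{lem: relation between CM spaces of U and Vt} for part (ii)) is precisely what guarantees that the joint-law density exists. One must additionally check that the normalizing integrals in the denominators are finite and strictly positive for $\L(X_t^\mu, Y)$- and $\L(\widetilde X_t^\mu)$-almost every conditioning event, which follows from the strict positivity and integrability of the Gaussian kernels $n_t(\cdot, x)$, $\tilde n_t(\cdot, x)$ and the fact that $\mu$ and $\mu^y$ are probability measures.
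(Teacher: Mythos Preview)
Your proposal is correct and follows essentially the same approach as the paper: establish that the joint law has density $\tilde n_t$ (resp.\ $n_t$) with respect to a product reference measure, then extract the conditional expectation. The paper phrases part (ii) by directly verifying the defining property $\E[\mathbf{1}_{\mathcal A}\, f(\widetilde X_t^\mu)] = \E[\mathbf{1}_{\mathcal A}\, \widetilde X_0^\mu]$ rather than invoking disintegration, and for part (i) defers to Theorem~2 of Pidstrigach et al.\ rather than spelling out the conditional-independence argument you give, but the substance is the same; your explicit appeal to $\mu(H_\C)=1$ and Lemma~\ref{lem: relation between CM spaces of U and Vt} to ensure $\tilde n_t(x_0,\cdot)$ is well-defined is a point the paper leaves implicit.
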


\begin{proof}
{\bf (i):} We first observe that, if $X \sim \N(x_0, \C)$ then $\alpha X \sim \N(\alpha x_0, \alpha^2 S)$ for any $\alpha > 0$, and as a direct consequence
\begin{equation}
    \label{eq:multiplying_gaussian_rv}
    \N(x_0, \C)(A) = \N(\alpha x_0, \alpha^2 \C)(\alpha A),
\end{equation}
for any $\alpha > 0$ and $A\in \mathcal{B}(H)$.
By using this property for $\alpha = e^{-t/2}$, it holds that
\begin{eqnarray*}
    n_t(x_0, x) & = & \frac{d \N( x_0, (e^t-1) \C)}{d \N(0, (e^t-1) \C)}(e^{t/2}x) \\ 
    & = & \frac{d \N(e^{-t/2} x_0, (1-e^{-t}) \C)}{d \N(0, (1-e^{-t}) \C)}(x) \\
       & =& \E(X^{\mu}_0 | Y = y, X^{\mu}_t = x),
\end{eqnarray*}
for $(x,y) \in H \in \R^m$ a.e. in $\L(X_t^\mu, Y)$, where the last equality follows from the proof of Theorem 2 in %\cite[Proof of Thm. 2]
\citet{pidstrigach2023infinitedimensional} and that $\N(e^{-t/2} x_0, (1-e^{-t}) \C)$ is the transition kernel of the forward SDE in \Eqref{eq: BSDE infdim}.

{\bf (ii):} We repeat the aforementioned argument from \citet{pidstrigach2023infinitedimensional} adapted to our case.
The joint distribution of $\widetilde{X}^{\mu}_t, \widetilde{X}^{\mu}_0$ is given by
        $\tilde{n}(x_0, x)(\N(0, \Sigma_t))(dx) \otimes \mu(dx_0)$.
    Indeed, for any ${\mathcal A} \in \sigma(\widetilde{X}^{\mu}_0), {\mathcal B}\in \sigma(\widetilde{X}^{\mu}_t)$,
    \begin{eqnarray*}
        \iint_{{\mathcal A}\times {\mathcal B}} \tilde n(x_0, x) \N(0, \Sigma_t)(dx) \mu(dx_0)
        & = &  \iint_{{\mathcal A}\times {\mathcal B}}  \frac{d \N(x_0, \Sigma_t)}{d \N(0, \Sigma_t)}(x) \N(0, \Sigma_t)(dx) \mu(dx_0) \\
        & = &  \iint_{{\mathcal A}\times {\mathcal B}}  \N(x_0, \Sigma_t)(dx) \mu (dx_0) \\
        & = & \P( \widetilde{X}^{\mu}_0 \in {\mathcal A}, \widetilde{X}^{\mu}_t \in {\mathcal B}),
    \end{eqnarray*}
    where it is used that $\N(x_0, \Sigma_t)$ is the forward transition kernel of the process $\widetilde{X}^{\mu}$.
    We show that
    \begin{eqnarray*}
        f(x) = \frac{\int x_0 \tilde n_t(x_0, x) \mu(dx_0)}{\int \tilde n_t(x_0, x) \mu(dx_0)}
    \end{eqnarray*}
    is a version of the conditional expectation $\E(\widetilde{X}^{\mu}_0 |\widetilde{X}^{\mu}_t = x)$. 
    Let $\P_t$ be the law of $\widetilde{X}^{\mu}_t$, that is, 
    \begin{eqnarray*}
        \P_t(A) = \P(\widetilde{X}_t^{\mu} \in A), \quad A\in \sigma(\widetilde{X}^{\mu}_t).
    \end{eqnarray*}
    The function $f(x)$ is $\sigma(\widetilde{X}^{\mu}_t)$-measurable by Fubini's theorem and for any ${\mathcal A}\in \sigma(\widetilde{X}^{\mu}_t)$,
    \begin{eqnarray*}
        \E_{x \sim \L(X^{\mu}_t)} \left(1_{\mathcal A} f(x)\right) &=& \int_{\mathcal A} \frac{\int_H x_0 \tilde n_t(x_0, x) \mu(dx_0)}{\int_H \tilde n_t(x_0, x) \mu(dx_0)} d \P_t(x) \\
        & = & \iint_{H\times {\mathcal A}} \frac{\int_H x_0 \tilde n_t(x_0, x) \mu(dx_0)}{\int_H \tilde n_t(x_0, x) \mu(dx_0)} \tilde n(\tilde x_0, x_t) \N(0, \Sigma_t)(dx) \mu(d\tilde x_0)\\
        & = & \iint_{{\mathcal A}\times H} x_0 \tilde n_t(x_0, x) \mu(dx_0) \frac{\int_H \tilde n(\tilde x_0, x)  \mu(d\tilde x_0)}{\int_H \tilde n_t(x_0, x) \mu(dx_0)} \N(0, \Sigma_t)(dx)
        \\ & = & \iint_{{\mathcal A}\times H} x_0 \tilde n_t(x_0, x) \mu(dx_0)  \N(0, \Sigma_t)(dx) = \E_{x_0 \sim \widetilde{X}^{\mu}_0}(1_{\mathcal A} x_0).
    \end{eqnarray*}
    The above properties define the conditional expectation and we can conclude the proof.
\end{proof}
%\begin{lemma}\label{lem: funny way to write the CM space}
%    Let $S \in L_1^+(H)$ and $U$ its Cameron-Martin space. 
%    For a set A define the closure of A w.r.t a norm $\norm{\cdot}$ as $\cl^{\norm{\cdot}}(A)$.
%    We can write 
%    \begin{eqnarray*}
%        U = \cl^{\norm{\cdot}_{U}}(S(H)).
%    \end{eqnarray*}
%\end{lemma}
%\begin{proof}
%   The claim is equivalent to showing
%   \begin{eqnarray*}
%       U = \cl^{\norm{\cdot}_{U}}(S^{1/2}(U)).
%   \end{eqnarray*}
%Because $U$ is closed, $\cl^{\norm{\cdot}_U}(U) = U$ and since $S^{1/2}(U) \subset U$, 
%\begin{eqnarray*}
%    \cl^{\norm{\cdot}_U}(S^{1/2}(U)) \subset \cl^{\norm{\cdot}_U}(U) = U.
%\end{eqnarray*}
%For the other inclusion, let $x \in U$. Because $S^{1/2}(U)$ is dense in $U$, there exists some $(s_n)_{n \in \Natural} \subset S^{1/2}(U)$ such that
%\begin{eqnarray*}
%    \norm{s_n - x}_U \rightarrow 0.
%\end{eqnarray*}
%This proves that $x \in \cl^{\norm{\cdot}_U}(S^{1/2}(U))$.
%\end{proof}

\begin{lemma}\label{lem: relation between CM spaces of U and Vt}
    We have that
    \begin{itemize}
        \item [(i)] $\Sigma_t(H) = \C(H)$; moreover,
    \begin{eqnarray}\label{eq: Sigma_t_inv in infinite dimension}
        \Sigma_t^{-1}\big|_{\Sigma_t(H)} = (A^*\Gamma^{-1}A  +(e^t-1)^{-1} \C^{-1})\big|_{\C(H)}.
    \end{eqnarray}
        \item [(ii)] $H_{(e^t-1) \C} =H_{\Sigma_t}$. 
        \item [(iii)] 
        The measures $\N(0, (e^t-1)\C)$ and $\N(0, \Sigma_t)$ are equivalent.
        \item [(iv)] 
        For $x_0 \in H_{(e^t-1) \C}$, we have
        \begin{equation}\label{eq: Equation for new Cameron-Martin space}
   \langle x_0, \cdot \rangle_{H_{\Sigma_t}} = \langle x_0, A^*\Gamma^{-1}A \cdot \rangle_H +  \langle x_0, \cdot \rangle_{H_{(e^t-1) \C}}
    \end{equation}
    in $\N(0,\Sigma_t)$-a.e. 
    \end{itemize}
\end{lemma}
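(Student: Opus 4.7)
The natural entry point is (i), which clarifies the structure of $\Sigma_t$ and supplies the algebraic identity used throughout the rest. The plan is to factor $\Sigma_t = \C^{1/2}\Xi_t\C^{1/2}$, where $\Xi_t$ is the bijective operator analyzed in Lemma \ref{lem: Useful equality that show that Sigma is still p.d.}(a). Since $\Xi_t: H\to H$ is a bounded bijection, one immediately obtains $\Sigma_t(H) = \C^{1/2}(\C^{1/2}(H)) = \C(H)$. Conjugating the explicit inverse of $\Xi_t$ by $\C^{-1/2}$ and restricting the resulting identity to $\C(H)$ then yields \Eqref{eq: Sigma_t_inv in infinite dimension}.

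For (ii), I would write $\Sigma_t = BB^*$ with $B := \sqrt{e^t-1}\,\C^{1/2} M^{1/2}$, where $M := I - (e^t-1)\C^{1/2}A^*C_t^{-1}A\C^{1/2} = (e^t-1)^{-1}\Xi_t$. By Lemma \ref{lem: Useful equality that show that Sigma is still p.d.}(a), $M$ is bounded, self-adjoint, positive, and bijective on $H$ with bounded inverse, hence so is $M^{1/2}$. A standard range-identification result (Douglas' lemma) then gives $H_{\Sigma_t} = \Sigma_t^{1/2}(H) = B(H) = \C^{1/2}M^{1/2}(H) = \C^{1/2}(H) = H_{(e^t-1)\C}$.

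Part (iii) I would settle using the Feldman--H\'ajek theorem: given (ii) and the fact that both measures are centered, equivalence reduces to checking that $((e^t-1)\C)^{-1/2}\Sigma_t((e^t-1)\C)^{-1/2} - I$ extends to a Hilbert--Schmidt operator on $H$. A direct computation using the explicit form of $\Sigma_t$ shows this operator equals $-(e^t-1)\C^{1/2}A^*C_t^{-1}A\C^{1/2}$, which is finite rank because $A$ has range in $\R^m$, and hence Hilbert--Schmidt.

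Finally, (iv) should follow from a density argument based on (i). On the subspace $g \in \Sigma_t(H) = \C(H)$ the operator identity from (i) gives, pointwise, $\langle x_0, \Sigma_t^{-1} g \rangle_H = \langle x_0, A^*\Gamma^{-1}A g\rangle_H + (e^t-1)^{-1}\langle x_0, \C^{-1}g\rangle_H$, which is the claimed identity once the two $\C$-dependent pairings are recognized as the corresponding Cameron--Martin pairings. One then extends this to a $\N(0,\Sigma_t)$-a.e. identity between the $L^2$ realisations of the three functionals: the middle term is continuous in $g$, the $H_{\Sigma_t}$-pairing is defined $\N(0,\Sigma_t)$-a.e. by the isonormal construction on the dense subspace $\Sigma_t(H)$, and the $H_{(e^t-1)\C}$-pairing transfers from $\N(0,(e^t-1)\C)$-a.e. to $\N(0,\Sigma_t)$-a.e. via (iii). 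The main obstacle I anticipate is precisely this bookkeeping in (iv): each functional must be identified as an element of $L^2(\N(0,\Sigma_t))$ on a common dense subset, and equivalence of the two Gaussian measures is essential to make the three a.e. identifications live on the same null sets.
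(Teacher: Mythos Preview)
Your overall strategy is sound and close to the paper's, but there is a small logical gap in (i) that you should patch. From the factorisation $\Sigma_t = \C^{1/2}\Xi_t\C^{1/2}$ you conclude $\Sigma_t(H) = \C^{1/2}(\C^{1/2}(H))$ ``since $\Xi_t$ is a bounded bijection on $H$''. But a bounded bijection on $H$ need not map the proper subspace $\C^{1/2}(H)$ onto itself; you still need to observe that $\Xi_t - (e^t-1)I$ and $\Xi_t^{-1} - (e^t-1)^{-1}I$ both have range contained in $\C^{1/2}(H)$, which then gives $\Xi_t(\C^{1/2}(H)) = \C^{1/2}(H)$. The paper sidesteps this by using the asymmetric factorisation $\Sigma_t = (e^t-1)\C\cdot(\Xi_t')^*$ from Lemma~\ref{lem: Useful equality that show that Sigma is still p.d.}(b): here the bijection $(\Xi_t')^*$ acts first and is surjective onto all of $H$, so the range identity $\Sigma_t(H) = \C(H)$ is immediate. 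The paper then verifies the inverse formula by a direct computation rather than by conjugating $\Xi_t^{-1}$.

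Your route in (ii) via Douglas' lemma and the square root $M^{1/2}$ is genuinely different from the paper's, which instead shows that the Cameron--Martin norms $\|\cdot\|_{H_{\Sigma_t}}$ and $\|\cdot\|_{H_{(e^t-1)\C}}$ are equivalent on the common dense subspace $\C(H)$ (using the inverse formula from (i)) and concludes that the closures coincide. Both are correct; yours is cleaner operator-theoretically, while the paper's norm comparison feeds directly into the inner-product identity needed for (iv). Your observation in (iii) that the Feldman--H\'ajek operator is finite rank is a simplification over the paper's trace estimate. Your plan for (iv) matches the paper's density-and-white-noise argument.
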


\begin{proof} 
For the purpose of this proof, we abbreviate
\begin{equation}\label{eq:abbrev_(e^t-1)C}
    \widetilde{\C}_t = (e^t-1) \C.
\end{equation}

{\bf (i):} We first show that $ \widetilde{\C}_t(H) = \Sigma_t(H)$.     Note that we can write $\Sigma_t$ as
    \begin{eqnarray*}
        \Sigma_t =  \widetilde{\C}_t\left( I -\widetilde{\C}_t A^*C_t^{-1}A \right),
    \end{eqnarray*}
    where the last factor on the right-hand side is invertible (see Lemma \ref{lem: Useful equality that show that Sigma is still p.d.} (b)) proving that ranges of $\Sigma_t$ and $\widetilde{\C}_t$ coincide. 
    To prove \Eqref{eq: Sigma_t_inv in infinite dimension}, let $z \in \Sigma_t(H) = \C(H)$, then for some $x\in H$, 
    \begin{eqnarray*}
        z = \Sigma_t x =  \widetilde{\C}_t x - \widetilde{\C}_t A^* C_t^{-1}A \widetilde{\C}_t x.
    \end{eqnarray*}
    We have
    \begin{eqnarray*}
        \left(A^* \Gamma^{-1} A  + \widetilde{\C}_t^{-1} \right)z & =&  A^* \Gamma^{-1} A \widetilde{\C}_t x -  A^* \Gamma^{-1} A \widetilde{\C}_t A^* C_t^{-1}A \widetilde{\C}_t x + x  - A^* C_t^{-1} A \widetilde{\C}_t x \\
        & =& x + A^* \Gamma^{-1}A \widetilde{\C}_t x - A^*\Gamma^{-1} \left(\Gamma + A \widetilde{\C}_t A^* \right) C_t^{-1}A\widetilde{\C}_t x = x,
    \end{eqnarray*}
    since $\Gamma + A \widetilde{\C}_t A^* = C_t$. This shows that
    \begin{eqnarray*}
        \Sigma_t^{-1}z = x = \left( A^*\Gamma^{-1}A  + \widetilde{\C}_t^{-1} \right)z.
    \end{eqnarray*}
{\bf (ii):} Notice first that we can write
    \begin{eqnarray*}\label{eq: funny way to write the CM space}
        H_{\widetilde{\C}_t} = \cl^{\norm{\cdot}_{H_{\widetilde{\C}_t}}}(\widetilde{\C}_t(H)), \quad H_{\Sigma_t} = \cl^{\norm{\cdot}_{H_{\Sigma_t}}}(\Sigma_t(H)),
    \end{eqnarray*}
    where $\cl^{\norm{\cdot}_{H_1}}(H_2)$ denotes the closure of $H_2\subset H_1$ w.r.t $\norm{\cdot}_{H_1}$. 
    Let us prove that the norms $\norm{\cdot}_{H_{\widetilde{\C}_t}} $ and $ \norm{\cdot}_{H_{\Sigma_t}}$ are equivalent on $\Sigma_t(H) = \widetilde{\C}_t(H)$ as, together with (i), this will prove the statement.
    
    To this end, let $x \in \Sigma_t(H) = \widetilde{\C}_t(H)$ and apply $(i)$ and Cauchy--Schwarz inequality to obtain
    %It follows from this and Cauchy--Schwarz inequality that
    \begin{eqnarray*} \norm{x}^2_{H_{\widetilde{\C}_t}} \le \norm{x}^2_{H_{\Sigma_t}}
        & = & \langle x, A^* \Gamma^{-1}A x \rangle_H  + \norm{x}_{H_{\widetilde{\C}_t}}^2 \\
        & \le &   \opnorm{\widetilde{\C}_t^{1/2}A^* \Gamma^{-1}A\widetilde{\C}_t^{1/2}}\norm{x}_{H_{\widetilde{\C}_t}}^2 + \norm{x}_{H_{\widetilde{\C}_t}}^2\\
        & =& \left( 1+  \opnorm{\widetilde{\C}_t^{1/2}A^* \Gamma^{-1}A\widetilde{\C}_t^{1/2}} \right) \norm{x}_{H_{\widetilde{\C}_t}}^2,
    \end{eqnarray*}
    where the operator norm $\opnorm{\widetilde{\C}_t^{1/2}A^* \Gamma^{-1}A\widetilde{\C}_t^{1/2}}$ is finite. 
 
    {\bf (iii):} From $(ii)$, the covariance operators of $\mu$ and $\nu$ have the same Cameron--Martin space, by the Feldman--Hajek theorem we need to show that
    $\widetilde{\C}_t^{-1/2} \Sigma_t \widetilde{\C}_t^{-1/2} - I$
    is Hilbert-Schmidt. To this end, note that
    \begin{equation*}
        \widetilde{\C}_t^{-1/2} \Sigma_t \widetilde{\C}_t^{-1/2} - I = - \widetilde{\C}_t^{1/2} A^* C_t^{-1} A \widetilde{\C}_t^{1/2} := B.
    \end{equation*}
    The operator $B^2$ is of trace class, since
    \begin{eqnarray*}
        \tr_H (B^2)&=& \tr_H\left( \widetilde{\C}_t^{1/2}A^* C_t^{-1} A \widetilde{\C}_t A^* C_t^{-1} A \widetilde{\C}_t^{1/2} \right)  \\
        & \leq & \opnorm{A^* C_t^{-1} A \widetilde{\C}_t A^* C_t^{-1} A} \tr_H(\widetilde{\C}_t) < \infty,
    \end{eqnarray*}
    proving the claim.
    
    {\bf (iv):} Let $x_0 \in \Sigma_t(H)$. By $(i)$, we have
    \begin{equation}\label{eq:inner_product_in-2CM}
    \begin{aligned}
        \langle x_0, \cdot \rangle_{H_{\Sigma_t}} & = \langle   \Sigma_t^{-1} x_0, \cdot \rangle_{H} \\
        &= \langle  A^*\Gamma^{-1}A   x_0, \cdot \rangle_H +  \langle   \widetilde{\C}_t^{-1} x_0, \cdot \rangle_H \\
        & =  \langle A^*\Gamma^{-1}A   x_0, \cdot\rangle_H + \langle x_0, \cdot \rangle_{H_{\widetilde{\C}_t}} \quad \N(0,\Sigma_t)\text{-a.e.}
    \end{aligned}
    \end{equation}
    Now since $\Sigma_t(H) = \C(H)$ is dense in $H_{\Sigma_t} = H_{\widetilde{\C}_t}$, the identity above can be uniquely extended to $H_{\Sigma_t}$ by the white noise mapping, see \citep[p. 23]{da2006introduction}. This completes the proof.
\end{proof}

\begin{corollary}
\label{cor:ID1}
We have
    \begin{equation}\label{eq:ID1}
        e^{t/2} \langle x_0, X_t^\mu \rangle_{H_{\widetilde{\C}_t}} + \langle x_0, A^* \Gamma^{-1}Y \rangle_H = \langle x_0, m_t(X_t^\mu,Y) \rangle_{H_{\Sigma_t}}
    \end{equation}
in distribution.
\end{corollary}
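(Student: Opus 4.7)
My approach is to establish the identity pointwise for $x_0$ in the dense subspace $\Sigma_t(H) = \C(H)$ of the Cameron--Martin space $H_{\Sigma_t} = H_{\widetilde{\mathcal C}_t}$ (writing $\widetilde{\mathcal C}_t := (e^t-1)\C$), then extend by continuity to all of $H_{\Sigma_t}$. The ``in distribution'' formulation is natural because the white-noise functionals on either side are defined only up to almost sure equivalence with respect to their reference Gaussian measures.

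Fix first $x_0 \in \Sigma_t(H)$. Lemma \ref{lem: relation between CM spaces of U and Vt}(i) gives $\Sigma_t^{-1}x_0 = A^*\Gamma^{-1}A x_0 + \widetilde{\mathcal C}_t^{-1}x_0$, so the splitting \eqref{eq: Equation for new Cameron-Martin space} applied at $m_t(X_t^\mu, Y)$ reads pointwise
\[
\langle x_0, m_t(X_t^\mu, Y)\rangle_{H_{\Sigma_t}} = \langle A^*\Gamma^{-1}A x_0, m_t(X_t^\mu, Y)\rangle_H + \langle \widetilde{\mathcal C}_t^{-1}x_0, m_t(X_t^\mu, Y)\rangle_H.
\]
Expanding $m_t(X_t^\mu, Y) = e^{t/2}X_t^\mu + (e^t-1)\C A^* C_t^{-1}Y - e^{t/2}(e^t-1)\C A^* C_t^{-1}A X_t^\mu$ and distributing by linearity, the second bracket collapses via $\widetilde{\mathcal C}_t^{-1}(e^t-1)\C = I$ into
\[
e^{t/2}\langle x_0, X_t^\mu\rangle_{H_{\widetilde{\mathcal C}_t}} + \langle x_0, A^* C_t^{-1}Y\rangle_H - e^{t/2}\langle x_0, A^* C_t^{-1}A X_t^\mu\rangle_H.
\]
For the first bracket, the key operator identity is
\[
(e^t-1)A^*\Gamma^{-1}A\C A^* C_t^{-1} = A^*\Gamma^{-1} - A^* C_t^{-1},
\]
which is immediate from $C_t - \Gamma = (e^t-1)A\C A^*$ upon left-multiplication by $A^*\Gamma^{-1}$ and right-multiplication by $C_t^{-1}$. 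Applying it to the two $(e^t-1)\C A^* C_t^{-1}$ factors, the first bracket reduces to $\langle x_0, A^*\Gamma^{-1}Y\rangle_H - \langle x_0, A^* C_t^{-1}Y\rangle_H + e^{t/2}\langle x_0, A^* C_t^{-1}AX_t^\mu\rangle_H$, while the $e^{t/2}\langle x_0, A^*\Gamma^{-1}A X_t^\mu\rangle_H$ contributions cancel. Adding the two brackets, the $A^* C_t^{-1}Y$ and $A^* C_t^{-1}A X_t^\mu$ pieces pairwise annihilate, leaving precisely $e^{t/2}\langle x_0, X_t^\mu\rangle_{H_{\widetilde{\mathcal C}_t}} + \langle x_0, A^*\Gamma^{-1}Y\rangle_H$.

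To extend to arbitrary $x_0 \in H_{\Sigma_t}$, I use density: $\Sigma_t(H) = \C(H)$ is dense in $H_{\Sigma_t} = H_{\widetilde{\mathcal C}_t}$ by Lemma \ref{lem: relation between CM spaces of U and Vt}(ii), the two norms being equivalent. Combining Lemma \ref{lem: training loss} with the Cameron--Martin theorem and the hypothesis $\mu(H_\C) = 1$, the laws $\L(m_t(X_t^\mu, Y))$ and $\L(X_t^\mu)$ are absolutely continuous with respect to $\N(0,\Sigma_t)$ and $\N(0,\widetilde{\mathcal C}_t)$, respectively. Approximating $x_0^n \to x_0$ in $H_{\Sigma_t}$ with $x_0^n \in \Sigma_t(H)$ and invoking the $L^2$-isometries defining the white-noise extensions of $\langle \cdot,\cdot\rangle_{H_{\Sigma_t}}$ and $\langle \cdot,\cdot\rangle_{H_{\widetilde{\mathcal C}_t}}$, both sides of the identity converge in probability under the relevant laws, while $\langle x_0^n, A^*\Gamma^{-1}Y\rangle_H$ converges pointwise by continuity of the $H$-inner product (since $H_{\Sigma_t}$ embeds continuously into $H$). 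This yields equality in distribution for every $x_0 \in H_{\Sigma_t}$.

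The main obstacle I anticipate is the algebraic bookkeeping in the expansion: six inner-product pieces must be tracked with their correct signs and factors of $e^{t/2}$ and $e^t-1$, and the cancellations rely on invoking the operator identity twice (once on the $Y$-piece and once on the $AX_t^\mu$-piece). Once these cancellations are verified, the density extension via the Cameron--Martin structure of Lemma \ref{lem: relation between CM spaces of U and Vt} is routine.
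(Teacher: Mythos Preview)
Your proposal is correct and follows essentially the same approach as the paper. Both proofs split $\langle x_0, m_t(X_t^\mu,Y)\rangle_{H_{\Sigma_t}}$ via Lemma~\ref{lem: relation between CM spaces of U and Vt}(iv), expand $m_t$, and collapse the resulting six terms using the identical operator identity $(e^t-1)A^*\Gamma^{-1}A\C A^* C_t^{-1} = A^*\Gamma^{-1} - A^* C_t^{-1}$; the only organizational difference is that you carry out the computation pointwise for $x_0 \in \Sigma_t(H)$ and then perform the density extension explicitly, whereas the paper invokes Lemma~\ref{lem: relation between CM spaces of U and Vt}(iv) (which already contains that extension) up front and does the algebra for general $x_0 \in H_{\widetilde{\C}_t}$.
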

\begin{proof}
    Combining Lemmas \ref{lem: relation between CM spaces of U and Vt} $(iv)$ and \ref{lem: training loss}, it follows that
\begin{equation*}
    \langle x_0 , m_t(X_t^\mu, Y) \rangle_{H_{\Sigma_t}}
    =  \langle x_0, m_t(X_t^\mu, Y) \rangle_{H_{\widetilde{\C}_t}} + \langle x_0, A^*\Gamma^{-1}A m_t(X_t^\mu, Y) \rangle_H
\end{equation*}
in distribution. Therefore, we obtain
\begin{align*} 
    \langle x_0 &, m_t(X_t^\mu, Y) \rangle_{H_{\Sigma_t}} \\
    &=  \langle x_0, m_t(X_t^\mu, Y) \rangle_{H_{\widetilde{\C}_t}} + \langle x_0, A^*\Gamma^{-1}A m_t(X_t^\mu, Y) \rangle_H  \\
    & = {e^{t/2}} \langle x_0, X_t^\mu \rangle_{H_{\widetilde{\C}_t}} +  \langle x_0 , \widetilde{\C}_t A^*C_t^{-1}(Y-e^{t/2}AX_t^\mu) \rangle_{H_{\widetilde{\C}_t}} + e^{t/2} \langle x_0, A^*\Gamma^{-1}AX_t^\mu \rangle_{H} \\
    & \quad\quad + \langle x_0, (e^t-1) A^*\Gamma^{-1}A \C A^* C_t^{-1} (Y-e^{t/2}AX_t^\mu) \rangle_H\\
    &= e^{t/2} \langle x_0, X_t^\mu \rangle_{H_{\widetilde{\C}_t}} +  e^{t/2} \langle x_0, A^*\Gamma^{-1}AX_t^\mu \rangle_{H} + \langle x_0, A^*C_t^{-1}(Y-e^{t/2}AX_t^\mu) \rangle_H \\
    & \quad\quad + (e^t-1) \langle x_0, A^*\Gamma^{-1}A \C A^* C_t^{-1} (Y-e^{t/2}AX_t^\mu) \rangle_H\\
    &= e^{t/2} \langle x_0, X_t^\mu \rangle_{H_{\widetilde{\C}_t}} +  e^{t/2} \langle x_0, A^*\Gamma^{-1}AX_t^\mu \rangle_{H} 
    + \langle x_0, A^* \Gamma^{-1} (Y-e^{t/2}X_t^\mu)\rangle_H \\
    & = e^{t/2} \langle x_0, X_t^\mu \rangle_{H_{\widetilde{\C}_t}} + \langle x_0,  A^*\Gamma^{-1 }Y  \rangle_H
\end{align*}
in distribution, where we have used that
\begin{eqnarray*}
    A^*C_t^{-1} + (e^t-1) A^*\Gamma^{-1}A \C A^* C_t^{-1} = A^*  \Gamma^{-1}\underbrace{\left(\Gamma + (e^t-1) A\C A^* \right)}_{=C_t}C_t^{-1} = A^*\Gamma^{-1}.
\end{eqnarray*}
This completes the proof.
\end{proof}

\begin{proof}[Proof of Proposition \ref{prop:identity_of_expectations}]
    Let $x_0 \in H_{\widetilde{\C}_t} \subset H$ where $\widetilde{\C}_t = (e^t-1)\C$. By virtue of Lemma \ref{lem: tranisition kernels}, we will prove that
    \begin{eqnarray}\label{eq:transition-ident}
        \frac{\int_H x_0 n_t(x_0, x) \mu^y(dx_0)}{\int_H n_t(x_0, x) \mu^y(dx_0)} = \frac{\int_H x_0 \tilde n_t(x_0, m_t(x,y)) \mu(dx_0)}{\int_H \tilde n_t(x_0, m_t(x,y)) \mu(dx_0)},
    \end{eqnarray}
    for $(x,y) \in H \in \R^m$ a.e. in $\L(X_t^\mu, Y)$. 
    Notice that it suffices to show that %for $x_0 \in H_{\widetilde{\C}_t}$,
    \begin{eqnarray} \label{eq:Bayes-identity}
        n_t(x_0, x) \frac{d \mu^y}{d \mu}(x_0)= \frac{1}{\widetilde{Z}(y)}\tilde{n}_t(x_0, m_t(x,y)),
    \end{eqnarray}
    for $x_0 \in H$ $\mu$-a.e. and for $(x,y) \in H \in \R^m$ a.e. in $\L(X_t^\mu, Y)$, where $\widetilde{Z}(y)$ does not depend on $x_0$, hence will be canceled out in \Eqref{eq:transition-ident}. 
    By Bayes' theorem \Eqref{eq:posterior},
    \begin{eqnarray*}
        \frac{d \mu^y}{d \mu}(\cdot) =  \frac{1}{Z(y)}\exp\left( -\frac{1}{2}\norm{A\cdot-y}^2_\Gamma \right) \quad \text{in}\; L^1(\mu),
    \end{eqnarray*}
    Let us now write
    \begin{eqnarray*}
         -\frac{1}{2} \norm{Ax_0-y}^2_\Gamma 
        & = &  \langle x_0, A^*\Gamma^{-1}y \rangle_H - \frac{1}{2}\langle x_0, A^*\Gamma^{-1}Ax_0 \rangle_H  -\frac{1}{2} \norm{y}_\Gamma^2
    \end{eqnarray*}
   and set $\widetilde{Z}(y) = Z(y)\exp{\left(\frac{1}{2} \norm{y}_\Gamma^2\right)}$. For $x_0 \in H_{\widetilde{\C}_t}$,
    \begin{equation}
    \begin{aligned}
    \label{eq:part-of-prop-proof}
    & n_t(x_0, X_t^\mu)  \exp\left(\langle x_0, A^*\Gamma^{-1}Y \rangle_H - \frac{1}{2}\langle x_0, A^*\Gamma^{-1}Ax_0 \rangle_H  -\frac{1}{2} \norm{Y}_\Gamma^2\right) \\
    &= \frac{1}{\widetilde{Z}(Y)} \exp\big( e^{t/2} \langle x_0, X_t^\mu \rangle_{H_{\widetilde{\C}_t}} - \frac{1}{2}\norm{x_0}_{H_{\widetilde{\C}_t}}^2 + \langle x_0, A^* \Gamma^{-1}Y \rangle_H - \frac{1}{2} \langle x_0, A^* \Gamma^{-1} A x_0 \rangle_H \big) \\
    &=  \frac{1}{\widetilde{Z}(Y)}\exp\left( e^{t/2} \langle x_0, X_t^\mu \rangle_{H_{\widetilde{\C}_t}} + \langle x_0, A^* \Gamma^{-1} Y \rangle_H - \frac{1}{2}\norm{x_0}_{H_{\widetilde{\C}_t}}^2 -\frac{1}{2} \langle x_0, A^* \Gamma^{-1} A x_0 \rangle_H \right) \\ 
    &= \frac{1}{\widetilde{Z}(Y)}\exp\left( \langle x_0, m_t(X_t^\mu,Y) \rangle_{H_{\Sigma_t}} - \frac{1}{2} \norm{x_0}_{H_{\Sigma_t}}^2\right),
    \end{aligned}
    \end{equation}
    in distribution, where we have used Corollary \ref{cor:ID1} and the identity
    \begin{equation}\label{eq:ID2}
        \norm{x_0}_{H_{\widetilde{\C}_t}}^2 + \langle x_0, A^*\Gamma^{-1}Ax_0 \rangle_{H} = \norm{x_0}_{H_{\Sigma_t}}^2,
    \end{equation}
    which follows from Lemma \ref{lem: relation between CM spaces of U and Vt} $(iv)$.
    
    Now by Lemma \ref{lem: relation between CM spaces of U and Vt} $(ii)$, $x_0\in H_{\Sigma_t}$ and thus the Cameron--Martin theorem gives
    \begin{eqnarray*}
        \exp\left( \langle x_0, m_t(X_t^\mu,Y) \rangle_{H_{\Sigma_t}} - \frac{1}{2}\norm{x_0}^2_{H_{\Sigma_t}} \right) = \tilde{n}_t(x_0, m_t(X_t^\mu,Y))
    \end{eqnarray*}
    in distribution,     which together with \Eqref{eq:part-of-prop-proof} lead to \Eqref{eq:Bayes-identity}. Therefore, \Eqref{eq:transition-ident} holds true, which completes the proof.
\end{proof}

\subsection{Gaussian example}
\begin{proof}[Proof for Lemma \ref{lem: score function is compact}]
     Let $\widetilde \C$ be a covariance operator such that $\mu\left(\widetilde{\C}^{1/2}(H)\right) = 1$. As $S_0^{1/2}(H)$ is the intersection of all linear subspaces of full measure under $\mu$ (see Prop. 4.45 in \citealp{hairer2023introductionSPDE}), it holds $S_0^{1/2}(H) \subset \widetilde{\C}^{1/2}(H)$. 

     We now find another covariance operator $\C$ such that the score function $\tilde s$ corresponding to $\C$ is bounded linear.
     Let $\C$ be such that $\C^{1/2}(X) \subset S_0^{1/2}(X)$ for any linear subset  $X$ of $H$.
    This implies
    \begin{eqnarray}\label{eq: inclusion of range of covariances}
        S_0(H) \subset \C(H) = \Sigma_t(H).
    \end{eqnarray}
    %Hence the map $S_0^{-1} \Sigma_t$ is well defined and bounded ($S_0: H \rightarrow S_0(H) \subset \Sigma_t(H)$ is invertible).
   We identify the law of $\widetilde{X}_t^{\mu}$ by using the relation \begin{eqnarray*}
        \widetilde{X}_t^{\mu}|x_0 \sim \N(x_0, \Sigma_t),
    \end{eqnarray*}
    where $x_0$ is a realisation of the prior $\mu = \N(0, S_0)$. Hence
    \begin{eqnarray*}
        \widetilde{X}_t^{\mu} \sim \N(0, \Sigma_t + S_0).
    \end{eqnarray*}
    By the reasoning of Lemma 4.4 in \citet{hairer2006analysis} it holds that 
    \begin{eqnarray*}
        \widetilde{X}_0^{\mu} |\widetilde{X}_t^{\mu} \sim \N(m', C')  
    \end{eqnarray*}
    with some covariance operator $C'$ and
    \begin{eqnarray*}
        m' &=& \Sigma_t (\Sigma_t + S_0)^{-1} \widetilde{X}_t^{\mu}. 
    \end{eqnarray*}
    Note that in Lemma 4.4 of \citet{hairer2006analysis}, the operator $\Sigma_t (\Sigma_t + S_0)^{-1}$ is defined as a measurable extension of the bounded map
    \begin{eqnarray*}
        A: (\Sigma_t + S_0)^{1/2}(H)\rightarrow H, \qquad x \mapsto \Sigma_t (\Sigma_t + S_0)^{-1}x
    \end{eqnarray*}
    to the whole space $H$ as per Theorem II.3.3 in \citet{DaleckijJurijL1991Made}. In our case, it is possible to give an explicit formula for a possible extension using the inclusion \Eqref{eq: inclusion of range of covariances}. %, which is not assumed in the context of \cite{hairer2006analysis}.
    We define
    \begin{eqnarray*}
        \Sigma_t (\Sigma_t + S_0)^{-1} &:=& \left( (\Sigma_t + S_0)^{-1} \Sigma_t \right)^* \\
        & = & \left( (I+ \Sigma_t^{-1}S_0)^{-1} \Sigma_t^{-1}S_0 \right)^*.
    \end{eqnarray*}
   This map coincides with $A$ on $(\Sigma_t + S_0)^{1/2}(H)$ and as we will now show, it is defined even on the whole space $H$.
   The operator $\Sigma_t: H \rightarrow \Sigma_t(H)$ is  bounded and invertible. Hence  also $\Sigma_t^{-1}: \Sigma_t(H) \rightarrow H$ is  bounded. By \Eqref{eq: inclusion of range of covariances} the map $\Sigma_t^{-1} S_0$ is well defined and bounded. 
    % Using this definition it is possible to repeat the proof of \cite{hairer2006analysis} to avoid the implicit extension. (Another line of thought: we have found a suitable extension) Compare with exercise 3.72 in \cite{hairer2023introductionSPDE} for a slightly weaker version of the extension.
    We can now identify the score function $\tilde s$ by using the previous equality for the conditional expectation
    \begin{eqnarray*}
        \tilde s(z, t; \mu) &=& -\left(z- \E(\widetilde{X}^{\mu}_0 |\widetilde{X}^{\mu}_t = z)\right) \\
        & = &-z + S_0 (\Sigma_t + S_0)^{-1}z \\
        & = & \left[S_0 (\Sigma_t + S_0)^{-1} - I\right]z. \\
        & = &- \Sigma_t (\Sigma_t + S_0)^{-1}z.
        %& =& -\Sigma_t S_0^{-1}(\Sigma_t S_0^{-1} + I )^{-1}z.
    \end{eqnarray*}
    This yields the claim.
    \end{proof}

\subsection{Conditional score matching}\label{sec: proof of conditional score matching}

Recall that $m_t(x, y) = e^{t/2}x + (e^t - 1)\C A^* C_t^{-1}(y - e^{t/2}Ax)$ and $\xi_t(x,y) :=  \C A^* \Gamma^{-1} y + \lambda(t) x$ for $x\in H$ and $y\in \R^m$. Moreover, $R_t := (e^t-1) I + (e^t-1)^2 \C A^* C_t^{-1} A : H\to H$.

\begin{lemma}\label{lem: writing m_t for no forward}
The operator $R_t : H\to H$ is bijective and the identity
\begin{eqnarray*}
    m_t(x,y) = R_t \xi_t(x, y).
\end{eqnarray*}
holds for all $x\in H$ and $y\in \R^m$.
\end{lemma}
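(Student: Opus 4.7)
The statement has two parts: bijectivity of $R_t$, and the identity $m_t(x,y) = R_t \xi_t(x,y)$. Neither is difficult; both reduce to algebraic manipulations based on the resolvent-type identity $(e^t-1)A\C A^* = C_t - \Gamma$ obtained directly from the definition $C_t = (e^t-1)A\C A^* + \Gamma$ in \Eqref{eq:Ct}.

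For bijectivity, the plan is to exhibit a candidate inverse and verify it directly, in the spirit of Lemma \ref{lem: Useful equality that show that Sigma is still p.d.}. Specifically, I would check that
\begin{equation*}
    S_t := \frac{1}{e^t-1}I + \C A^* \Gamma^{-1}A
\end{equation*}
is a two-sided inverse of $R_t$. Computing $R_t S_t$ yields four terms; the cross terms combine via
\begin{equation*}
    (e^t-1)^2 \C A^* C_t^{-1} A \C A^* \Gamma^{-1}A = (e^t-1)\C A^* C_t^{-1}(C_t - \Gamma)\Gamma^{-1}A,
\end{equation*}
so that the $C_t^{-1}$ and $\Gamma^{-1}$ terms cancel in the appropriate way, leaving $R_t S_t = I$. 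The verification $S_t R_t = I$ proceeds symmetrically. (This is essentially the same computation as in Lemma \ref{lem: Useful equality that show that Sigma is still p.d.}, so I would cite that structure rather than repeat it verbatim.)

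For the identity $m_t(x,y) = R_t\xi_t(x,y)$, the plan is a direct expansion. Writing out
\begin{equation*}
    R_t \xi_t(x,y) = R_t\bigl(\C A^*\Gamma^{-1}y + \lambda(t) x\bigr),
\end{equation*}
the coefficient of $x$ simplifies using $(e^t-1)\lambda(t) = e^{t/2}$ to give exactly the terms $e^{t/2}x - e^{t/2}(e^t-1)\C A^* C_t^{-1}Ax$ appearing in $m_t(x,y)$. The coefficient of $y$ produces
\begin{equation*}
    (e^t-1)\C A^*\Gamma^{-1}y \mp (e^t-1)^2 \C A^* C_t^{-1}A \C A^*\Gamma^{-1}y,
\end{equation*}
and applying the identity $(e^t-1)A\C A^* = C_t - \Gamma$ to the second term collapses the expression to $(e^t-1)\C A^* C_t^{-1}y$, matching the remaining term of $m_t(x,y)$ in \Eqref{eq: definition of mut}.

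No step here is conceptually hard; the only mild obstacle is keeping the various factors of $(e^t-1)$, $\lambda(t)$, $\Gamma^{-1}$, and $C_t^{-1}$ straight and applying the $C_t$-identity in the correct direction in each appearance. Once that algebraic bookkeeping is carried out cleanly, both bijectivity and the claimed formula follow.
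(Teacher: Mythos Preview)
Your proposal is correct and follows essentially the same approach as the paper: bijectivity is reduced to Lemma \ref{lem: Useful equality that show that Sigma is still p.d.} (b) (the paper cites it directly, you re-derive the explicit inverse $S_t$ which is the same computation), and the identity $m_t = R_t\xi_t$ is verified by direct expansion using $(e^t-1)A\C A^* = C_t - \Gamma$ and $(e^t-1)\lambda(t) = e^{t/2}$, exactly as in the paper's proof.
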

\begin{proof}
    First note that bijectivity is implied by Lemma \ref{lem: Useful equality that show that Sigma is still p.d.} (b).
    By direct computation,
    \begin{eqnarray*}
        R_t \xi_t(x, y)&= & R_t \left(\C A^* \Gamma^{-1}y + \lambda(t) x\right) \\ 
        & = & (e^t-1) \C A^* \Gamma^{-1}y - (e^t-1)^2  \C A^* C_t^{-1} A \C A^* \Gamma^{-1}y \\
        & & + e^{t/2} x - e^{t/2}(e^t-1)\C A^* C_t^{-1}Ax \\
        & = & e^{t/2}x - e^{t/2}(e^t-1)\C A^* C_t^{-1}Ax \\
        & & + (e^t-1) \C A^* (C_t^{-1} [ \underbrace{(e^t-1) A \C A^*+ \Gamma}_{= C_t} ] + (e^t-1)C_t^{-1} A \C A^* ) \Gamma^{-1}y \\
        & = & e^{t/2}x + (e^t-1) \C A^* C_t^{-1} \left( y- e^{t/2}Ax\right) \\
        & = & m_t(x, y).
    \end{eqnarray*}
    This completes the proof.
\end{proof}

\begin{proof}[Proof of Lemma \ref{lem: conditional score matching}]
We replicate the arguments of \cite{baldassari2024conditional} adapted to our setting. First, observe that 
\begin{multline*}
     \norm{r_\theta(R_t^{-1} \tilde x_t, t; \mu) - \tilde s(\tilde x_t, t; \mu) - \tilde x_t}_H^2  \\=  \norm{r_\theta(R_t^{-1} \tilde x_t, t; \mu)}_H^2 
    +  \norm{ \tilde s(\tilde x_t, t; \mu) - \tilde x_t}_H^2 - 2 \langle r_\theta(R_t^{-1} \tilde x_t, t; \mu) , \tilde s(\tilde x_t, t; \mu) - \tilde x_t \rangle_H.
\end{multline*}
    It holds by definition \Eqref{eq: posterior score in infinite dimensional space}
    \begin{align*}
        \E_{\tilde x_t \sim \L(\widetilde{X}_t^{\mu})} \langle & r_\theta(R_t^{-1}\tilde x_t, t; \mu) , \tilde s(\tilde x_t, t; \mu) - \tilde x_t \rangle_H  \\
        &= - \E_{\tilde x_t \sim \L(\widetilde{X}_t^{\mu})} \left\langle r_\theta(R_t^{-1}\tilde x_t, t; \mu) ,\E_{\tilde x_0 \sim \L(\widetilde{X}_0^{\mu} | \tilde x_t)} (\tilde x_t - \tilde x_0) - \tilde x_t \right\rangle_H \\
        & = \E_{\tilde x_0 \sim \L(\widetilde{X}_0^{\mu})} \E_{\tilde x_t \sim \L(\widetilde{X}_t^{\mu} | \tilde x_0 )} \langle r_\theta(R_t^{-1}\tilde x_t, t; \mu), \tilde x_0 \rangle_H.
    \end{align*}
    Hence it holds
    \begin{multline*}
        \lambda(t)^2 \E_{\tilde x_t \sim \L(\widetilde{X}_t^{\mu})} \norm{r_\theta(R_t^{-1} \tilde x_t, t; \mu) - \tilde s(\tilde x_t, t; \mu) + \tilde x_t}_H^2 \\
        =  V'(t) + \lambda(t)^2\E_{\tilde x_0 \sim \L(\widetilde{X}_0^{\mu})} \E_{\tilde x_t \sim \L(\widetilde{X}_t^{\mu} | \tilde x_0 )} \norm{r_\theta(R_t^{-1} \tilde x_t, t; \mu)- \tilde x_0 }_H^2
    \end{multline*}
    with
    \begin{equation}
        \label{eq:Vdash_const}
        V'(t) = \lambda(t)^2 \E_{\tilde x_t \sim \L(\widetilde{X}_t^{\mu})} \norm{\tilde s(\tilde x_t, t, \mu) - \tilde x_t}_H^2 - \lambda(t)^2 \E_{\tilde x_0 \sim \L(\widetilde{X}_0^{\mu})} \norm{\tilde x_0}_H^2.
    \end{equation}
    To conclude, we add expectation with respect to $t \sim [\delta, T]$.
    Note that by Assumption \ref{ass: stability estimate; no forward process} the first term on the rhs of \Eqref{eq:Vdash_const} is uniformly bounded in $t$ and by elementary calculations
    \begin{eqnarray*}
        \E_{\tilde x_0 \sim \L(\widetilde{X}_0^{\mu})} \lambda(t)^2 \norm{ \tilde x_0}_H^2 
         \le  \lambda(\delta)^2 \E_{x\sim \mu}\norm{x}_H,
    \end{eqnarray*}
    such that
    \begin{eqnarray*}
        V:=\E_{t \sim [\delta, T]} V'(t) \le \frac{1}{T-\delta}\sup_{t \in [\delta, T]} V'(t) < \infty.
    \end{eqnarray*}
    This concludes the proof.

\end{proof}
    
\section{Proofs for Section \ref{sec: Estimating numerical errors}}

\begin{proof}[Proof of Theorem \ref{thm: stability estimate}]
Below, we use the notation $f\lesssim g$, if $f(x) \leq C g(x)$ for all $x$ with some universal constant $C>0$. 
Recall that true solution $W_t^{\mu^y}$ of the time-reversed denoising process and corresponding approximative solution process $V_t^{\mu^y}$ satisfy
\begin{align*}
    dW_t^{\mu^y} &= \left(-\frac{1}{2} W_t^{\mu^y} - s(W_t^{\mu^y}, T-t; \mu^y) \right) dt + \C^{1/2} dB_t, \\
    d V_t^{\mu^y} &= -\frac{1}{2}  V_\low{t}^{\mu^y} - \lambda(T-\low{t})\left[ r_\theta\left(\xi_{T - \low t}(V_{\low{t}}^{\mu^y}, y), T-\low{t}; \mu\right)  - e^{(T-\low{t})/2} V_{\low{t}}^{\mu^y} \right] dt +\C^{1/2} dB_t.
\end{align*} 
In what follows, we abbreviate the expectation 
$\E_{y \sim \pi_y}\E_{(w_T, v_{T-\delta}) \sim \L( W_T^{\mu^y},V_{T-\delta}^{\mu^y})}$ as $\E$, unless otherwise specified.

{\bf Decomposition of the error.} Let us first consider the difference
\begin{eqnarray*}
    \E \norm{w_t -  v_t}_H^2 & = & \E \norm{w_t - w_0 - (v_t - v_0) + (w_0 - v_0)}_H^2\\
    &\lesssim&  \E \norm{w_t - w_0 - (v_t - v_0)}_H^2 + \varepsilon_{INIT},
\end{eqnarray*}
where $\varepsilon_{INIT} := \E \norm{w_0 - v_0}_H^2$. 
Now we decompose the difference into three terms as follows:
\begin{equation*}
	w_t - w_0 - (v_t - v_0) = \int_0^t \left[{\mathcal I}_1(\tau) + {\mathcal I}_2(\tau) + {\mathcal I}_3(\tau)\right]d\tau,
\end{equation*}
where we have
\begin{eqnarray*}
	{\mathcal I}_1(\tau) & = & -\frac 12 \left(w_\tau - w_{\low \tau}\right) +
	s(w_\tau, T-\tau; \mu^y)- s(w_{\low\tau}, T-\low \tau; \mu^y), \\
	{\mathcal I}_2(\tau) & = & s(w_{\low \tau}, T - \low \tau; \mu^y) - \lambda(T-\low \tau)\left(r_\theta(\xi_{T-\low\tau}(w_{\low\tau}, y), T-\low \tau; \mu) -e^{(T-\low \tau)/2}w_{\low\tau}\right) \quad \text{and}\\
	{\mathcal I}_3(\tau) & = & \lambda(T-\low \tau) \left\{r_\theta(\xi_{T-\low\tau}(w_{\low\tau}, y), T - \low \tau; \mu) - r_\theta(\xi_{T-\low\tau}(v_{\low\tau}, y), T - \low \tau; \mu)\right\} \\
	& & - (\lambda(T-\low \tau) e^{(T-\low \tau)/2}+1)\left(w_{\low\tau} - v_{\low\tau} \right).
\end{eqnarray*}

{\bf Bound for $\varepsilon_{INIT}$.} Recall that $v_0 =  (1-e^{-T}) Z$ and $w_0 = (1-e^{-T}) Z + e^{-T/2} X_0$, where $Z \sim \N(0,\C)$ and $X_0\sim \mu^y$. It directly follows that
\begin{eqnarray*}
    \varepsilon_{INIT} \leq  e^{-T} \E_{y\sim\pi_y}\E_{x\sim \mu^y} \norm{x}_H^2 = e^{-T} \E_{x\sim \mu} \norm{x}_H^2.
\end{eqnarray*}
where we applied marginalization of the joint distribution.

{\bf Contribution from ${\mathcal I_1}$}: We observe that
\begin{align*}
    & \varepsilon_{NUM} := \E \int_0^{T-\delta} \norm{{\mathcal I}_1(\tau)}_H^2 d\tau \\
    & = \;\E \int_0^{T-\delta} \norm{-\frac 12 \left(w_\tau - w_{\low\tau}\right) +
	s(w_\tau, T-\tau; \mu^y)- s(w_{\low\tau}, T-\low \tau; \mu^y)}_H^2 d \tau \\
    & \lesssim \E \int_0^{T-\delta} \norm{w_\tau - w_{\low\tau}}_H^2 d\tau + \E \int_0^{T-\delta} \norm{s(w_\tau, T-\tau; \mu^y)- s(w_{\low\tau}, T-\low \tau; \mu^y)}_H^2 d\tau.
\end{align*}
By Lemma 2 in \citet{pidstrigach2023infinitedimensional}, it holds that
\begin{eqnarray*}
    s(X_t^{\mu^y}, t; \mu^y) = e^{(t- \tau)/2}\E\left( s(X_\tau^{\mu^y}, \tau; \mu^y) | X_t^{\mu^y}\right), \quad 0 < t \le \tau \le T,
\end{eqnarray*}
for $\pi_y$-a.e. $y\in\R^m$. Therefore, we deduce by Lemma 11 in \citet{chen2023improved} that for the time-reversed process it holds that
\begin{align*}
     \E \big\|& s(w_\tau, T-\tau; \mu^y)   - s(w_{\low\tau}, T-\low{\tau}; \mu^y)\big\|_H^2  \\
     & \le 4 \E \norm{s(w_\tau, T-\tau; \mu^y) - s( e^{(\tau - \low{\tau})/2}w_{\low\tau}, T- \tau; \mu^y)}_H^2 \\
     & \quad\quad+ 2 \left(1-e^{\tau - \low{\tau}}\right)^2 \E \norm{s(w_\tau, T-\tau; \mu^y)}_H^2 \\
     & \leq 4 L_s^2(T-\tau) \E \norm{w_\tau- e^{(t - \low{t}) / 2}w_{\low{\tau}}}_H^2 + 2(1-e^{\tau - \low{\tau}})^2 \E \norm{s(w_\tau, T - \tau; \mu^y)}_H^2,
\end{align*}
where we used the Lipschitz continuity of $s$.
We note that 
\begin{eqnarray}\label{eq: transition kernel ou process}
     w_s | w_t \sim \N\left(e^{-(t - s)/2} w_t, (1-e^{-(t - s)} ) \C\right),
\end{eqnarray}
for $T \ge t \ge s \ge 0$, see \citet{FactsForInfiniteDimOU}.
It immediately follows that 
\begin{eqnarray*}
    \E \norm{w_\tau - e^{(\tau - \low{\tau}) / 2}w_{\low{\tau}}}_H^2 = (1-e^{-(\tau - \low{\tau})}) \tr_H(\C),
\end{eqnarray*}
and, consequently,
\begin{align*}
    \E \big\|w_\tau - & w_{\low\tau}\big\|_H^2 \lesssim \E \norm{w_\tau - e^{(\tau - \low{\tau}) / 2}w_{\low\tau}}_H^2 + (1-e^{(\tau - \low{\tau}) / 2})^2 \E \norm{w_{\low\tau}}_H^2 \\
    & \lesssim  (1-e^{-(\tau - \low{\tau})}) \tr_H(\C) \\
    & \quad\quad + (1-e^{(\tau - \low{\tau}) / 2})^2 \left(\E_{y\sim\pi_y}\E_{x\sim \mu^y} \norm{x}_H^2 + \left((1-e^{-(T-\low{\tau})}\right) \tr_H(\C)\right) \\
    & \lesssim (1-e^{-\Delta t}) \tr_H(\C)
    +(1-e^{\Delta t/ 2})^2 \left(\E_{x\sim\mu} \norm{x}^2_H + \tr_H (\C)\right).
\end{align*}
Combining the arguments yields
\begin{eqnarray*}
    \varepsilon_{NUM} & \lesssim &
    (T-\delta) \left((1-e^{-\Delta t}) \tr_H(\C)
    +(1-e^{\Delta t/ 2})^2 \left(\E_{x\sim\mu} \norm{x}^2_H + \tr_H (\C)\right)\right) \\
    & & + L_s^2(T-\tau) (1-e^{-\Delta t}) \tr_H(\C) 
    + (1-e^{\Delta t})^2 \E \norm{s(w_\tau, T - \tau; \mu^y)}_H^2.
\end{eqnarray*}
Since the last expectation and $\E_{x\sim\mu} \norm{x}_H^2$ are bounded by assumption, and note that $1-e^{-\Delta t} = {\mathcal O}(\Delta t),\; (1-e^{\Delta t/ 2})^2 = {\mathcal O}(\Delta t)$, we obtain the required upper bound.

{\bf Contribution from ${\mathcal I_2}$}: Applying Theorem \ref{thm:score_id1 infinite dimension}, we obtain
\begin{eqnarray*}
	{\mathcal I_2}(\tau) & = & \lambda(T-\low \tau)\big\{\Tilde{s}(R_{T-\low \tau} \xi(w_{\low\tau},y), T-\low \tau; \mu) + R_{T-\low \tau} \xi(w_{\low\tau},y)  \\
	& & \quad - r_\theta(\xi_{T-\low\tau}(w_{\low\tau}, y), T - \low \tau; \mu) \big\} \\
	& = & \lambda(T-\low \tau)\big\{\Tilde{s}(m_{T-\low \tau} (w_{\low\tau},y), T-\low \tau; \mu)  + m_{T-\low \tau} (w_{\low\tau},y) \\
	& &   - r_\theta(R_{T-\low \tau}^{-1} m_{T-\low \tau}(w_{\low\tau}, y), T - \low \tau; \mu) \big\}
\end{eqnarray*}
and now it follows by reversing time and applying Lemma \ref{lem: training loss} that
\begin{equation*}
	\E \int_0^{T-\delta} \norm{{\mathcal I}_2(\tau)}_H^2 d\tau = \varepsilon_{LOSS},
\end{equation*}
where $\varepsilon_{LOSS}$ is given by \Eqref{eq:discrete_loss}.

{\bf Contribution from ${\mathcal I_3}$}: By triangle inequality and the assumption on uniform Lipschitzness of $r_\theta$ we have
\begin{align*}
	\big\|{\mathcal I}_3 & (\tau)\big\|_H \leq  
	\lambda(T-\low\tau) L_s(T-\low\tau)
	\norm{\xi_{T-\low\tau}(w_{\low\tau}, y) -\xi_{T-\low\tau}(v_{\low\tau}, y)}_H \\
%	+(\lambda(T-\low\tau)e^{(T-\low\tau)/2} +1)\norm{W_{\low \tau}-V_{\low \tau}}_H\bigg) \\
	&   %\lambda(T-\low\tau)\bigg(L_s(T-\low\tau)\lambda(T-\low\tau)
	+\lambda(T-\low\tau)e^{(T-\low\tau)/2} +1) \norm{w_{\low\tau}-v_{\low\tau}}_H \\
	& \le \kappa_s(T-\low\tau) \norm{w_{\low\tau}-v_{\low\tau}}_H,
\end{align*}
where we abbreviate $\kappa_s(\tau') = L_s(\tau')\lambda(\tau')^2 +\lambda(\tau')e^{\tau'/2} +1$ for convenience. Note that by assumption $\kappa_s(\cdot) \in L^2[\delta, T]$.

Combining the estimates, we obtain
\begin{equation*}
	\E \norm{w_{T-\delta} -  v_{T-\delta}}_H^2 \lesssim
	\varepsilon_{NUM} + \varepsilon_{LOSS} + \varepsilon_{INIT} + \E \int_0^{T-\delta} \kappa_s(T-\low\tau)^2 \norm{w_{\low\tau}-v_{\low\tau}}_H^2 d\tau.
\end{equation*}
Applying Gr\"onwall's inequality, it follows
\begin{equation*}
	\E \norm{w_{T-\delta} -  v_{T-\delta}}_H^2 \lesssim
	\left(\varepsilon_{NUM} + \varepsilon_{LOSS} + \varepsilon_{INIT}\right) \exp\left(\int_0^{T-\delta} \kappa_s(T-\low\tau)^2 d\tau\right).
\end{equation*}

We may factor in the truncation by utilizing \Eqref{eq: transition kernel ou process}
\begin{eqnarray*}
    \E {\norm{w_T -  w_{T-\delta}}_H^2} &=& \E_{y\sim \pi_y}\E_{x\sim\mu^y, z\sim \N(0,\C)}\norm{(1- e^{-\delta/2})x + \sqrt{1-e^{-\delta}} z}_H^2 \\
    & \lesssim &  (1- e^{-\delta/2})^2 \E_{x\sim\mu}\norm{x}_H^2 + (1-e^{-\delta})  \tr_H(\C) \\
    & = & {\mathcal O}(\delta).
\end{eqnarray*}
Hence, we conclude
\begin{equation*}
    \E \norm{w_T -  v_{T-\delta}}_H^2
    \lesssim \left(\varepsilon_{NUM} + \varepsilon_{LOSS} + \varepsilon_{INIT}\right) \exp\left( 2 \int_0^{T-\delta} \kappa_s(T-\low\tau)^2 d\tau\right) +  \delta,
\end{equation*}
which yields the result.
\end{proof}

\section{Details of numerical implementation}\label{section: Details of NN}

\paragraph{Neural network architecture}
In Figure \ref{fig: FNO}, we highlight the FNO architecture that we use to approximate the score function in our approach. Here, we implement 4 hidden layers and $s$ denotes the number of pixels in both horizontal and vertical directions, while $h$ represents the number of hidden nodes. The Conditional method employs the same architecture but omits the transform $f(t)$ and includes an additional input node $y$. This modification affects only the dimensionality of the first layer, changing it to $\R^{s \times s \times 3}$, while all subsequent layers remain unchanged. Consequently, the total number of parameters remains in the same order. 
The unconditional score approximation also follows the same architecture as UCoS but without the transform $f(t)$.

\begin{figure}[h]
    \centering
    \includegraphics[width=0.8\linewidth]{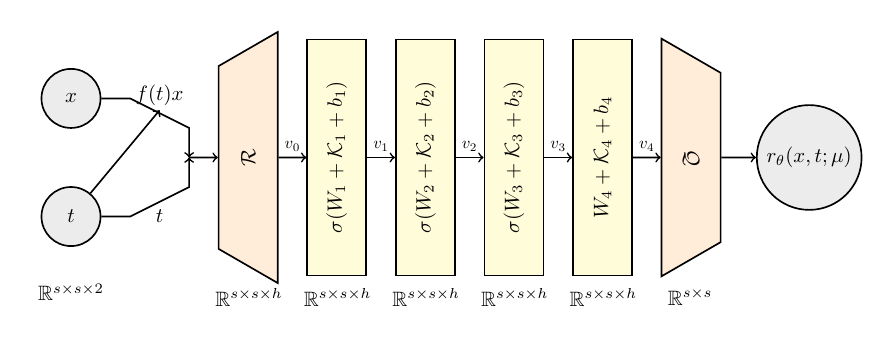}
     \caption{Our FNO architecture. The scalar multiplication is given by $f(t)x = \frac{x}{1+ \text{std}(x)}$.} Activation function $\sigma$ is given by Relu. The lifting block $\mathcal{R}$ is given by a linear layer and $\mathcal{Q}$ by two linear layers and an activation function. The Fourier layers are followed by a batch--normalization layer before the activation function. Below each layer is the dimensionality of a tensor after passing through each layer
    \label{fig: FNO}
\end{figure}

We use the FNO architecture from \citet{baldassari2024conditional} and choose $\delta = 5 \cdot 10^{-3}$ and $T = 1$. Similar to the works of \citet{pidstrigach2023infinitedimensional, baldassari2024conditional}, we run the forward SDE with a non-constant speed function leading to the SDE
\begin{eqnarray*}
    d X_t = -\frac{1}{2} \beta(t) X_t \dd t + \sqrt{\beta(t) \C} \dd W_t
\end{eqnarray*}
with $\beta(t) = 0.05 + t (10-0.05)$.
%From this, we train using the ADAM optimizer with learning rate $10^{-4}$ and ReduceLROnPlateau scheduler to 

We train the neural network for 10 epochs for the 32-nodes architecture, 30 epochs for the 64-nodes architecture and 65 epoch for the 128-nodes architecture, where training is done on a Nvidia A100 GPU with 80 GB of memory. Training takes from 14-46 minutes for the unconditional method (depending on the architecture), 17-47 minutes for the conditional method and 24-51 minutes for UCoS.

During the training process we use the Adam stochastic optimizer with linearly decaying learning rate ranging from 0.002 to 0.0005. 
%While we monitor the loss function \Eqref{eq:discrete_loss}, gradients are computed without scaling by $\lambda(t)$.
Samples are generated on the same machine.
A uniform Euler--Maruyama approximation with 1000 steps is employed. 
Additional information can be found in the numerical implementation \ExplSyntaxOn
\bool_if:NTF \accepted
    {\url{https://github.com/FabianSBD/SBD-task-dependent/tree/UCoS}.}
    {\url{https://github.com/FabianSBD/SBD-task-dependent/tree/UCoS}.}
\ExplSyntaxOff

\paragraph{Comparison methods}
For clarity, let us give the precise score approximation of the following methods that we compare our method with. Note that some of these methods are used in combination with some other sampling method in the reference but we will always utilise backwards in time Euler--Maruyama approximations.
\begin{itemize}
%     \item SDE LD \cite{jalal2021robustannealedlangevian} adjusts the unconditional score function by adding the gradient of the negative log-likelihood: 
% \begin{eqnarray*}
%     s_\theta(x, t; \mu^y ) = s(x, t, \mu) + A^* \Gamma^{-1} (y-Ax).
% \end{eqnarray*}
% This approximation is exact only at $t = 0$.
\item SDE ALD uses the score approximation of \citet{jalal2021robustannealedlangevian}, which is given by
\begin{eqnarray}\label{eq: SDE ALD}
    s_\theta(x, t; \mu^y ) = s(x, t, \mu) + A^* (\Gamma + \gamma_tI)^{-1}(y-Ax)
\end{eqnarray}
for the hyper-parameter $\gamma_t$ under the assumption $\Gamma = \sigma^2 I $ for some $\sigma^2 > 0$.
In line with \citet{feng2023scorebased}, we tune $\gamma_t$ such that the additive term has equal norm to the score function. 
\item DPS \cite{chung2023diffusion} employs a similar idea to SDE ALD by removing the hyper--parameter $\gamma_t$ and changing the mean of the Gaussian likelihood to be an estimate of $x_0$:
\begin{eqnarray*}
    s_\theta(x, t; \mu^y ) = s(x, t, \mu) - \rho\nabla_x \norm{y- A(\hat x_0(x))}_2^2
\end{eqnarray*}
for some $\rho$ which is chosen such that $\rho =\xi / \norm{y- A(\hat x_0(x))}$ for some constant $\xi$. We use a grid-search algorithm to find the optimal $\xi$.
Above $\hat x_0(x)$ is an estimate of $\E(X_0 | X_t = x)$ using the definition of the score function \Eqref{def: definition of OU Score function in infinite dimension}.
\item Proj \cite{Dey_2024}: This projection-based approach adds a data consistency step before every reverse time Euler--Maruyama step:
\begin{eqnarray*}
    x_t &=& (\lambda A^\top A + (1-\lambda)I)^{-1} \left( (1-\lambda )x_t' + \lambda A^\top y_t\right). %\\
    %x_s' &=& \frac{1}{2} (s-t) x_t - s(x_t, t, \mu) + W_t - W_s
\end{eqnarray*}
We tune the hyper-parameter $\lambda$ by a grid search algorithm. Since the operator $(\lambda A^\top A + (1-\lambda)I)^{-1}$ is very large in terms of GPU-memory, we employ an iterative scheme with 10 iterations at each time step to solve the inverse (CT imaging problem) or a very small batch-size (Deblurring problem). This significantly increases runtime compared to applying a precomputed operator.

\item Conditional \cite{baldassari2024conditional}: This approach approximates the conditional score function
\begin{eqnarray*}
    s(x, t; \mu^y) = -\frac{1}{1-e^{-t}}\left( x- e^{t/2}\E (X_0^\mu | X_t^\mu = x, Y = y)\right)
\end{eqnarray*}
directly.
\end{itemize}

% \paragraph{Deblurring}
% The forward operator $A$ corresponds to a two-dimensional convolution with a Gaussian kernel over the image domain. We observe the convoluted function on the same grid with additive Gaussian noise. 

% with a Gaussian kernel with variance $\sigma^2 = 6$. The images are Observational noise is drawn with variance $5\%$ of the largest singular value of forward operator.

% \paragraph{CT imaging}
% As linear forward operator we choose sparse-view CT imaging with a maximum view angle of first 30 degrees. Observational noise is drawn with variance $5\%$ of the largest singular value of forward operator.

%%%%%%%%%%%%%%%%%%%%%%%%%%%%%%%%%%%%%%%%%%%%%%%%%%%%%%%%%%%%%%%%%%%%%%%%%%%%%%%
%%%%%%%%%%%%%%%%%%%%%%%%%%%%%%%%%%%%%%%%%%%%%%%%%%%%%%%%%%%%%%%%%%%%%%%%%%%%%%%

\section{Additional figures}
\label{sec: additional samples}
More samples and in higher resolution for all methods can be found below in Figure \ref{fig:samples inpainting} for the inpainting problem, in Figures \ref{fig:samples 32}, \ref{fig:samples 64} and \ref{fig:samples 128} for the CT imaging problem and Figure \ref{fig:deblurring samples 128} for the Deblurring problem.

\begin{figure}[H]
    \centering
    \includegraphics[width=\linewidth]{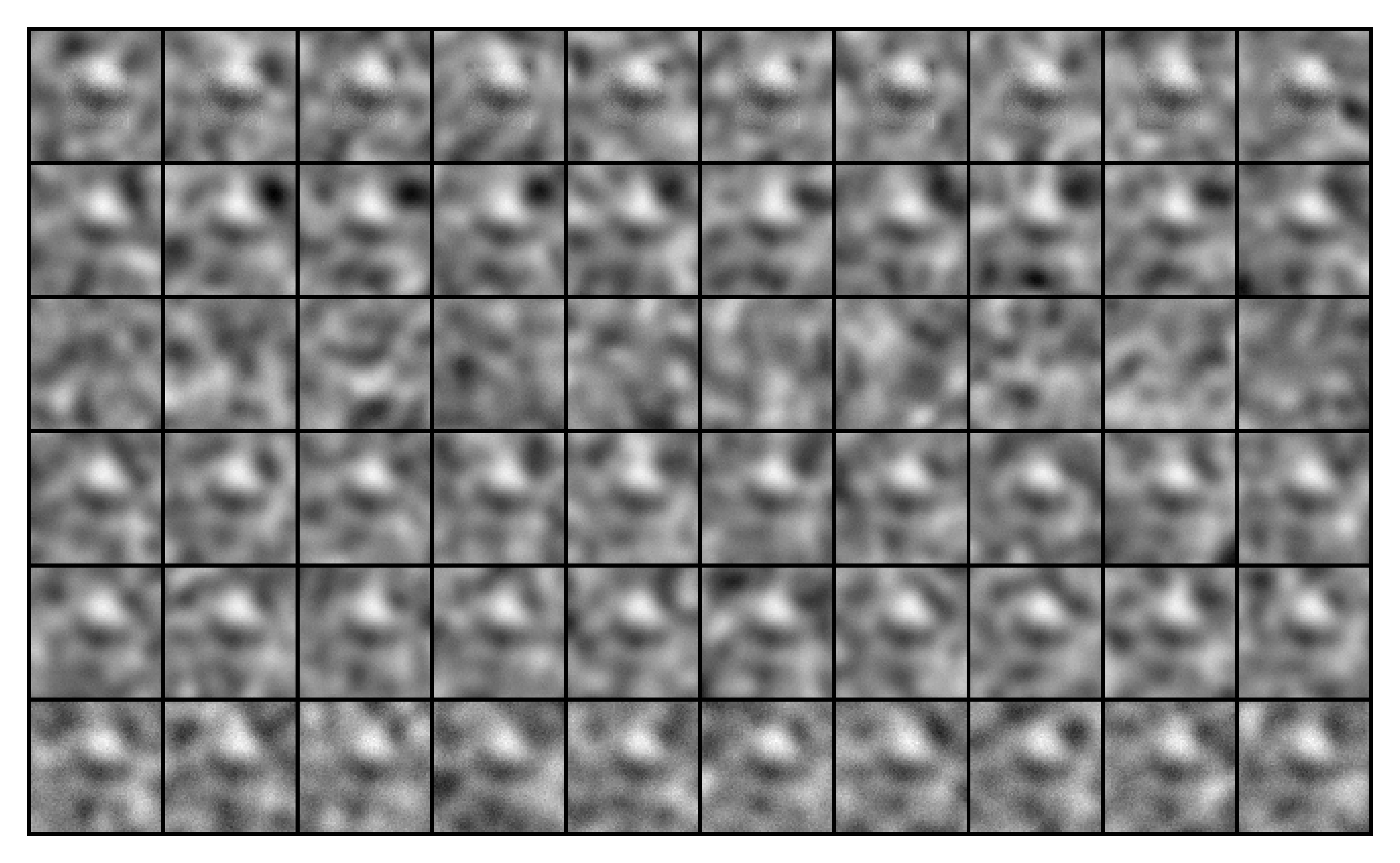}
    \caption{Posterior samples for Inpainting problem. Methods from top to bottom: SDE ALD, DPS, Proj, Conditional, UCoS, true posterior}
    \label{fig:samples inpainting}
\end{figure}

\begin{figure}[H]
    \centering
    \includegraphics[width=\linewidth]{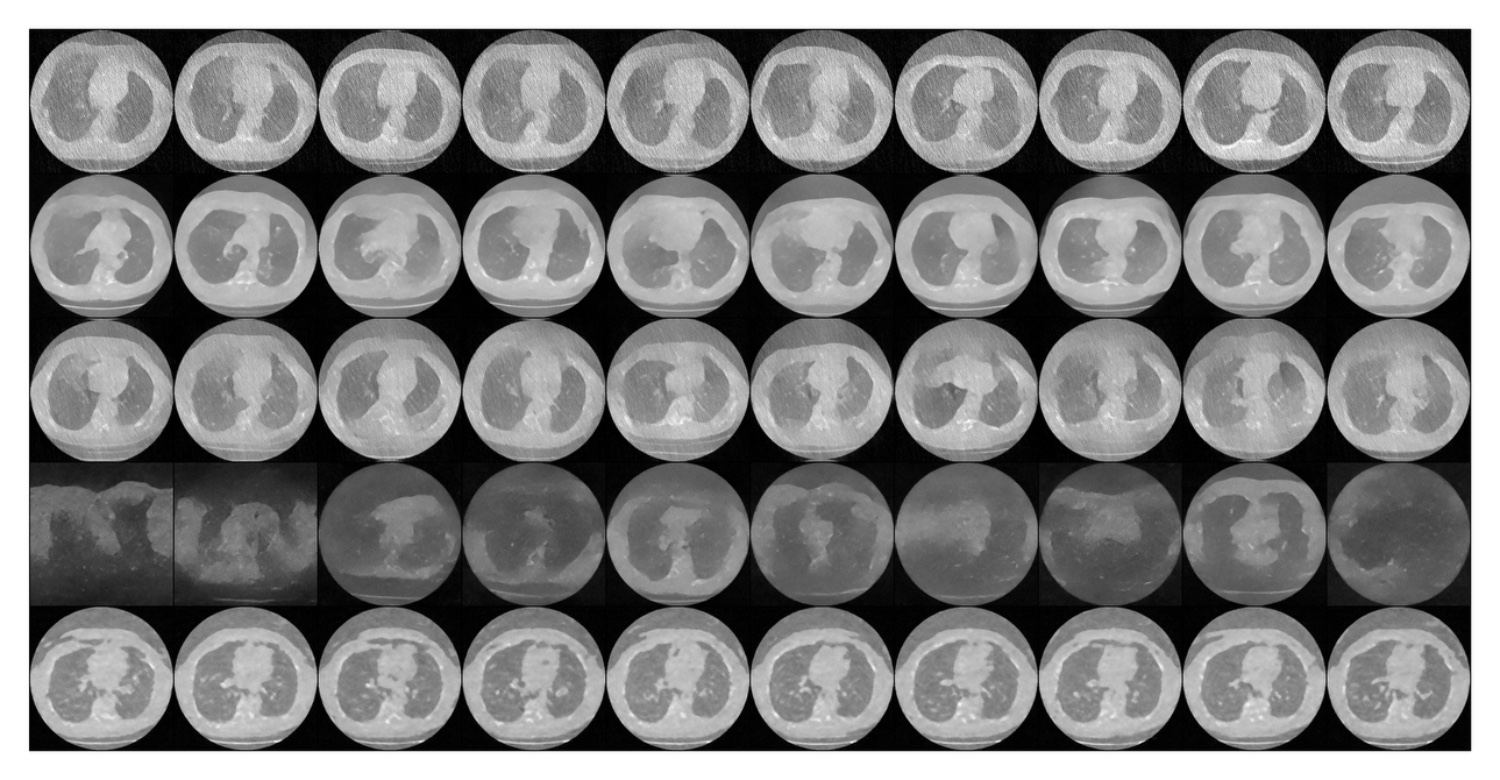}
    \caption{Posterior samples for CT imaging problem with a FNO architecture that uses 32 nodes per layer. Methods from top to bottom: SDE ALD, DPS, Proj, Conditional, UCoS.}
    \label{fig:samples 32}
\end{figure}

\begin{figure}[H]
    \centering
    \includegraphics[width=\linewidth]{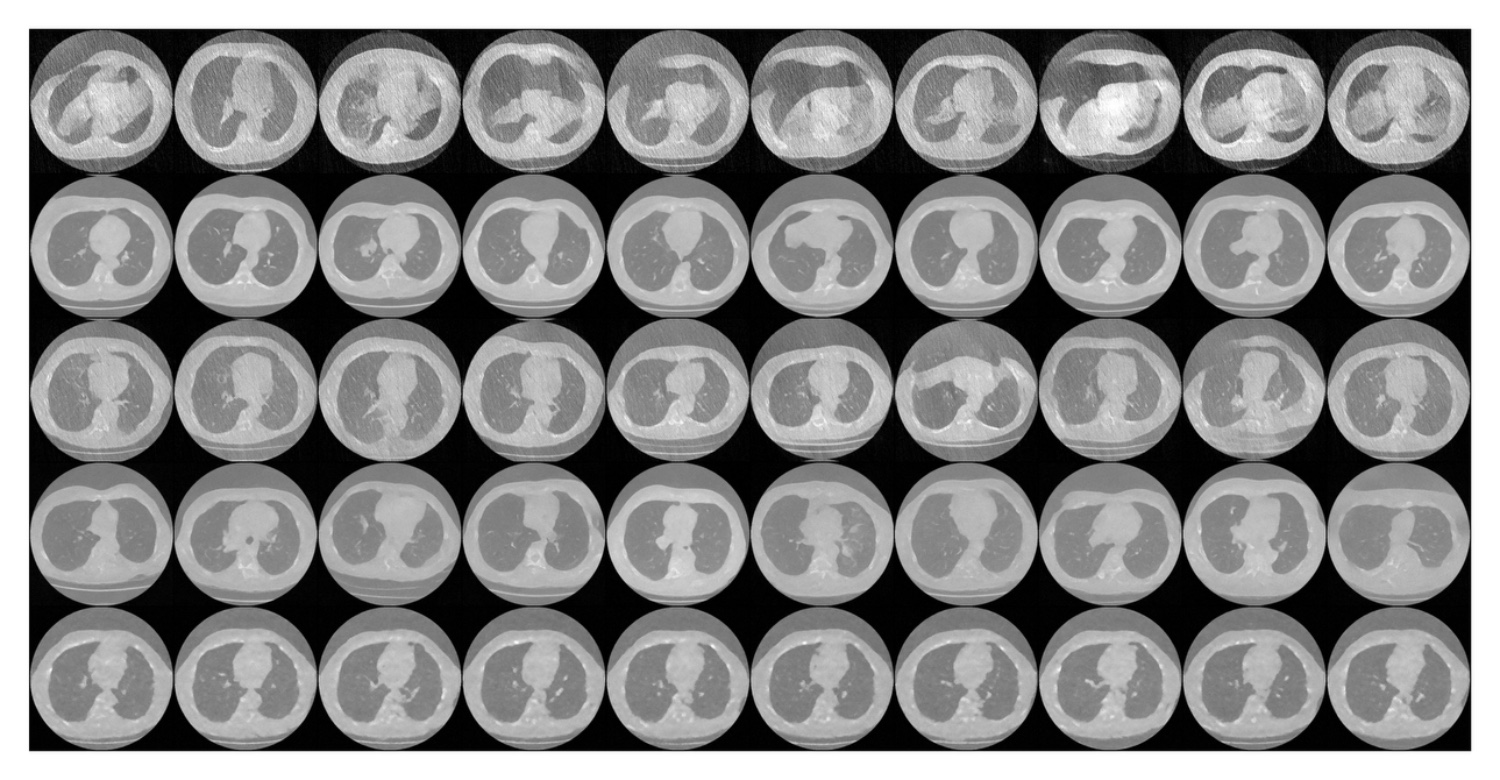}
    \caption{Posterior samples for CT imaging problem with a FNO architecture that uses 64 nodes per layer. Methods from top to bottom: SDE ALD, DPS, Proj, Conditional, UCoS.}
    \label{fig:samples 64}
\end{figure}

\begin{figure}[H]
    \centering
    \includegraphics[width=\linewidth]{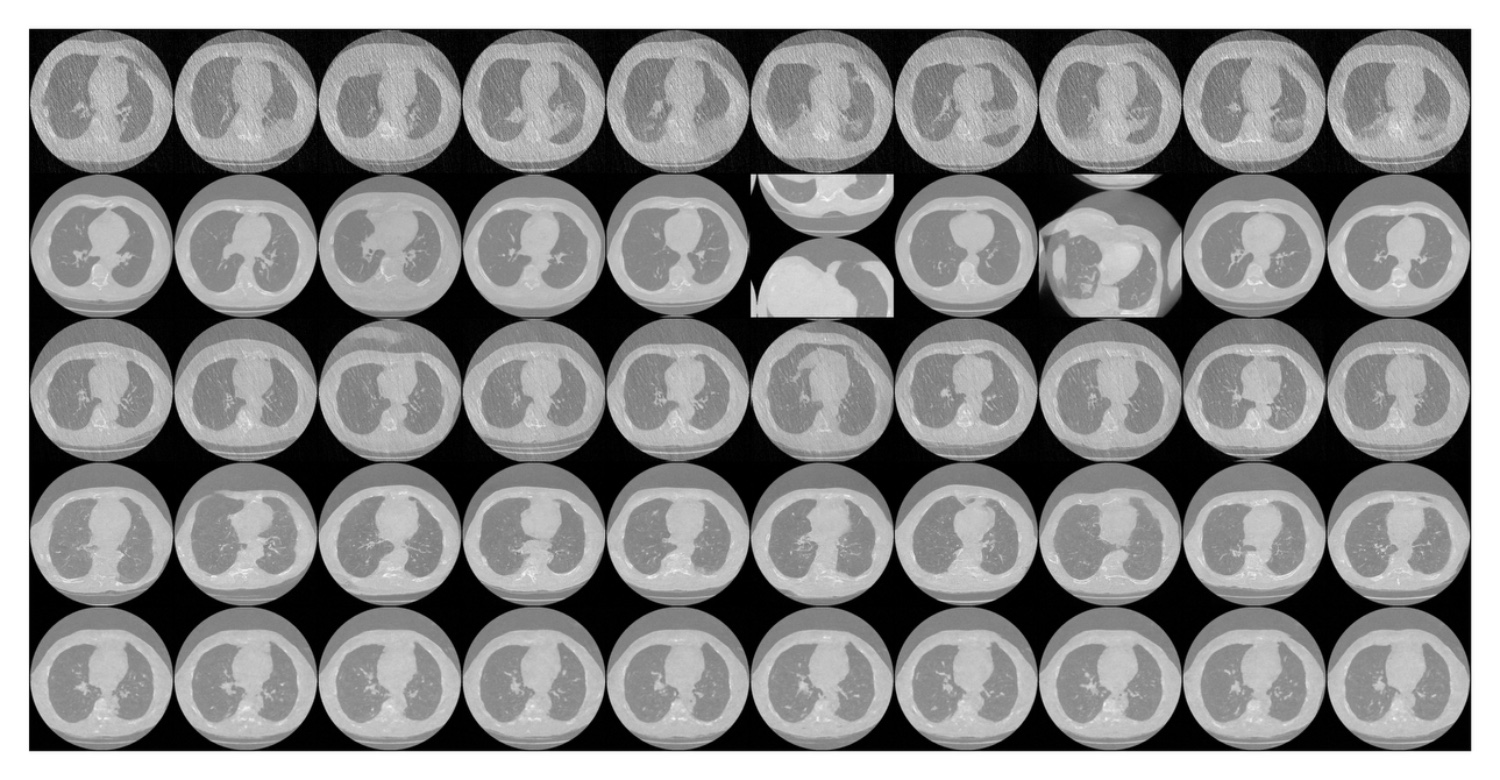}
    \caption{Posterior samples for CT imaging problem with a FNO architecture that uses 128 nodes per layer. Methods from top to bottom: SDE ALD, DPS, Proj, Conditional, UCoS.}
    \label{fig:samples 128}
\end{figure}

\begin{figure}[H]
    \centering
    \includegraphics[width=\linewidth]{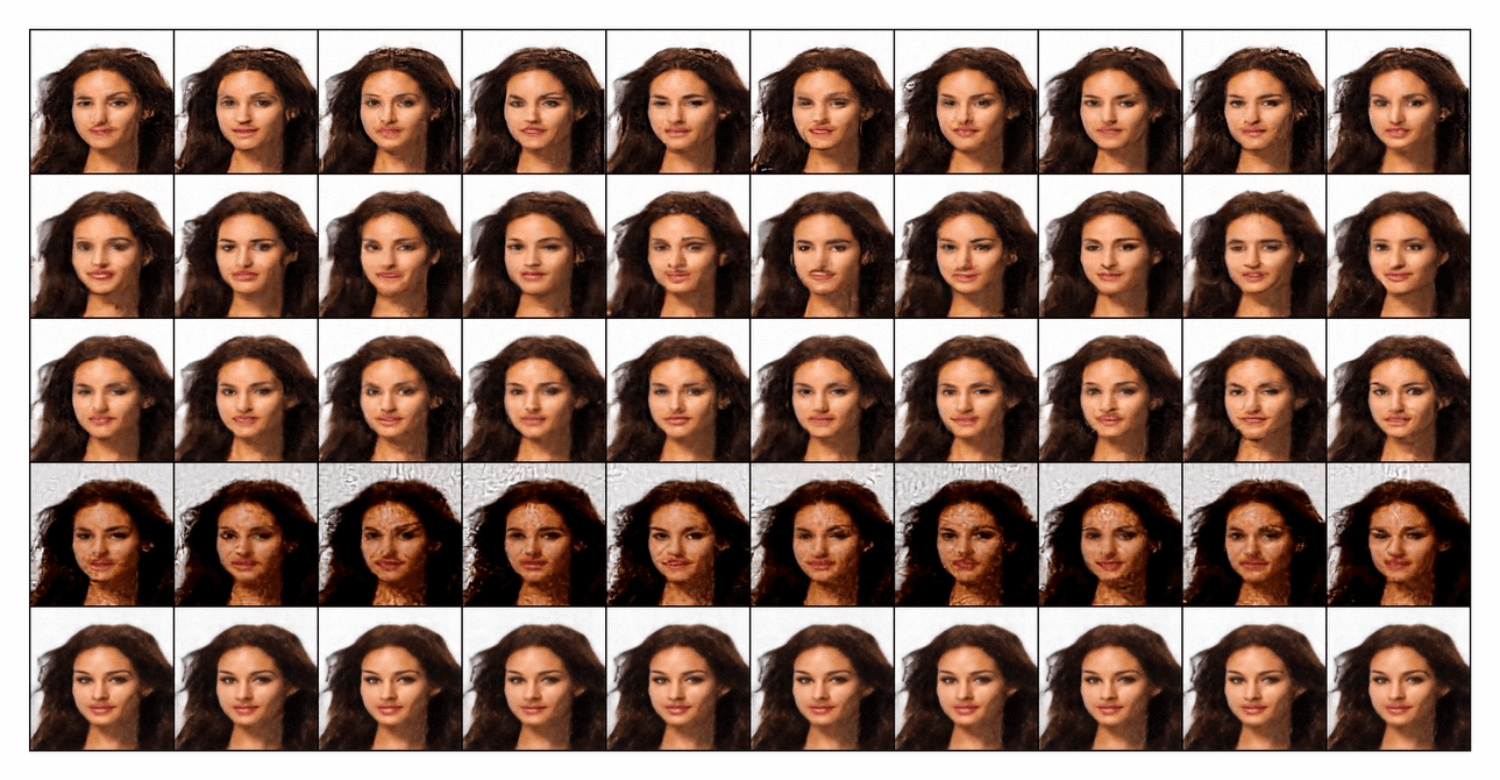}
    \caption{Posterior samples for Deblurring problem. Methods from top to bottom: SDE ALD, DPS, Proj, Conditional, UCoS.}
    \label{fig:deblurring samples 128}
\end{figure}

Below in Figure \ref{fig:comparision-graph-big}, we reprint Figure \ref{fig: graph nodes vs error} in larger size.
\begin{figure}[H]
    \centering
    % First subfigure
    \begin{subfigure}[b]{0.49\linewidth} % Adjust width as needed
        \centering
        \includegraphics[width=\linewidth]{Images/CT_imaging/graph/graphbias.png}
    \end{subfigure}
    \hfill % Add spacing between figures
    % Second subfigure
    \begin{subfigure}[b]{0.49\linewidth} % Adjust width as needed
        \centering
        \includegraphics[width=\linewidth]{Images/CT_imaging/graph/graphstd.png}
    \end{subfigure}
    \caption{Comparison of the error dependence of the conditional method and UCoS on the parametrization of the score approximation. X-axis: number of nodes per layer in the FNO, Y-axis: L2 norm of the bias (left) or std (right), averaged over the number of pixels. Dashed lines correspond to $64 \times 64$ resolution and crosses correspond to the problem with $256 \times 256$ resolution.}
    \label{fig:comparision-graph-big}
\end{figure}

\end{document}